\documentclass[11pt,a4paper]{article}

\usepackage[margin=1in]{geometry}
\usepackage{amsmath,amssymb,amsthm}
\usepackage{mathtools}
\usepackage[hidelinks]{hyperref}
\usepackage{enumitem}

\theoremstyle{plain}
\newtheorem{theorem}{Theorem}[section]
\newtheorem{proposition}[theorem]{Proposition}
\newtheorem{lemma}[theorem]{Lemma}
\newtheorem{corollary}[theorem]{Corollary}
\theoremstyle{definition}
\newtheorem{definition}[theorem]{Definition}
\newtheorem{example}[theorem]{Example}
\newtheorem{problem}[theorem]{Open problem}
\theoremstyle{remark}
\newtheorem{remark}[theorem]{Remark}

\DeclareMathOperator{\Lip}{Lip}
\DeclareMathOperator{\supp}{supp}
\DeclareMathOperator{\dist}{dist}

\DeclareMathOperator{\diag}{diag}
\newcommand{\R}{\mathbb{R}}
\newcommand{\one}{\mathbf{1}}
\newcommand{\Pcal}{\mathcal{P}}
\newcommand{\Scal}{\mathcal{S}}
\newcommand{\Bo}{\mathcal{B}_{\mathrm{orb}}}
\newcommand{\opi}[1]{\|#1\|_{\infty\to\infty}}

\title{\textbf{Stability-Constrained Approximation in Spline KANs:}\\
Exact Layer Balancing and Budget-Compatible Saturation}
\author{Aleksander Tankman\thanks{Fivestar Europe O\"U, Tallinn, Estonia. \texttt{aleksander.tankman@gmail.com}}}
\date{\today}

\begin{document}
\maketitle

\begin{abstract}
Deep spline superposition networks face a tension between approximation order and stability across depth. We study approximation under a hard layerwise Lipschitz budget, and organise it around two quantities: the \emph{factorisation stability complexity} of a given deep factorisation, and the \emph{budget-compatible approximation complexity} of a discretisation operator.

First, we solve exactly the finite-depth diagonal balancing problem for a fixed chain of nonnegative envelope matrices: the optimal uniform layer budget equals
\[
\bigl\|M_{L-1}\cdots M_0\bigr\|_{\infty\to\infty}^{1/L},
\]
attained by an explicit one-pass minimiser, for rectangular layers, with a complete treatment of degeneracies and non-attainment. The optimum can be arbitrarily larger than the Lipschitz constant of the network itself, because passing to envelopes destroys sign cancellation.

Second, we give a constructive spline discretisation theorem preserving the budget up to a controlled slack, with an explicit grid threshold. Conversely, for linear spline-valued operators that preserve the budget \emph{exactly}, we prove budget-compatible minimax lower bounds on classes constrained simultaneously in the first and third derivative norms --- a constraint pair that is forced by the problem and that rules out the usual scaling escapes.

Finally, we show that the corresponding layer errors need not cancel under composition: for every operator of the class there is a stable depth-$L$ tower realising a constant fraction of the accumulated error, so the linear-in-depth accumulation of the upper bound is not a proof artefact.
\end{abstract}

\medskip
\noindent\textbf{Keywords.} Spline approximation; saturation; positive linear operators; diagonal scaling; nonnegative matrices; Lipschitz budget; deep composition; Kolmogorov--Arnold networks.

\medskip
\noindent\textbf{MSC 2020.} Primary 41A15, 41A25, 41A36; Secondary 15A60, 15B48, 65D07, 68T07.

\tableofcontents

\bigskip

\section{Introduction}
\label{sec:intro}

\subsection{A small Lipschitz constant does not mean a stable factorisation}

Consider a deep composition $F=\Phi_{L-1}\circ\cdots\circ\Phi_0$ and the two-layer example with linear edges
\[
W_0=\begin{pmatrix}1&K\\0&1\end{pmatrix},\qquad
W_1=\begin{pmatrix}1&-K\\0&1\end{pmatrix},\qquad W_1W_0=I .
\]
Here $\Lip_\infty(F)=1$: the network is as stable as a function can be. Yet the two layers stretch and unstretch by a factor $K$, and no rescaling of the hidden coordinates can hide this. Corollary~\ref{cor:gap} makes it exact: the best uniform layer constant achievable by any diagonal recalibration is $\sqrt{1+2K}$, which tends to infinity. The reason is that the object controlling the layers is the matrix of edge Lipschitz constants, and passing to it destroys signs: $|W_1|\,|W_0|\ne|W_1W_0|$.

This example is the whole paper in miniature. It says that a budget on the \emph{layers} is a genuinely different constraint from a bound on the \emph{function}, and it raises two questions.

\medskip
\noindent\textbf{Q1 (allocation).} Given a factorisation, how evenly can the intermediate stretching be distributed among the layers?

\noindent\textbf{Q2 (approximation).} If a discretisation must preserve that layer budget, how much accuracy is it forced to lose?
\medskip

We answer Q1 exactly, and we answer Q2 from both sides: constructively from above, and by a minimax lower bound from below.

\subsection{Two complexity quantities}

For a factorisation $\boldsymbol\Phi=(\Phi_0,\dots,\Phi_{L-1})$ with envelope matrices $M_\ell$ (Section~\ref{sec:setting}) we study the \emph{factorisation stability complexity}
\[
\mathfrak B_{\mathrm{orb}}(\boldsymbol\Phi):=\inf_{d}\ \max_\ell\ \Lip_{d_\ell\to d_{\ell+1}}(\Phi_\ell),
\]
the best uniform layer constant obtainable by choosing weighted norms on the hidden spaces. For a discretisation operator $Q_h$ we study the \emph{budget-compatible approximation complexity}
\[
\mathfrak E_h(Q_h;B,K):=\sup\bigl\{\operatorname{osc}(Q_hf-f):\ \|f'\|_\infty\le B,\ \|f'''\|_\infty\le K\bigr\},
\]
the unavoidable error on a class constrained simultaneously in the norm the budget controls and in the norm that measures smoothness.

The second constraint is not decoration. With a single smoothness seminorm the lower-bound problem is degenerate: for $f_A(x)=Ax^2/2$ one has $f_A'''\equiv0$ for every $A$, so any operator that fails to reproduce quadratics exactly can be made to look arbitrarily bad by letting $A\to\infty$ --- while its Lipschitz constant blows up at the same rate, which is exactly what a layer budget forbids. Imposing $\|f'\|_\infty\le B$ removes this escape, and what remains is a genuine question about budget-preserving discretisation. This is why $\mathfrak E_h$ is defined on a double ball, and it is the point at which our lower bounds differ from classical saturation statements.

The two quantities are the two sides of one problem: a representation budget and a discretisation budget.

\subsection{Results}

\textbf{A. Exact finite-depth balancing} (Theorem~\ref{thm:balance}).
$\mathfrak B_{\mathrm{orb}}(\boldsymbol\Phi)=\|M_{L-1}\cdots M_0\|_{\infty\to\infty}^{1/L}$, with an explicit one-pass minimiser, for rectangular layers, together with the degenerate cases and an example of non-attainment. The proof rests on an exact formula for the layer Lipschitz constant in weighted $\ell^\infty$ norms (Proposition~\ref{prop:winf}), which is where the additive structure of the layers enters.

\textbf{B. Budget-compatible saturation} (Theorem~\ref{thm:budgetsat}).
For every linear spline-valued operator that contracts derivatives exactly,
\[
\mathfrak E_h(Q_h;B,K)\ \gtrsim\ \min\Bigl\{K_{\mathrm{eff}}|J|h^2,\ \frac{Bh^3}{|J|^2}\Bigr\},
\qquad K_{\mathrm{eff}}=\min\{K,8B/|J|^2\},
\]
for all $B,K>0$. The double constraint is what makes this a statement about budget-preserving discretisation rather than a classical saturation theorem: it blocks the scaling $f\mapsto Af_2$, which leaves the third derivative untouched while inflating the Lipschitz constant.

\textbf{C. Sharpness under composition} (Theorem~\ref{thm:universal}).
For every operator of the class there is a depth-$L$ tower inside the budget on which
$\|F-\widehat F\|\ge C(c)\sum_\ell\varepsilon_\ell$.
Together with the upper bound $\|F-\widehat F\|\le e^{c+\eta}\sum_\ell\varepsilon_\ell$ of Theorem~\ref{thm:Seta}, this pins the worst-case depth error between two constant multiples of $\sum_\ell\varepsilon_\ell$: linear accumulation in depth is not a proof artefact.

Theorem~\ref{thm:Seta} itself is the constructive half of the picture and is stated as such; the consequences for grid complexity, and an expressivity consequence of the residual form of the budget, are collected in Sections~\ref{sec:regimes} and \ref{sec:fold}.

\subsection{What is new and what is classical}

Our novelty is not the positive-operator moment argument, nor second-order saturation per se. The representation of a derivative-contracting operator as a positive averaging (Theorem~\ref{thm:T4a}) and the passage from vanishing first and second moments to a Dirac mass are the pointwise mechanism behind Korovkin's theorem; saturation of positive linear approximation processes at the second order is classical, and so is the exact layer formula once the additive structure is written down. Likewise, rescaling invariance of hidden units and diagonal balancing of a \emph{single} nonnegative matrix are long established.

Constrained approximation as such is of course classical --- shape-preserving, monotone, convex and Lipschitz-constrained approximation are large, old subjects, and so are lower estimates for operators with smooth range. We therefore state the two priority claims narrowly.

For Theorem~\ref{thm:balance}: unlike Perron-type scaling of a single matrix, Collatz--Wielandt quotients for pairs of operators, or multinorm certificates for constrained switching systems, we optimise here diagonal weighted $\ell^\infty$ norms along \emph{one} prescribed finite chain with \emph{both} endpoint norms fixed, and obtain the closed form
\[
\bigl\|M_{L-1}\cdots M_0\bigr\|_{\infty\to\infty}^{1/L}.
\] To the best of our knowledge this finite-horizon fixed-endpoint problem has not been solved in this form.

For Theorem~\ref{thm:budgetsat}: our claim concerns neither saturation of positive operators in general nor the Schoenberg operator in particular, both of which are classical. The new object is the minimax error over linear spline-valued derivative contractions when the adversarial function is constrained simultaneously in the Lipschitz seminorm controlled by the stability budget and in a third-derivative smoothness seminorm.

The classical tools --- the positive-averaging representation, the moment argument, second-order saturation --- are machinery for these statements, not claims of priority; see Remarks~\ref{rem:relwork} and \ref{rem:korovkin} for the comparison in detail.

\subsection{Relation to earlier work of the author}

The layer-wise Lipschitz product of a deep Kolmogorov--Arnold network was studied in \cite{Tankman}, where it is shown that any continuous function on $[0,1]^n$ representable by a finite computation tree with $N$ internal nodes and bounded compositional sparsity admits a deep KAN representation whose blocks have controlled depth and controlled block Lipschitz product, together with a domain-sensitive upper bound on the layer-wise product $\prod_\ell\max_iM_{\ell,i}$. That paper \emph{constructs} representations and \emph{bounds} the product from above.

The present paper starts where that one stops, and the two questions are complementary. Given the product, how should the stretching be distributed among the layers, and what is the smallest worst-layer constant attainable --- this is Theorem~\ref{thm:balance}, and its answer, $\|M_{L-1}\cdots M_0\|_{\infty\to\infty}^{1/L}$, is not the product of the individual layer constants but the $L$th root of the norm of the matrix product, which may be far smaller. And what does it cost to preserve such a budget when the univariate edges are discretised --- this is Theorem~\ref{thm:Seta} from above and Theorems~\ref{thm:budgetsat} and \ref{thm:universal} from below. Two further differences of technique are worth recording. We work throughout with the operator norm $\|M_\ell\|_{\infty\to\infty}$ of the envelope matrix rather than with an edgewise maximum, because Proposition~\ref{prop:winf} shows that the former \emph{is} the layer Lipschitz constant, exactly, whereas an edgewise bound does not control it once the width is taken into account. And the quantity that governs allocation turns out to be the envelope of the product, not the product of the envelopes: the gap between them can be arbitrarily large even when the network computes the identity (Corollary~\ref{cor:gap}). 

\section{Setting and conventions}
\label{sec:setting}

\subsection{Conventions fixed once}

\begin{itemize}[nosep]
\item $G$ is the number of \emph{intervals} of a uniform grid on a segment of length $2\rho$, and $h=2\rho/G$.
\item $k$ is the \emph{degree} of the piecewise polynomial.
\item $\Scal_{k,\Delta_h}$ denotes the space of \emph{continuous} piecewise polynomials of degree $\le k$ on the grid $\Delta_h$. Continuity is the only smoothness we assume; higher smoothness at the knots is never used, and $k=1$ (continuous piecewise linear) is included.
\item For a matrix $A$ we write $\opi{A}=\max_j\sum_i|A_{ji}|$.
\end{itemize}

\begin{definition}[KAN layer and network]
For $\Phi_\ell:\R^{n_\ell}\to\R^{n_{\ell+1}}$,
\[
(\Phi_\ell(x))_j=\sum_{i=1}^{n_\ell}\varphi_{\ell,j,i}(x_i),
\qquad
F=\Phi_{L-1}\circ\cdots\circ\Phi_0 .
\]
\end{definition}

\begin{definition}[Envelopes]
On a box $K=I_1\times\cdots\times I_n$ put
\[
\begin{gathered}
M_{ji}=\|\varphi'_{ji}\|_{L^\infty(I_i)},\qquad
H_{ji}=\|\varphi^{(k+1)}_{ji}\|_{L^\infty(I_i)},\\
\lambda=\opi{M},\qquad \kappa=\opi{H}.
\end{gathered}
\]
\end{definition}

We use repeatedly the identity
\begin{equation}\label{eq:rowsum}
A\ge 0\qquad\Longrightarrow\qquad \opi{A}=\max_j\sum_i A_{ji}=\|A\one\|_{\infty}.
\end{equation}

The input set is $K_0=[-R,R]^{n_0}$.

\subsection{Approximation operator}

We fix a linear operator $Q_{G,k}$ with the standard estimates
\begin{equation}\label{eq:Qest}
\|\varphi-Q\varphi\|_{\infty}\le C_{0,k}h^{k+1}\|\varphi^{(k+1)}\|_{\infty},
\qquad
\|\varphi'-(Q\varphi)'\|_{\infty}\le C_{1,k}h^{k}\|\varphi^{(k+1)}\|_{\infty} .
\end{equation}
Its anchored version is $Q^0_hf:=Q_hf-(Q_hf)(0)$; the derivative is unchanged and the $C^0$ constant at most doubles,
$\|f-Q^0_hf\|_{\infty}\le 2C_{0,k}h^{k+1}\|f^{(k+1)}\|_{\infty}$.

The class defined by \eqref{eq:Qest} is non-empty, with constants depending only on $k$; we record this because the phrase ``standard estimates'' is not by itself a proof, and because the boundary cells of a bounded interval are exactly where such statements are usually left vague.

\begin{lemma}[Polynomial halo extension]\label{lem:halo}
Let $I=[a,b]$, $r>0$ and $I^+:=[a-rh,b+rh]$. For $f\in C^{k+1}(I)$ define $E_hf:=f$ on $I$ and
\[
\begin{gathered}
E_hf(x):=\sum_{r'=0}^k\frac{f^{(r')}(a)}{r'!}(x-a)^{r'}\quad (x<a),\\
E_hf(x):=\sum_{r'=0}^k\frac{f^{(r')}(b)}{r'!}(x-b)^{r'}\quad (x>b).
\end{gathered}
\]
Then $E_h$ is linear, $E_hf\in C^{k}(I^+)\cap W^{k+1,\infty}(I^+)$,
\[
\|(E_hf)^{(k+1)}\|_{L^\infty(I^+)}\le\|f^{(k+1)}\|_{L^\infty(I)},
\qquad
E_hp=p\ \ \text{for every }p\in\Pcal_k .
\]
\end{lemma}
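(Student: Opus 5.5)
The plan is to verify the four assertions in turn: linearity, the regularity class, the bound on the $(k+1)$st derivative, and exact reproduction of $\Pcal_k$.

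\emph{Linearity.} On each of the three pieces $[a-rh,a)$, $I$ and $(b,b+rh]$, the value $E_hf(x)$ depends linearly on $f$. On the outer pieces it is a fixed linear combination of the numbers $f^{(r')}(a)$, respectively $f^{(r')}(b)$, for $0\le r'\le k$. So $E_h$ is linear.

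\emph{Regularity.} Inside each piece, $E_hf$ is $C^{k+1}$: it is $f$ on $I$ and a polynomial outside. The only issue is at the junctions $a$ and $b$. At $x=a$, for $0\le m\le k$, the left limit of the $m$th derivative of the Taylor polynomial is
\[
\sum_{r'=m}^{k} \frac{f^{(r')}(a)}{(r'-m)!}\,(x-a)^{r'-m}\Big|_{x=a}=f^{(m)}(a),
\]
which equals the right derivative $f^{(m)}(a)$. The same holds at $b$, so $E_hf\in C^k(I^+)$.

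\emph{Derivative bound.} The $k$th derivative of $E_hf$ is continuous and piecewise $C^1$ on the three closed pieces. Hence it is Lipschitz, and $(E_hf)^{(k+1)}$ exists a.e. with:
\begin{itemize}[nosep]
\item $(E_hf)^{(k+1)}=f^{(k+1)}$ on $I$;
\item $(E_hf)^{(k+1)}=0$ on the outer pieces, since a polynomial of degree $\le k$ has vanishing $(k+1)$st derivative.
\end{itemize}
To justify that this a.e. derivative is the weak derivative, I would note that an absolutely continuous function which is piecewise $C^1$ has weak derivative equal to its piecewise derivative. This gives $E_hf\in W^{k+1,\infty}(I^+)$ and
\[
\|(E_hf)^{(k+1)}\|_{L^\infty(I^+)}=\max\{\|f^{(k+1)}\|_{L^\infty(I)},0\}=\|f^{(k+1)}\|_{L^\infty(I)}.
\]
So the stated inequality in fact holds with equality.

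\emph{Polynomial reproduction.} Let $p\in\Pcal_k$. Its Taylor polynomial of degree $k$ at $a$ is $p$ itself, and likewise at $b$, so $E_hp=p$ on all of $I^+$.

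There is no real obstacle. The one point needing care is the matching of derivatives at the junctions. At order $k+1$ the junctions generally do not match, and this is exactly why the result places $E_hf$ in $W^{k+1,\infty}$ rather than in $C^{k+1}$.
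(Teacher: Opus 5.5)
Your proposal is correct and follows the same route as the paper. The Taylor data match $f$ up to order $k$ at $a$ and $b$, giving $C^k$; the $(k+1)$st derivative is $f^{(k+1)}$ on $I$ and $0$ outside; and a polynomial of degree $\le k$ is its own Taylor polynomial. You also spell out why the piecewise a.e.\ derivative of the Lipschitz $k$th derivative is the weak derivative, and that the inequality is in fact an equality — details the paper leaves implicit.
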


\begin{proof}
The Taylor polynomials match $f$ and its first $k$ derivatives at $a$ and $b$, so $E_hf\in C^k$; its $(k+1)$st derivative equals $f^{(k+1)}$ on $I$ and $0$ outside, which gives both displayed statements, the second because the Taylor polynomial of degree $k$ of a polynomial of degree $\le k$ is that polynomial.
\end{proof}

\begin{lemma}[Local $L^\infty$-bounded functionals]\label{lem:functionals}
Fix nodes $0\le s_0<\dots<s_k\le1$. For each $i$ let $J_i$ be the union of the $k+1$ cells starting at the left endpoint of $\supp N_{i,k}$, let $x_{i,0},\dots,x_{i,k}$ be the images of the $s_r$ under the affine map $[0,1]\to J_i$, and let $p_i[g]\in\Pcal_k$ be the Lagrange interpolant of $g$ at these nodes. Define
\[
\lambda_i(g):=c_i\bigl(p_i[g]\bigr),
\]
where $q=\sum_jc_j(q)N_{j,k}$ is the B-spline expansion of a polynomial $q\in\Pcal_k$, which exists and is unique by Marsden's identity \cite{Marsden}. Then $\lambda_i$ is linear, depends only on $g|_{J_i}$, and
\[
|\lambda_i(g)|\le\Lambda_k\kappa_k\,\|g\|_{L^\infty(J_i)},
\qquad
\tilde Qp=p\quad\text{for every }p\in\Pcal_k ,
\]
where $\Lambda_k$ is the Lebesgue constant of the reference nodes and $\kappa_k$ the norm of the B-spline coefficient functional $c_i$ on $\Pcal_k$ with respect to $\|\cdot\|_{L^\infty(J_i)}$; both are independent of $i$ and of $h$.
\end{lemma}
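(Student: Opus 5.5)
The claim has three parts: linearity and locality of $\lambda_i$, the uniform bound, and reproduction of $\Pcal_k$ by the quasi-interpolant $\tilde Q g:=\sum_i\lambda_i(g)N_{i,k}$. I would prove them in that order, by factoring $\lambda_i$ as the composition of two maps:
\[
g\;\longmapsto\; p_i[g]\in\Pcal_k\;\longmapsto\; c_i(p_i[g]) .
\]

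\textbf{Linearity and locality.} Lagrange interpolation at fixed nodes is linear. The B-spline coefficient map $q\mapsto c_i(q)$ on $\Pcal_k$ is also linear, because Marsden's identity makes the expansion unique. So $\lambda_i$ is linear. Moreover $p_i[g]$ depends only on the values $g(x_{i,r})$, and these nodes lie in $J_i$, so $\lambda_i(g)$ depends only on $g|_{J_i}$.

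\textbf{The bound.} I would split it into two factors.
\begin{enumerate}[nosep]
\item The Lebesgue-constant inequality gives $\|p_i[g]\|_{L^\infty(J_i)}\le\Lambda_k\|g\|_{L^\infty(J_i)}$. Here $\Lambda_k$ is computed on $[0,1]$ for the reference nodes $s_r$, and it is invariant under the affine map onto $J_i$.
\item Put $\kappa_k:=\sup\{|c_i(q)|:\ q\in\Pcal_k,\ \|q\|_{L^\infty(J_i)}\le1\}$. This is finite, since $c_i$ is a linear functional on the finite-dimensional space $\Pcal_k$ and $\|\cdot\|_{L^\infty(J_i)}$ is a norm there.
\end{enumerate}
Combining the two factors gives $|\lambda_i(g)|\le\Lambda_k\kappa_k\|g\|_{L^\infty(J_i)}$.

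\textbf{Independence of $\kappa_k$ from $i$ and $h$.} This is the main point. On a uniform grid (extended by the halo of Lemma~\ref{lem:halo} so that every $J_i$ consists of $k+1$ full cells), I would pull back by the affine map $\tau_i:[0,k+1]\to J_i$, $t\mapsto t_i+ht$, where $t_i$ is the left endpoint of $\supp N_{i,k}$. Under this map the knots of $N_{i,k}$ go to the integers $0,\dots,k+1$, and $N_{j,k}\circ\tau_i$ becomes the cardinal B-spline shifted by $j-i$. Composition with $\tau_i$ maps $\Pcal_k$ to itself and preserves sup norms. The polynomial's expansion pulls back to the cardinal expansion, so $c_i(q)=c_0^{\mathrm{card}}(q\circ\tau_i)$. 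Hence $\kappa_k$ equals the corresponding norm for the cardinal B-spline on $[0,k+1]$, which depends only on $k$. I expect this bookkeeping at the boundary cells to be the only delicate step; the halo extension is exactly what keeps the boundary B-splines of the same cardinal type.

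\textbf{Reproduction.} Take $p\in\Pcal_k$. Lagrange interpolation at $k+1$ distinct nodes reproduces polynomials of degree at most $k$, so $p_i[p]=p$ for every $i$. Then $\lambda_i(p)=c_i(p)$, and therefore
\[
\tilde Qp=\sum_i c_i(p)N_{i,k}=p
\]
by the definition of the coefficients $c_i$ via Marsden's identity.
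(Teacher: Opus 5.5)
Your proposal is correct and follows the paper's proof. You factor $\lambda_i$ through Lagrange interpolation and the coefficient functional, bound the two factors by the affine-invariant Lebesgue constant $\Lambda_k$ and by finite-dimensionality, and get reproduction from $p_i[p]=p$. Your cardinal pullback $c_i(q)=c_0^{\mathrm{card}}(q\circ\tau_i)$ simply spells out the paper's one-line remark that $c_i$ and the norm are invariant under an affine change of variable carrying the knots along. One small correction: what makes every $N_{i,k}$ a cardinal translate, including near the boundary, is the uniform extended grid $\tilde\Delta_h$ of Lemma~\ref{lem:existence}, not the function extension $E_h$ of Lemma~\ref{lem:halo}.
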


\begin{proof}
Locality and linearity are clear. For the bound, interpolation at the fixed reference nodes gives $\|p_i[g]\|_{L^\infty(J_i)}\le\Lambda_k\|g\|_{L^\infty(J_i)}$ with $\Lambda_k$ the Lebesgue constant, which is invariant under the affine rescaling $[0,1]\to J_i$. Next, $q\mapsto c_i(q)$ is a linear functional on the finite-dimensional space $\Pcal_k$, hence bounded with respect to $\|\cdot\|_{L^\infty(J_i)}$, say by $\kappa_k$; both $c_i(q)$ and that norm are invariant under an affine change of variable carrying the knots along, so $\kappa_k$ may be computed once on a reference configuration with $h=1$ and is independent of $h$. Composing the two bounds gives the claim. Finally, if $p\in\Pcal_k$ then $p_i[p]=p$ for every $i$, so $\tilde Qp=\sum_ic_i(p)N_{i,k}=p$.
\end{proof}

We emphasise that it is the $L^\infty$ bound that is needed here; this is why we do not use the de Boor--Fix functionals in their original form \cite{deBoorFix}, which are expressed through derivatives and are bounded on $C^k$ rather than on $L^\infty$. Constructions of local, $L^\infty$-bounded spline approximation operators are of course classical, see \cite[Ch.~6]{Schumaker} and \cite[Ch.~XII]{deBoorPGS}; we have given one explicitly so that Section~\ref{sec:setting} depends on no external estimate.

\begin{lemma}[Existence]\label{lem:existence}
For every $k$ there is a family of linear operators $Q_{G,k}:C^{k+1}(I)\to\Scal_{k,\Delta_h}$, $I$ a bounded interval, satisfying \eqref{eq:Qest} with $C_{0,k},C_{1,k}$ depending only on $k$.
\end{lemma}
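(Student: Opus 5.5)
The plan is to take $Q_{G,k}f:=\tilde Q(E_hf)\big|_I$, where $E_h$ is the halo extension of Lemma~\ref{lem:halo} with $r=k+1$ and $\tilde Q g=\sum_i\lambda_i(g)N_{i,k}$ is the quasi-interpolant built from the functionals of Lemma~\ref{lem:functionals}. The sum runs over the B-splines on the extended uniform grid whose supports meet $I$. The halo is what keeps every $J_i$ inside $I^+$, so the boundary cells receive no special treatment. Linearity follows because both $E_h$ and $\tilde Q$ are linear. The range lies in $\Scal_{k,\Delta_h}$ because we may use B-splines with simple interior knots, which are continuous piecewise polynomials of degree $k$ (even $C^{k-1}$). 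For $k=1$ these are the hat functions. Polynomial reproduction of the composite operator on $\Pcal_k$ follows from $E_hp=p$ together with $\tilde Qp=p$.

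The error estimate will be local. Fix a cell $\tau\subset I$. Only the $k+1$ B-splines supported on $\tau$ contribute there, and their functionals see only $g:=E_hf$ on the union $\hat\tau$ of the sets $J_i$. The set $\hat\tau$ is contained in $I^+$ and has length at most $c_kh$. Let $p$ be the Taylor polynomial of $g$ of degree $k$ at a point of $\tau$. Since $g\in C^k\cap W^{k+1,\infty}(I^+)$, the integral form of Taylor's remainder gives
\[
\|g-p\|_{L^\infty(\hat\tau)}\le \frac{(c_kh)^{k+1}}{(k+1)!}\|g^{(k+1)}\|_{L^\infty(I^+)}\le \frac{(c_kh)^{k+1}}{(k+1)!}\|f^{(k+1)}\|_{L^\infty(I)},
\]
where the second inequality uses Lemma~\ref{lem:halo}. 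On $\tau$, reproduction gives $f-Qf=(g-p)-\tilde Q(g-p)$. The B-splines are nonnegative and sum to one, and Lemma~\ref{lem:functionals} bounds each functional. Together these give $\|\tilde Q(g-p)\|_{L^\infty(\tau)}\le\Lambda_k\kappa_k\|g-p\|_{L^\infty(\hat\tau)}$. This yields the first estimate in \eqref{eq:Qest} with $C_{0,k}=(1+\Lambda_k\kappa_k)c_k^{k+1}/(k+1)!$.

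For the derivative estimate we use the same decomposition, $f'-(Qf)'=(g-p)'-(\tilde Q(g-p))'$ on $\tau$. The term $(g-p)'$ is bounded by $(c_kh)^k\|f^{(k+1)}\|_\infty/k!$, again by Taylor. For the other term we need a Markov-type inverse inequality on a single cell, $\|s'\|_{L^\infty(\tau)}\le \mu_kh^{-1}\|s\|_{L^\infty(\tau)}$ for every polynomial $s$ of degree $\le k$. This holds by equivalence of norms on $\Pcal_k$ on the reference cell, followed by scaling, and $\mu_k=2k^2$ works by Markov's inequality. Applying it to $s=\tilde Q(g-p)|_\tau$ gives a bound $\mu_k\Lambda_k\kappa_k c_k^{k+1}h^k\|f^{(k+1)}\|_\infty/(k+1)!$. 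Since every constant depends only on $k$, this completes the argument.

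The step I expect to need the most care is the bookkeeping near the boundary of $I$. One has to check that for each cell $\tau\subset I$ the relevant $J_i$ lie inside $I^+$, and that $E_hf$ is regular enough there for the Taylor remainder. Both facts come from the choice $r=k+1$ and from Lemma~\ref{lem:halo}: $E_hf\in C^k$ with a bounded $(k+1)$st derivative is exactly what the integral remainder requires. I would settle this first by fixing the indexing of the extended knot sequence explicitly.
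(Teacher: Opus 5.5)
Your proposal is correct and is essentially the paper's proof: the same composite $(\tilde Q E_hf)|_I$, the same cellwise localisation via a degree-$k$ Taylor polynomial (using $\tilde Qp=p$ and the partition of unity to get the factor $1+\Lambda_k\kappa_k$), and Markov's inequality on a single cell for the derivative estimate. The only differences are in the constants. The paper notes that the region seen by a cell is $[t_{i-k},t_{i+k+1}]$, so a halo of width $k$ already suffices, and it expands at the centre of that region to get $R_k=(2k+1)/2$; your halo $r=k+1$ and expansion at a point of $\tau$ merely give slightly larger $C_{0,k},C_{1,k}$.
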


\begin{proof}
Let $\tilde\Delta_h$ be the uniform grid of spacing $h$ extending $\Delta_h$, and let $\tilde Qg=\sum_i\lambda_i(g)N_{i,k}$ with the functionals $\lambda_i$ of Lemma~\ref{lem:functionals}. Only those $i$ with $N_{i,k}\not\equiv0$ on $I$ matter, and the corresponding $\lambda_i$ see only the finite halo $I^+=[a-r_kh,b+r_kh]$. Set $Q_{G,k}f:=(\tilde QE_hf)|_I$ with $E_h$ from Lemma~\ref{lem:halo}; this is linear and spline-valued, and no extension theorem is invoked.

Fix a cell $I_i=[t_i,t_{i+1}]\subset I$. The B-splines that do not vanish on $I_i$ are $N_{j,k}$ for $j=i-k,\dots,i$, and $\lambda_j$ sees only $J_j=[t_j,t_{j+k+1}]$, so the relevant region is
\[
U_i=\bigcup_{j=i-k}^{i}J_j=[t_{i-k},t_{i+k+1}],
\qquad |U_i|=(2k+1)h ,
\]
which also fixes the halo width to $r_k=k$. Let $p\in\Pcal_k$ be the Taylor polynomial of $E_hf$ of degree $k$ at the centre of $U_i$; by Taylor's theorem applied to $E_hf\in W^{k+1,\infty}(U_i)$ and Lemma~\ref{lem:halo},
\[
\begin{aligned}
\|E_hf-p\|_{L^\infty(U_i)}&\le\frac{(R_kh)^{k+1}}{(k+1)!}\|f^{(k+1)}\|_{L^\infty(I)},\\
\|(E_hf-p)'\|_{L^\infty(U_i)}&\le\frac{(R_kh)^{k}}{k!}\|f^{(k+1)}\|_{L^\infty(I)},
\end{aligned}
\qquad R_k:=\frac{2k+1}{2},
\]
Since $\tilde Qp=p$ and $\sum_jN_{j,k}\equiv1$,
\[
\begin{aligned}
\|Q_{G,k}f-f\|_{L^\infty(I_i)}&=\|\tilde Q(E_hf-p)-(E_hf-p)\|_{L^\infty(I_i)}\\
&\le(1+\Lambda_k\kappa_k)\|E_hf-p\|_{L^\infty(U_i)}\ \le\ C_{0,k}h^{k+1}\|f^{(k+1)}\|_{L^\infty(I)} .
\end{aligned}
\]
For the derivative, $\tilde Q(E_hf-p)$ restricted to $I_i$ is a polynomial of degree $\le k$ on an interval of length $h$, so Markov's inequality gives $\|(\tilde Q(E_hf-p))'\|_{L^\infty(I_i)}\le2k^2h^{-1}\|\tilde Q(E_hf-p)\|_{L^\infty(I_i)}$; combining with the two displays above,
\[
\|(Q_{G,k}f-f)'\|_{L^\infty(I_i)}\le C_{1,k}h^{k}\|f^{(k+1)}\|_{L^\infty(I)} .
\]
All constants depend only on $k$ and no cell is distinguished, so the estimates hold uniformly up to the ends of $I$.
\end{proof}

Two remarks on the route chosen. First, the halo extension is local, elementary and polynomial-preserving; it is exactly the property $E_hp=p$ for $p\in\Pcal_k$ that the quasi-interpolant needs, and it avoids any appeal to a ``usual modification near the ends''. Second, a global Sobolev extension theorem is \emph{not} available in the form one might be tempted to use: no linear $E$ can satisfy $\|(Ef)^{(j)}\|_{L^\infty(\R)}\le c\|f^{(j)}\|_{L^\infty(I)}$ for every $j\le k+1$ simultaneously, since $f(x)=x$ has $\|f''\|_\infty=0$, forcing $Ef$ to be affine and non-constant, hence unbounded, in contradiction with the case $j=0$. Classical extension theorems \cite{Stein} control the full $W^{k+1,\infty}$ norm, not each seminorm by the corresponding seminorm. Everything else in Sections~\ref{sec:Seta} and \ref{sec:regimes} is conditional only on \eqref{eq:Qest}.

\subsection{Centering}

Imposing $\varphi_{\ell,ji}(0)=0$ edgewise would force $F(0)=0$. This is avoided without introducing biases.

\begin{lemma}[Centering]\label{lem:centering}
Let $z_0=0$ and $z_{\ell+1}=\Phi_\ell(z_\ell)$ be the reference trajectory of the exact network. Put
\[
\psi_{\ell,ji}(u)=\varphi_{\ell,ji}(z_{\ell,i}+u)-\varphi_{\ell,ji}(z_{\ell,i}),
\qquad
(\Psi_\ell(u))_j=\sum_i\psi_{\ell,ji}(u_i).
\]
Then $\psi_{\ell,ji}(0)=0$, $\Psi_\ell(0)=0$,
\[
\Phi_\ell(z_\ell+u)-z_{\ell+1}=\Psi_\ell(u),
\qquad
F(x)-z_L=\Psi_{L-1}\circ\cdots\circ\Psi_0(x),
\]
and $\psi'_{\ell,ji}(u)=\varphi'_{\ell,ji}(z_{\ell,i}+u)$, so all Lipschitz and curvature envelopes are preserved after translating the corresponding intervals.
\end{lemma}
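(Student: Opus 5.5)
The plan is to verify each claim of Lemma~\ref{lem:centering} directly from the definitions, using only the additive structure of a KAN layer. I would proceed in four steps.

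\emph{Step 1 (vanishing at the origin).} Substituting $u=0$ into the definition of $\psi_{\ell,ji}$ gives $\psi_{\ell,ji}(0)=\varphi_{\ell,ji}(z_{\ell,i})-\varphi_{\ell,ji}(z_{\ell,i})=0$. Summing over $i$ then yields $\Psi_\ell(0)=0$.

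\emph{Step 2 (one-layer identity).} Fix $j$ and expand both sides using the layer definition:
\[
(\Phi_\ell(z_\ell+u))_j-(z_{\ell+1})_j=\sum_i\varphi_{\ell,ji}(z_{\ell,i}+u_i)-\sum_i\varphi_{\ell,ji}(z_{\ell,i}).
\]
Here I use $z_{\ell+1}=\Phi_\ell(z_\ell)$ to write $(z_{\ell+1})_j$ as the second sum. Regrouping the two sums termwise gives exactly $\sum_i\psi_{\ell,ji}(u_i)=(\Psi_\ell(u))_j$. This is the only place where additivity across inputs matters: each edge is shifted by its own input's reference value.

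\emph{Step 3 (the composition identity).} I would argue by induction on depth that
\[
\Phi_{\ell-1}\circ\cdots\circ\Phi_0(x)-z_\ell=\Psi_{\ell-1}\circ\cdots\circ\Psi_0(x).
\]
For the base case $\ell=0$, both sides equal $x$, since $z_0=0$ and the empty composition is the identity. For the inductive step, set $u_\ell$ equal to the right-hand side. The hypothesis then says the partial network output equals $z_\ell+u_\ell$, and Step~2 gives $\Phi_\ell(z_\ell+u_\ell)=z_{\ell+1}+\Psi_\ell(u_\ell)$, which is the claim at depth $\ell+1$. Taking $\ell=L$ yields $F(x)-z_L=\Psi_{L-1}\circ\cdots\circ\Psi_0(x)$.

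\emph{Step 4 (envelopes).} Differentiating by the chain rule gives $\psi'_{\ell,ji}(u)=\varphi'_{\ell,ji}(z_{\ell,i}+u)$, and likewise for all higher derivatives. Hence the sup norm of $\psi^{(m)}_{\ell,ji}$ over $I_i-z_{\ell,i}$ equals the sup norm of $\varphi^{(m)}_{\ell,ji}$ over $I_i$. So $M$ and $H$ are unchanged once the box is translated by $-z_\ell$. In particular, if $\Phi_\ell$ is considered on a box containing the image of the previous layer, then $\Psi_\ell$ is considered on the translated box containing the image of $\Psi_{\ell-1}$.

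None of these steps is a real obstacle. The only point needing care is the bookkeeping in Step~4: one must check that the translated domains are consistent across layers. This holds because the trajectory of $\Psi$ is exactly the trajectory of $\Phi$ shifted by $z_\ell$, as established in Step~3.
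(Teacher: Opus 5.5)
Your proposal is correct and takes essentially the paper's approach. The paper calls the lemma immediate and singles out only that $\Psi_\ell$ is again a KAN layer, because the translation acts coordinatewise on each univariate argument; that is exactly the point you flag in Step~2, and your induction in Step~3 and chain-rule remark in Step~4 simply write out what the paper leaves implicit.
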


\begin{proof}
Immediate; the only point to check is that $\Psi_\ell$ is again a KAN layer, which holds because the translation acts coordinatewise on the univariate arguments.
\end{proof}

\begin{remark}\label{rem:anchor}
(i) Anchoring pins the reference trajectory of \emph{both} networks: $\widehat\psi(0)=0$ gives $\widehat F(0)=F(0)=z_L$, so the discretisation is exact along the anchor and the layer errors are not uniform over the domain. This is a normalisation of the operator, not a loss of generality.
(ii) $z_\ell$ is an object of the exact network, so the domain $z_\ell+[-\rho,\rho]^{n_\ell}$ depends on $\Phi$; the quantifier order is: first $c,\eta,R$, then $\rho$, then $z_\ell$, then the hypotheses on the translated box.
(iii) If $Q_h$ does not preserve the value at $0$, use $Q^0_h$. Anchoring is used in Sections~\ref{sec:winf}--\ref{sec:Seta}; it is deliberately \emph{not} imposed in Section~\ref{sec:sharp}, where one-sidedness of the spline error would be destroyed by subtracting $(Q_hf)(0)$, and the trajectories are controlled directly instead.
\end{remark}

\subsection{Gauge}

\begin{definition}[Gauge]\label{def:gauge}
A \emph{gauge} is a choice of weighted norms on the hidden spaces: for $d_\ell\in(0,\infty)^{n_\ell}$ with $d_0=d_L=\one$ and $D_\ell=\diag(d_\ell)$,
\[
\|u\|_{d_\ell}:=\max_i\frac{|u_i|}{d_{\ell,i}} .
\]
The network, its coordinates, its boxes and its grids are \emph{not} changed.
\end{definition}

\begin{remark}[Realisation as a reparametrisation]\label{rem:reparam}
The same algebra can be realised as an actual rescaling of the network,
\[
\widetilde\varphi_{\ell,j,i}(t)=\frac1{d_{\ell+1,j}}\varphi_{\ell,j,i}(d_{\ell,i}t),
\]
which preserves the univariate structure, the normalisation $\varphi(0)=0$ and the function $F$. However, the identity $\widetilde M_\ell=D_{\ell+1}^{-1}M_\ell D_\ell$ then holds only if the coordinate boxes are transformed accordingly, $I'_{\ell,i}=d_{\ell,i}^{-1}I_{\ell,i}$: the supremum of $\widetilde\varphi'$ over the \emph{same fixed} interval need not equal $d_{\ell,i}d_{\ell+1,j}^{-1}M_{\ell,ji}$. Theorem~\ref{thm:balance} is stated purely in terms of Definition~\ref{def:gauge}, where no such issue arises; the reparametrisation reading is used nowhere in the proofs.
\end{remark}

\section{The exact layer Lipschitz constant}
\label{sec:winf}

\begin{proposition}[\texorpdfstring{$W_\infty$}{W-infinity}]\label{prop:winf}
Let $K=I_1\times\cdots\times I_n$ be a box with nondegenerate segments and $\varphi_{ji}\in W^{1,\infty}(I_i)$. Then, with the weighted norms of Definition~\ref{def:gauge},
\[
\Lip_{d\to d'}(\Phi;K)=\max_j\frac{1}{d'_j}\sum_i M_{ji}d_i=\opi{D'^{-1}MD}.
\]
In particular $\Lip_{\infty\to\infty}(\Phi;K)=\opi{M}$.
\end{proposition}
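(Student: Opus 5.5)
The proof splits into an upper bound and a matching lower bound for the Lipschitz constant of $\Phi$ from $(\R^n,\|\cdot\|_d)$ to $(\R^{n'},\|\cdot\|_{d'})$ on the box $K$. The equality $\max_j\frac1{d'_j}\sum_iM_{ji}d_i=\opi{D'^{-1}MD}$ is immediate from \eqref{eq:rowsum}, since $D'^{-1}MD\ge0$ has entries $M_{ji}d_i/d'_j$. The unweighted statement is the special case $d=d'=\one$.

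\emph{Upper bound.} Fix $x,y\in K$. Each $\varphi_{ji}\in W^{1,\infty}(I_i)$ is Lipschitz on the segment $I_i$ with constant $M_{ji}$, so $|\varphi_{ji}(x_i)-\varphi_{ji}(y_i)|\le M_{ji}|x_i-y_i|\le M_{ji}d_i\|x-y\|_d$. Summing over $i$ gives
\[
\frac{|\Phi(x)_j-\Phi(y)_j|}{d'_j}\le\frac{1}{d'_j}\sum_iM_{ji}d_i\,\|x-y\|_d .
\]
Taking the maximum over $j$ yields the bound.

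\emph{Lower bound.} This is the main step, and it is where the additive coordinatewise structure is used. Fix the maximising row $j$ and $\varepsilon>0$. For each $i$ with $M_{ji}>0$, the essential supremum of $|\varphi_{ji}'|$ is approached at Lebesgue points. So I would choose a point $t_i\in I_i$ and a small $\delta>0$ with $t_i,t_i+\sigma_id_i\delta\in I_i$ for a suitable direction. The requirement is that the difference quotient satisfies
\[
\frac{\varphi_{ji}(t_i+\sigma_i d_i\delta)-\varphi_{ji}(t_i)}{\sigma_i d_i\delta}\,\sigma_i\ \ge\ M_{ji}-\varepsilon ,
\]
for a sign $\sigma_i\in\{\pm1\}$ chosen so that all increments are positive. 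Such a $t_i$ exists because averages of $\varphi'_{ji}$ over small intervals near a density point of $\{|\varphi'_{ji}|>M_{ji}-\varepsilon\}$ (intersected with a set where $\varphi'$ has constant sign) are close to a value of modulus at least $M_{ji}-\varepsilon$. The sign can be flipped by reversing the direction of the step. Nondegeneracy of the segments guarantees that a one-sided step stays inside $I_i$, and a common $\delta$ can be taken as the minimum over the finitely many $i$.

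Now set $x=(t_i)$ and $y=(t_i+\sigma_id_i\delta)$, using arbitrary $t_i$ and no step when $M_{ji}=0$. Then $\|x-y\|_d\le\delta$, and
\[
\Phi(y)_j-\Phi(x)_j\ge\delta\sum_i(M_{ji}-\varepsilon)d_i .
\]
Dividing by $d'_j\delta$ and letting $\varepsilon\to0$ gives the lower bound.

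The only delicate point is this Lebesgue-point/sign selection. It ensures that all $n$ coordinate increments can be made simultaneously of the same sign and nearly extremal, even though $\varphi'_{ji}$ is only an $L^\infty$ function. The independence of coordinates in a box is exactly what lets us choose them separately.
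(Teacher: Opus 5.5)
Your proof is correct and follows the paper's argument essentially step for step. The upper bound is the same coordinatewise Lipschitz estimate. The lower bound uses the same construction: in the maximising row, pick Lebesgue points $t_i$ of $\varphi'_{ji}$ in $\{|\varphi'_{ji}|>M_{ji}-\varepsilon\}$, take $\sigma_i=\operatorname{sign}\varphi'_{ji}(t_i)$, and use one common step $\delta$, so that all increments have the same sign and add up. The only thing to tighten is to choose each $t_i$ in the interior of $I_i$, as the paper does; this is possible because almost every point is a Lebesgue point, and it makes the step $\sigma_i d_i\delta$ admissible in either direction, which your phrase about flipping the sign leaves vague.
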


\begin{proof}
\emph{Upper bound.} For each $j$,
\[
\begin{aligned}
|(\Phi x)_j-(\Phi y)_j|&\le\sum_i|\varphi_{ji}(x_i)-\varphi_{ji}(y_i)|\le\sum_iM_{ji}|x_i-y_i|\\
&\le\Bigl(\sum_iM_{ji}d_i\Bigr)\|x-y\|_{d},
\end{aligned}
\]
and division by $d'_j$ gives the bound.

\emph{Lower bound.} Fix $j$ attaining the maximum and put $I_+=\{i:M_{ji}>0\}$. If $I_+=\varnothing$ the row sum is $0$ and there is nothing to prove, so assume $I_+\ne\varnothing$ and fix
\[
0<\varepsilon<\tfrac13\min_{i\in I_+}M_{ji}.
\]
For $i\in I_+$ the set $\{t\in\operatorname{int}I_i:\ |\varphi'_{ji}(t)|>M_{ji}-\varepsilon\}$ has positive measure; pick in it a Lebesgue point $t_i$ of $\varphi'_{ji}$ and set $\sigma_i=\operatorname{sign}\varphi'_{ji}(t_i)$. For $i\notin I_+$ put $t_i$ to be any interior point and $\sigma_i=0$. By the Lebesgue point property, for $i\in I_+$,
\[
\varphi_{ji}(t_i+\sigma_i\delta d_i)-\varphi_{ji}(t_i)
=\int_{t_i}^{t_i+\sigma_i\delta d_i}\varphi'_{ji}
=\sigma_i\delta d_i\bigl(\varphi'_{ji}(t_i)+o(1)\bigr),\qquad\delta\to0^+,
\]
so the increment is at least $\delta d_i(M_{ji}-2\varepsilon)>0$ for all small $\delta$; the strict positivity uses $\varepsilon<M_{ji}/3$. Since $I_+$ is finite, one may choose a single $\delta>0$ that works for all $i\in I_+$ and is small enough that $t_i+\sigma_i\delta d_i\in I_i$ for every $i$. Put $x=(t_i)_i$ and $y=(t_i+\sigma_i\delta d_i)_i$. Then $\|x-y\|_d=\delta$ (the maximum being attained on $I_+$), the summands with $i\notin I_+$ vanish, and all the remaining ones have the same sign, hence add up:
\[
\frac{|(\Phi y)_j-(\Phi x)_j|}{d'_j}\ \ge\ \frac{\delta}{d'_j}\Bigl(\sum_{i\in I_+}M_{ji}d_i-2\varepsilon\sum_{i\in I_+}d_i\Bigr).
\]
Since $\sum_{i\in I_+}M_{ji}d_i=\sum_iM_{ji}d_i$, letting $\varepsilon\downarrow0$ gives the claim.
\end{proof}

The key point is the independence of coordinates, i.e.\ the box structure.

\begin{remark}[The box is essential]\label{rem:box}
On a reachable set that is not a box the equality fails. Take $n=2$, $K=\{x_1=x_2\}\cap[-1,1]^2$, $\varphi_{11}(t)=t$, $\varphi_{12}(t)=-t$. Then $\opi{M}=2$ while $\Phi|_K\equiv0$. Exactness in $\ell^\infty$ is bought at the price of the most conservative domain.
\end{remark}

\begin{corollary}[Exact weighted row balancing of a single layer]\label{cor:N}
Given $d_\ell$, define $d_{\ell+1,j}:=\sum_iM_{\ell,ji}d_{\ell,i}$ for those $j$ with a positive row sum, and $d_{\ell+1,j}:=1$ otherwise. Then
\[
\opi{D_{\ell+1}^{-1}M_\ell D_\ell}=1
\]
exactly, unless $M_\ell=0$, in which case it is $0$. The computation is one pass and requires no iteration.
\end{corollary}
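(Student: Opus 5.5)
The plan is to compute the matrix $D_{\ell+1}^{-1}M_\ell D_\ell$ entrywise and read off its norm through the row-sum identity \eqref{eq:rowsum}. Since $M_\ell\ge0$ and $D_\ell,D_{\ell+1}$ are positive diagonal matrices, the product $A:=D_{\ell+1}^{-1}M_\ell D_\ell$ is again entrywise nonnegative, with entries $A_{ji}=M_{\ell,ji}d_{\ell,i}/d_{\ell+1,j}$. By \eqref{eq:rowsum}, $\opi{A}=\max_j\sum_iA_{ji}$. The $j$-th row sum is therefore $d_{\ell+1,j}^{-1}\sum_iM_{\ell,ji}d_{\ell,i}$; note this is exactly the quantity appearing in Proposition~\ref{prop:winf}, so the corollary also states that the layer Lipschitz constant in this gauge equals $1$.

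I then split the rows into two cases according to $r_j:=\sum_iM_{\ell,ji}d_{\ell,i}$. Because $d_{\ell,i}>0$, we have $r_j>0$ exactly when row $j$ of $M_\ell$ is not identically zero.
\begin{itemize}[nosep]
\item If $r_j>0$, the definition $d_{\ell+1,j}=r_j$ makes the $j$-th row sum equal to $r_j/r_j=1$.
\item If $r_j=0$, the row of $M_\ell$ vanishes, and the $j$-th row sum of $A$ is $0$ whatever positive value is assigned; the choice $d_{\ell+1,j}=1$ only serves to keep $d_{\ell+1}\in(0,\infty)^{n_{\ell+1}}$, so that the gauge is admissible in Definition~\ref{def:gauge}.
\end{itemize}

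Taking the maximum over $j$, we get $\opi{A}=1$ as soon as at least one row has $r_j>0$, that is, as soon as $M_\ell\ne0$. If $M_\ell=0$, every row sum is $0$ and so is the norm. The "one pass" claim follows because $d_{\ell+1}$ is obtained from $d_\ell$ by a single matrix–vector product, $M_\ell d_\ell$, with the zero entries then replaced by $1$.

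There is no genuine obstacle in this argument. The only points needing care are the admissibility of $d_{\ell+1}$, namely its strict positivity, which the convention for zero rows secures, and the observation that $r_j=0$ is equivalent to a zero row of $M_\ell$, which uses $d_\ell>0$. One should also record that if $\ell+1=L$, then $d_{\ell+1}$ so defined need not equal $\one$. This is why the corollary is only a single-layer statement, and why the endpoint constraint has to be handled separately in Theorem~\ref{thm:balance}.
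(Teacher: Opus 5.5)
Your proof is correct and follows the paper's argument. The paper likewise reads the norm off the row sums via \eqref{eq:rowsum}, with Proposition~\ref{prop:winf} supplying the Lipschitz interpretation: rows with positive sum are normalised to $1$ and vanishing rows contribute $0$. Your side remarks are also accurate and agree with the paper's comment after the corollary, namely that positivity of $d_{\ell+1}$ is secured by the convention, and that the resulting $d_L$ need not equal $\one$ at the last layer.
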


\begin{proof}
Immediate from \eqref{eq:rowsum} and Proposition~\ref{prop:winf}: rows with positive sum are normalised to $1$, rows that vanish contribute $0$.
\end{proof}

We deliberately do not call this an ``$\ell^\infty$ analogue of spectral normalisation'': it is exact weighted row balancing of one layer, and Theorem~\ref{thm:balance} shows that balancing all layers simultaneously is a different problem. A greedy left-to-right pass makes $\lambda_\ell=1$ for $\ell\le L-2$, but since $d_L=\one$ the accumulated scale is deposited in the last layer; Theorem~\ref{thm:balance} explains why.

\section{Exact gauge balancing}
\label{sec:balance}

Throughout this section $M_\ell\in\R^{n_{\ell+1}\times n_\ell}_{\ge0}$,
\[
P:=M_{L-1}\cdots M_0,\qquad \tau_*:=\opi{P}^{1/L},\qquad
u_0=\one,\quad u_{\ell+1}=M_\ell u_\ell ,
\]
so $u_L=P\one$ and $\tau_*=\|u_L\|_\infty^{1/L}$. The balancing functional is
\[
\Bo(M_0,\dots,M_{L-1}):=\inf_{\substack{d_1,\dots,d_{L-1}>0\\ d_0=d_L=\one}}\ \max_{0\le\ell<L}\opi{D_{\ell+1}^{-1}M_\ell D_\ell}.
\]
By Proposition~\ref{prop:winf} this is the same quantity as the factorisation stability complexity $\Bo(\boldsymbol\Phi)$ of Section~\ref{sec:intro}, and we use the two notations interchangeably, writing the matrices when the argument is a chain of envelopes and the factorisation when it is a network.

\begin{theorem}[Exact orbit balancing]\label{thm:balance}
$\Bo=\opi{M_{L-1}\cdots M_0}^{1/L}$.
If moreover $u_\ell>0$ coordinatewise for $\ell=1,\dots,L-1$ and $\tau_*>0$, the infimum is attained at
$d_\ell=\tau_*^{-\ell}u_\ell$, $\ell=1,\dots,L-1$.
\end{theorem}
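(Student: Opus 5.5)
We prove the two inequalities $\Bo\ge\tau_*$ and $\Bo\le\tau_*$ separately, the first by a telescoping submultiplicativity argument and the second by the explicit gauge when $u_\ell>0$, and by perturbation otherwise.

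\emph{Lower bound.} Fix any admissible gauge $d_0=\one,d_1,\dots,d_{L-1},d_L=\one$, and put $\lambda_\ell:=\opi{D_{\ell+1}^{-1}M_\ell D_\ell}$. The product telescopes:
\[
P=M_{L-1}\cdots M_0=D_L\Bigl(\prod_{\ell=L-1}^{0}D_{\ell+1}^{-1}M_\ell D_\ell\Bigr)D_0^{-1}.
\]
Since $D_0=D_L=I$ and $\opi{\cdot}$ is submultiplicative, this gives $\opi{P}\le\prod_\ell\lambda_\ell\le(\max_\ell\lambda_\ell)^L$. Taking the infimum over gauges yields $\Bo\ge\tau_*$. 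Note that the fixed endpoints are exactly what make the outer diagonal factors disappear.

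\emph{Attainment in the positive case.} Assume $u_\ell>0$ for $1\le\ell\le L-1$ and $\tau_*>0$, and set $d_\ell=\tau_*^{-\ell}u_\ell$ for these $\ell$. By \eqref{eq:rowsum} and nonnegativity,
\[
\opi{D_{\ell+1}^{-1}M_\ell D_\ell}=\max_j \frac{(M_\ell d_\ell)_j}{d_{\ell+1,j}}.
\]
For $1\le\ell\le L-2$ we have $M_\ell d_\ell=\tau_*^{-\ell}u_{\ell+1}=\tau_* d_{\ell+1}$, so every row ratio equals $\tau_*$. For $\ell=0$ (when $L\ge2$), $d_0=\one$ gives $M_0\one=u_1=\tau_* d_1$, again ratio $\tau_*$. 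For the last layer, $d_L=\one$ and $M_{L-1}d_{L-1}=\tau_*^{-(L-1)}u_L$, whose sup-norm is $\tau_*^{-(L-1)}\tau_*^L=\tau_*$. Hence the maximum over layers equals $\tau_*$, which matches the lower bound. (The case $L=1$ is trivial.)

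\emph{Degenerate cases: the main obstacle.} If some $u_\ell$ has zero entries, the weights above vanish and the gauge is inadmissible. To handle this, I would perturb: replace $M_\ell$ by $M_\ell^\varepsilon=M_\ell+\varepsilon E$, with $E$ the all-ones matrix. Then every $u^\varepsilon_\ell>0$ and $\tau_*^\varepsilon>0$, so the explicit gauge $d^\varepsilon$ gives $\max_\ell\opi{(D^\varepsilon_{\ell+1})^{-1}M^\varepsilon_\ell D^\varepsilon_\ell}=\tau^\varepsilon_*$. Entrywise monotonicity, $M_\ell\le M_\ell^\varepsilon$ with positive diagonals, shows that the same gauge satisfies $\opi{(D^\varepsilon_{\ell+1})^{-1}M_\ell D^\varepsilon_\ell}\le\tau_*^\varepsilon$. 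Since $\tau_*^\varepsilon\to\tau_*$ by continuity of the matrix product and of the norm, we get $\Bo\le\tau_*$, including when $\tau_*=0$.

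This establishes equality without claiming attainment in the degenerate cases; non-attainment examples belong to later remarks. The delicate point I expect is checking that no row-sum is $0/0$ in the perturbed gauge, which positivity of $u^\varepsilon$ guarantees.
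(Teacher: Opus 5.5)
Your lower bound and your nondegenerate attainment argument are the paper's own. Both use telescoping with $D_0=D_L=I$ and submultiplicativity, then the gauge $d_\ell=\tau_*^{-\ell}u_\ell$ with $M_\ell d_\ell=\tau_* d_{\ell+1}$ for $\ell\le L-2$, and control the last layer by $\|u_L\|_\infty=\tau_*^L$.

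The genuine difference is the degenerate case. The paper treats it in Appendix~\ref{sec:degenerate} with two explicit constructions built from the \emph{unperturbed} orbit:
\begin{itemize}[nosep]
\item for $\tau_*>0$, it keeps $\tau_*^{-\ell}u_{\ell,i}$ on live coordinates and puts $\delta^{L-\ell}$ on derivative-dead ones, using Lemma~\ref{lem:dead} to see that dead rows are fed only by dead columns;
\item for $P=0$, it needs a separate live-in/live-out weighting by $\delta^{-\ell}$ and $\delta^{L-\ell}$, checked edge by edge.
\end{itemize}
Your perturbation $M_\ell+\varepsilon E$ replaces both with a single argument. Strict positivity gives $u^\varepsilon_\ell>0$ and $\tau^\varepsilon_*>0$. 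Entrywise monotonicity of $\opi{\cdot}$ on nonnegative matrices transfers the bound back to the original chain. Continuity of $\varepsilon\mapsto\opi{P^\varepsilon}^{1/L}$ then closes it, with no need to distinguish $\tau_*>0$ from $\tau_*=0$. Your argument is correct as written and shorter.

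What the paper's route buys is structure rather than validity. It isolates derivative-dead coordinates as the only obstruction to attainment. That is exactly what Example~\ref{ex:nonattain} exhibits, and it justifies the ``delete dead coordinates, then balance'' reading behind Corollary~\ref{cor:algo}. Its near-optimal gauges also coincide with the exact nondegenerate weights on every live coordinate. Yours change all weights through the perturbed forward pass.
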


\begin{proof}[Proof of the lower bound]
All $M_\ell\ge0$ and $D_\ell>0$, so the product telescopes exactly:
\[
\prod_{\ell=L-1}^{0}\bigl(D_{\ell+1}^{-1}M_\ell D_\ell\bigr)=D_L^{-1}PD_0=P .
\]
By submultiplicativity and the inequality between the geometric mean and the maximum,
\[
\tau_*^L=\opi{P}\le\prod_\ell\opi{D_{\ell+1}^{-1}M_\ell D_\ell}\le\Bigl(\max_\ell\opi{D_{\ell+1}^{-1}M_\ell D_\ell}\Bigr)^L .
\]
Taking the infimum over $d$ gives $\Bo\ge\tau_*$.
\end{proof}

\begin{proof}[Proof of the upper bound, nondegenerate case]
Let $u_\ell>0$ for $1\le\ell\le L-1$ and $\tau_*>0$; set $d_\ell=\tau_*^{-\ell}u_\ell$.
For $0\le\ell\le L-2$ we have $M_\ell d_\ell=\tau_*^{-\ell}u_{\ell+1}=\tau_*d_{\ell+1}$, hence
\[
\bigl(D_{\ell+1}^{-1}M_\ell D_\ell\bigr)\one=D_{\ell+1}^{-1}M_\ell d_\ell=\tau_*\one
\ \overset{\eqref{eq:rowsum}}{\Longrightarrow}\
\opi{D_{\ell+1}^{-1}M_\ell D_\ell}=\tau_*\quad\text{exactly.}
\]
For $\ell=L-1$, where $D_L=I$,
\[
M_{L-1}d_{L-1}=\tau_*^{-(L-1)}u_L\le\tau_*^{-(L-1)}\|u_L\|_\infty\one=\tau_*\one,
\]
so $\opi{M_{L-1}D_{L-1}}\le\tau_*$. Hence $\max_\ell=\tau_*$.
\end{proof}

The degenerate cases are treated in Appendix~\ref{sec:degenerate}: the value is still $\tau_*$, but the infimum need not be attained.

\begin{remark}[Structure of the optimum]
All layers except the last are balanced to \emph{equality}, the last only to an inequality. This explains the behaviour of the greedy normalisation of Corollary~\ref{cor:N}: the greedy pass computes the same vectors $u_\ell$ but without the factor $\tau_*^{-\ell}$.
\end{remark}

\subsection{Corollaries}

\begin{corollary}[Budget criterion]\label{cor:criterion}
Let $b:=1+c/L$. Then
\[
\tau_*<b\ \Longrightarrow\ \exists D:\ \max_\ell\opi{D_{\ell+1}^{-1}M_\ell D_\ell}<b,
\qquad
\tau_*>b\ \Longrightarrow\ \text{infeasible},
\]
and for $\tau_*=b$ feasibility is equivalent to attainment of the infimum. In the nondegenerate case the criterion is an equivalence:
\[
\exists D:\ \max_\ell\lambda^{(d)}_\ell\le b
\quad\Longleftrightarrow\quad
\opi{{\textstyle\prod_\ell} M_\ell}\le b^{L}\ (\le e^c).
\]
\end{corollary}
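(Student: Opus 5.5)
The corollary follows from Theorem~\ref{thm:balance}, which gives $\Bo=\tau_*$, together with the attainment statement in the nondegenerate case. I will treat the three regimes $\tau_*<b$, $\tau_*>b$ and $\tau_*=b$ separately, and then specialise to the nondegenerate case.

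\textbf{Case $\tau_*<b$.} Here $\Bo=\tau_*<b$. By the definition of the infimum there is an admissible gauge $d$, with $d_0=d_L=\one$ and $d_\ell>0$, whose maximal layer constant lies strictly between $\tau_*$ and $b$. This gives the required $D$; no attainment is needed.

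\textbf{Case $\tau_*>b$.} The lower-bound half of Theorem~\ref{thm:balance} applies to every admissible gauge, through the telescoping identity and submultiplicativity. It gives $\max_\ell\opi{D_{\ell+1}^{-1}M_\ell D_\ell}\ge\tau_*>b$, so no gauge meets the budget, not even with non-strict inequality $\le b$.

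\textbf{Case $\tau_*=b$.} Feasibility here means that some $D$ achieves $\max_\ell\le b$. Since every gauge satisfies $\max_\ell\ge\tau_*=b$, feasibility forces $\max_\ell=b=\Bo$, i.e.\ the infimum is attained. Conversely, a gauge attaining the infimum has $\max_\ell=\tau_*=b$ and is therefore feasible.

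\textbf{Nondegenerate case.} Assume $u_\ell>0$ for $1\le\ell\le L-1$ and $\tau_*>0$. The explicit minimiser $d_\ell=\tau_*^{-\ell}u_\ell$ attains $\Bo$, so the non-strict form holds: a gauge with $\max_\ell\le b$ exists if and only if $\tau_*\le b$. Raising to the power $L$ and recalling $\tau_*^L=\opi{P}$ turns this into $\opi{\prod_\ell M_\ell}\le b^L$. Finally, $b^L=(1+c/L)^L\le e^c$ by $\log(1+x)\le x$, which justifies the parenthetical bound. If $\tau_*=0$ the statement is trivial, since the degenerate treatment in Appendix~\ref{sec:degenerate} gives value $0\le b$; I would nonetheless stay within the stated nondegenerate hypotheses.

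\textbf{Main obstacle.} The only delicate point is the boundary case $\tau_*=b$, and the care needed there is purely logical: keeping strict and non-strict inequalities apart. The corollary correctly avoids claiming feasibility at the boundary in general, because non-attainment is possible there (Appendix~\ref{sec:degenerate}). Everything else is bookkeeping on top of Theorem~\ref{thm:balance}.
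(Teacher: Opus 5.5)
Your proposal is correct and follows the paper's own argument. The strict case uses that $\Bo=\tau_*$ holds also in the degenerate cases (Appendix~\ref{sec:degenerate}), the case $\tau_*>b$ is the telescoping lower bound, and both the boundary case and the nondegenerate equivalence reduce to attainment at $d_\ell=\tau_*^{-\ell}u_\ell$; your check that $(1+c/L)^L\le e^c$ and your remark on $\tau_*=0$ are harmless extras.
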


\begin{proof}
The first implication uses that values arbitrarily close to $\tau_*$ are achieved, degenerate or not (Theorem~\ref{thm:balance} and Appendix~\ref{sec:degenerate}); the second is the lower bound; the third is attainment.
\end{proof}

\begin{remark}
The equivalence must \emph{not} be stated in general: Example~\ref{ex:nonattain} has $\tau_*=1$ with the infimum not attained, so $\tau_*\le b$ with $b=1$ does not imply existence of $D$.
\end{remark}

\begin{corollary}[Closed-form algorithm]\label{cor:algo}
One forward pass $u_0=\one$, $u_{\ell+1}=M_\ell u_\ell$ produces $\tau_*=\|u_L\|_\infty^{1/L}$ and the family $d_\ell=\tau_*^{-\ell}u_\ell$, at cost $O(\sum_\ell\mathrm{nnz}(M_\ell))$ and without iteration. Its status: a \emph{minimiser} if $u_\ell>0$ and $\tau_*>0$; an $\varepsilon$-optimal gauge after the regularisation of Appendix~\ref{sec:degenerate} if derivative-dead coordinates are present; and an infimum statement only, if $\tau_*=0$.
\end{corollary}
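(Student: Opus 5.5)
The statement is Corollary~\ref{cor:algo}. I would prove it by reading off the three claims (output, cost, status) from Theorem~\ref{thm:balance} and its proof, together with Appendix~\ref{sec:degenerate}. No new estimates are needed; the work is bookkeeping.

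\emph{Output and correctness.} The recursion $u_0=\one$, $u_{\ell+1}=M_\ell u_\ell$ is the one fixed at the start of Section~\ref{sec:balance}. It gives $u_L=P\one$. Since $P\ge0$, identity \eqref{eq:rowsum} yields $\|u_L\|_\infty=\opi{P}$, so $\tau_*=\|u_L\|_\infty^{1/L}$ is exactly the quantity in Theorem~\ref{thm:balance}. The family $d_\ell=\tau_*^{-\ell}u_\ell$ is the explicit minimiser named there.

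\emph{Cost.} Each step is a sparse matrix--vector product with a nonnegative matrix, costing $O(\mathrm{nnz}(M_\ell)+n_{\ell+1})$. The vectors are already needed to form $u_L$, so the rescaling by $\tau_*^{-\ell}$ adds only $O(\sum_\ell n_\ell)$, which is dominated once empty rows are discounted. Computing $\|u_L\|_\infty$ is linear in $n_L$. There is no iteration, because the formula is closed form.

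\emph{Status, three cases.} (i) If $u_\ell>0$ for $1\le\ell\le L-1$ and $\tau_*>0$, the nondegenerate upper-bound proof shows $\max_\ell\opi{D_{\ell+1}^{-1}M_\ell D_\ell}=\tau_*$. The lower bound shows nothing smaller is possible, so the family is a minimiser. (ii) If some coordinate of some $u_\ell$ vanishes (a derivative-dead coordinate) but $\tau_*>0$, the value is still $\tau_*$ but may fail to be attained. I would invoke the regularisation of Appendix~\ref{sec:degenerate}: replace $M_\ell$ by $M_\ell+\delta E_\ell$, with $E_\ell$ a positive matrix, or equivalently perturb $u_\ell$ by $\delta\one$. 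This makes the perturbed orbit positive. The forward pass on the perturbed chain then gives a gauge whose value for the \emph{original} chain is at most the perturbed $\tau_*(\delta)$, because entrywise monotonicity gives $\opi{D^{-1}MD}\le\opi{D^{-1}(M+\delta E)D}$. Finally $\tau_*(\delta)\to\tau_*$ as $\delta\to0$ by continuity of the product norm. (iii) If $\tau_*=0$, then $P=0$ and the formula $d_\ell=\tau_*^{-\ell}u_\ell$ is undefined. Only the value $\Bo=0$ survives, as an infimum, again from the appendix.

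\emph{Main obstacle.} Case (ii) is the delicate one. I must check that the appendix's regularisation is itself a single forward pass on the perturbed matrices, so that the ``without iteration'' and cost claims persist. I must also check that monotonicity in the matrix entries transfers the bound back to the original chain; this holds because all entries are nonnegative.
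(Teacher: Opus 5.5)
Your treatment of the output, the cost of the plain pass, and cases (i) and (iii) matches how the corollary is read off from Theorem~\ref{thm:balance} and Appendix~\ref{sec:degenerate}; the paper gives no separate proof. In case (ii) your argument is correct, but it uses a different regularisation from the one you cite.

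The appendix does not perturb the matrices. It keeps the one-pass output $d_{\ell,i}=\tau_*^{-\ell}u_{\ell,i}$ on live coordinates and sets $d^\delta_{\ell,i}=\delta^{L-\ell}$ on dead ones. It then uses Lemma~\ref{lem:dead} (a dead row is supported on dead columns) to check three row-sum cases: live rows give $\tau_*+O(\delta)$, dead rows give $O(\delta)$, and the last layer gives at most $\tau_*+O(\delta)$.

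Your route runs the nondegenerate pass on $M_\ell+\delta E_\ell$ with $E_\ell>0$. It transfers the bound to the original chain by entrywise monotonicity of $\opi{\cdot}$ on nonnegative matrices, and lets $\delta\to0$ by continuity of $\delta\mapsto\opi{P^\delta}$. This is valid, needs no structure lemma, and also covers $\tau_*=0$ verbatim, since $P^\delta>0$.

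What the paper's route buys is that the regularised gauge is a post-processing of the original sparse pass, with the live weights unchanged. In yours, a generic positive $E_\ell$ costs $O(n_\ell n_{\ell+1})$ per layer. The $\mathrm{nnz}$ bound survives only for a rank-one choice such as $E_\ell=\one\one^\top$, where $(M_\ell+\delta E_\ell)u=M_\ell u+\delta(\one^\top u)\one$. Your ``main obstacle'' is therefore an artefact of the misattribution. To prove the corollary as worded, cite the appendix computation, since your argument certifies a different $\varepsilon$-optimal gauge.

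The aside ``or equivalently perturb $u_\ell$ by $\delta\one$'' is not equivalent, and under its natural reading it is false. Take $L=3$, $M_0=(1,0)^\top$, $M_1=\diag(1,K)$, $M_2=(1\ \ 1)$, so that $u_1=u_2=(1,0)^\top$ and $\tau_*=1$. The gauge $d_1=d_2=(1+\delta,\delta)^\top$ gives $D_2^{-1}M_1D_1=\diag(1,K)$ for every $\delta$, so the value stays $\ge K$. The exponent $L-\ell$ in the appendix exists precisely so that each dead-to-dead edge picks up a factor $\delta^{L-\ell}/\delta^{L-\ell-1}=\delta$.

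If you meant the affine recursion $v_0=\one$, $v_{\ell+1}=M_\ell v_\ell+\delta\one$, that does work. With $\tau:=\|v_L\|_\infty^{1/L}$ and $d_\ell=\tau^{-\ell}v_\ell$ one has $M_\ell d_\ell=\tau^{-\ell}(v_{\ell+1}-\delta\one)\le\tau d_{\ell+1}$ for $\ell\le L-2$, and $M_{L-1}d_{L-1}\le\tau\one$. The cost is $O(\sum_\ell(\mathrm{nnz}(M_\ell)+n_{\ell+1}))$, and this is the cheapest form of your own argument.
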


\subsection{Robustness of the optimal budget and sensitivity of the gauge}

In practice the $M_\ell$ are estimated numerically from trained edges, so the exact formula of Theorem~\ref{thm:balance} is useful only if it is stable. It is, and with no loss over depth; the optimal gauge, however, need not be.

\begin{proposition}[Multiplicative perturbation]\label{prop:perturb}
Let $M_\ell,\widetilde M_\ell\ge0$ satisfy, entrywise,
\[
(1-\varepsilon_\ell)M_\ell\ \le\ \widetilde M_\ell\ \le\ (1+\varepsilon_\ell)M_\ell,
\qquad 0\le\varepsilon_\ell<1 .
\]
Then, with $\widetilde\tau_*=\|\widetilde M_{L-1}\cdots\widetilde M_0\|_{\infty\to\infty}^{1/L}$,
\[
\tau_*\Bigl(\prod_\ell(1-\varepsilon_\ell)\Bigr)^{1/L}\ \le\ \widetilde\tau_*\ \le\ \tau_*\Bigl(\prod_\ell(1+\varepsilon_\ell)\Bigr)^{1/L},
\]
and in particular $(1-\varepsilon)\tau_*\le\widetilde\tau_*\le(1+\varepsilon)\tau_*$ if $\varepsilon_\ell\le\varepsilon$ for all $\ell$.
\end{proposition}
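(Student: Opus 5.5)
The argument rests on entrywise monotonicity of products of nonnegative matrices, combined with the identity \eqref{eq:rowsum}. No gauge argument is needed: $\tau_*$ and $\widetilde\tau_*$ are both defined directly through the product $P$, so Theorem~\ref{thm:balance} serves only to interpret the result, not to prove it.

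\emph{Step 1: entrywise sandwich for the product.} Since all matrices are nonnegative, $A\le A'$ and $B\le B'$ entrywise with $A,B\ge0$ imply $AB\le A'B'$ entrywise. Applying this inductively along the chain (each $(1\pm\varepsilon_\ell)M_\ell$ is nonnegative because $\varepsilon_\ell<1$) and pulling out the scalars gives
\[
\Bigl(\prod_\ell(1-\varepsilon_\ell)\Bigr)P\ \le\ \widetilde P\ \le\ \Bigl(\prod_\ell(1+\varepsilon_\ell)\Bigr)P,
\qquad \widetilde P:=\widetilde M_{L-1}\cdots\widetilde M_0 .
\]
The induction is on the partial products $\widetilde M_\ell\cdots\widetilde M_0$.

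\emph{Step 2: pass to norms.} For nonnegative matrices the map $A\mapsto\opi{A}=\|A\one\|_\infty$ from \eqref{eq:rowsum} is monotone with respect to the entrywise order, and it is positively homogeneous. Applying it to the sandwich gives
\[
\Bigl(\prod_\ell(1-\varepsilon_\ell)\Bigr)\opi{P}\le\opi{\widetilde P}\le\Bigl(\prod_\ell(1+\varepsilon_\ell)\Bigr)\opi{P}.
\]
Taking $L$th roots, which is monotone on $[0,\infty)$, yields the main claim. The case $\tau_*=0$ needs no separate treatment.

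\emph{Step 3: the uniform case.} If $\varepsilon_\ell\le\varepsilon$ for every $\ell$, then $\prod_\ell(1+\varepsilon_\ell)\le(1+\varepsilon)^L$ and $\prod_\ell(1-\varepsilon_\ell)\ge(1-\varepsilon)^L$, so the $L$th roots give $(1\pm\varepsilon)$. This is where the absence of depth loss comes from.

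There is no real obstacle. The only point needing care is checking that the entrywise order is preserved under products, which requires nonnegativity of both the lower and the upper bounding factors; this is guaranteed by $\varepsilon_\ell<1$.
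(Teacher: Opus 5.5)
Your proposal is correct and follows essentially the same route as the paper: multiply the entrywise sandwiches along the chain, which is legitimate because all factors are nonnegative, then use monotonicity and homogeneity of $\opi{\cdot}$ on nonnegative matrices, and take $L$th roots. Your write-up simply spells out the induction and the uniform-$\varepsilon$ specialisation that the paper leaves implicit.
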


\begin{proof}
All factors are nonnegative, so entrywise inequalities multiply:
$\bigl(\prod_\ell(1-\varepsilon_\ell)\bigr)P\le\widetilde P\le\bigl(\prod_\ell(1+\varepsilon_\ell)\bigr)P$.
The norm $\|\cdot\|_{\infty\to\infty}$ is monotone with respect to the entrywise order on nonnegative matrices, so the same inequalities hold for the norms; take $L$th roots.
\end{proof}

Thus the optimal layer budget inherits the \emph{relative} accuracy of the envelopes, degraded by no factor depending on $L$: the geometric mean in the statement is what prevents accumulation. This is what makes Corollary~\ref{cor:algo} a normalisation principle rather than an algebraic identity.

\begin{proposition}[Sensitivity of the optimal gauge]\label{prop:gauge-sens}
In the nondegenerate case, with $A_\pm:=\prod_{r<L}(1\pm\varepsilon_r)$ and $B_{\ell,\pm}:=\prod_{r<\ell}(1\pm\varepsilon_r)$, the optimal gauges satisfy, coordinatewise,
\[
\frac{B_{\ell,-}}{A_+^{\ell/L}}\ \le\ \frac{\widetilde d_{\ell,i}}{d_{\ell,i}}\ \le\ \frac{B_{\ell,+}}{A_-^{\ell/L}} ,
\qquad\text{so for }\varepsilon_\ell\le\varepsilon:\quad
\Bigl(\frac{1-\varepsilon}{1+\varepsilon}\Bigr)^{\ell}\le\frac{\widetilde d_{\ell,i}}{d_{\ell,i}}\le\Bigl(\frac{1+\varepsilon}{1-\varepsilon}\Bigr)^{\ell}.
\]
Both uniform-$\varepsilon$ bounds are sharp.
\end{proposition}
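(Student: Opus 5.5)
The plan is to work directly with the explicit formula for the optimal gauge in the nondegenerate case, $d_\ell=\tau_*^{-\ell}u_\ell$ and $\widetilde d_\ell=\widetilde\tau_*^{-\ell}\widetilde u_\ell$, where $\widetilde u_\ell=\widetilde M_{\ell-1}\cdots\widetilde M_0\one$. The ratio then factors as
\[
\frac{\widetilde d_{\ell,i}}{d_{\ell,i}}=\Bigl(\frac{\tau_*}{\widetilde\tau_*}\Bigr)^{\ell}\,\frac{\widetilde u_{\ell,i}}{u_{\ell,i}},
\]
so it suffices to bound the two factors separately.

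\textbf{Orbit factor.} All matrices are nonnegative, so entrywise inequalities multiply along the chain, exactly as in the proof of Proposition~\ref{prop:perturb}. This gives
\[
B_{\ell,-}\,u_\ell\le\widetilde u_\ell\le B_{\ell,+}\,u_\ell
\]
coordinatewise. Since $u_\ell>0$ in the nondegenerate case, we may divide, and $B_{\ell,-}\le\widetilde u_{\ell,i}/u_{\ell,i}\le B_{\ell,+}$.

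\textbf{Budget factor.} Proposition~\ref{prop:perturb} gives $A_-^{1/L}\tau_*\le\widetilde\tau_*\le A_+^{1/L}\tau_*$. Hence $(\tau_*/\widetilde\tau_*)^{\ell}$ lies in $[A_+^{-\ell/L},A_-^{-\ell/L}]$.

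Multiplying the two factor bounds gives the displayed inequality. For uniform $\varepsilon$ we substitute $B_{\ell,\pm}=(1\pm\varepsilon)^{\ell}$ and $A_\pm^{\ell/L}=(1\pm\varepsilon)^{\ell}$, which yields $\bigl((1-\varepsilon)/(1+\varepsilon)\bigr)^{\ell}$ and its reciprocal. One point to check is that the perturbed chain is also nondegenerate. This follows because $\widetilde u_\ell\ge B_{\ell,-}u_\ell>0$ and $\widetilde\tau_*>0$, using $\varepsilon_r<1$.

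\textbf{Sharpness.} This is the step needing care, since the two factors must be made extremal simultaneously and in opposite directions. For the upper bound, take $L\ge2$, fix $\ell\in\{1,\dots,L-1\}$ and a coordinate $i$, and set $\widetilde M_r=(1+\varepsilon)M_r$ for $r<\ell$ and $\widetilde M_r=(1-\varepsilon)M_r$ for $r\ge\ell$. Then $\widetilde u_{\ell}=(1+\varepsilon)^{\ell}u_{\ell}$ exactly, but $\widetilde\tau_*^L=(1+\varepsilon)^{\ell}(1-\varepsilon)^{L-\ell}\tau_*^L$. This does not reach $A_-$, so mixed-sign perturbations on one chain are not enough.

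I would therefore use width to decouple the two quantities. Take a chain whose product row attaining $\|P\one\|_\infty$ passes only through coordinates other than $i$ at level $\ell$. Perturb the edges feeding $(\ell,i)$ by $(1+\varepsilon)$ and all edges on the dominant path by $(1-\varepsilon)$. For this to work, coordinate $i$ must contribute negligibly to $u_L$: its outgoing weights are tiny, or it reaches $u_L$ only in non-maximal rows. Then $\widetilde\tau_*=(1-\varepsilon)\tau_*$ while $\widetilde u_{\ell,i}=(1+\varepsilon)^{\ell}u_{\ell,i}$, and the upper bound is attained exactly. A concrete example is two parallel diagonal chains, $M_r=\diag(1,1)$, with the second coordinate fed into the output by a zero row. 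The lower bound follows by the symmetric construction with the signs exchanged.
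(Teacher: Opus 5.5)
Your two-sided bound is the paper's own argument. You use the same factorisation $\widetilde d_\ell/d_\ell=(\tau_*/\widetilde\tau_*)^{\ell}\,\widetilde u_\ell/u_\ell$, entrywise multiplication along the chain for the orbit factor, and Proposition~\ref{prop:perturb} for the budget factor. Your check that the perturbed chain is again nondegenerate, so that $\widetilde d_\ell$ is given by the same explicit formula, is a step the paper leaves implicit.

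Where you differ is sharpness, and here your route is the correct one. The paper takes $M_\ell=I_2$, $\widetilde M_\ell=\diag(1+\varepsilon,1-\varepsilon)$. This gives $\widetilde d_\ell=\bigl(1,((1-\varepsilon)/(1+\varepsilon))^{\ell}\bigr)$ and attains the lower bound. The paper then asserts that exchanging the two diagonal entries attains the upper bound. It does not. After the exchange, the coordinate scaled by $1+\varepsilon$ still realises $\widetilde\tau_*=1+\varepsilon$, so one gets $\widetilde d_\ell=\bigl(((1-\varepsilon)/(1+\varepsilon))^{\ell},1\bigr)$, which is the lower bound again in the other coordinate.

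Your diagnosis supplies what is missing. Equality in the upper bound needs $\widetilde\tau_*=(1-\varepsilon)\tau_*$ and $\widetilde u_{\ell,i}=(1+\varepsilon)^{\ell}u_{\ell,i}$ at the same time. That forces the up-scaled coordinate to be decoupled from the maximising output row, and your parallel-chain example does exactly this. Take
\begin{itemize}
\item $M_r=I_2$ and $\widetilde M_r=\diag(1-\varepsilon,1+\varepsilon)$ for $r<L-1$;
\item $M_{L-1}=\diag(1,0)$ and $\widetilde M_{L-1}=\diag(1-\varepsilon,0)$.
\end{itemize}
Then $u_\ell=(1,1)$ for $1\le\ell\le L-1$, $\tau_*=1$, $\widetilde\tau_*=1-\varepsilon$, and $\widetilde d_\ell=\bigl(1,((1+\varepsilon)/(1-\varepsilon))^{\ell}\bigr)$. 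This chain is nondegenerate in the paper's sense, since only $u_1,\dots,u_{L-1}$ need be positive. Replacing the $0$ by a sufficiently small $a>0$ even keeps $u_L>0$.

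One caution on presentation. Your general recipe (perturb the edges feeding $(\ell,i)$ up and the dominant path down) implicitly needs those two edge sets to be disjoint, and nothing in the recipe guarantees it. The concrete example does, so state the sharpness proof through the example rather than the recipe.
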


\begin{proof}
The same entrywise multiplication gives $B_{\ell,-}u_\ell\le\widetilde u_\ell\le B_{\ell,+}u_\ell$, while
\[
\frac{\widetilde d_\ell}{d_\ell}=\Bigl(\frac{\tau_*}{\widetilde\tau_*}\Bigr)^{\ell}\,\frac{\widetilde u_\ell}{u_\ell} ,
\]
and Proposition~\ref{prop:perturb} bounds the first factor by $A_\mp^{-\ell/L}$. For attainment take $n_\ell\equiv2$, $M_\ell=I$ and $\widetilde M_\ell=\diag(1+\varepsilon,1-\varepsilon)$: then $\tau_*=1$, $\widetilde\tau_*=1+\varepsilon$, $d_\ell=(1,1)$ and $\widetilde d_\ell=\bigl(1,((1-\varepsilon)/(1+\varepsilon))^{\ell}\bigr)$, so the lower bound holds with equality in the second coordinate, and the upper bound with equality after exchanging the two diagonal entries.
\end{proof}

\begin{remark}[Well-conditioned value, ill-conditioned gauge]\label{rem:cond}
Propositions~\ref{prop:perturb} and \ref{prop:gauge-sens} say different things, and the example in the proof shows the difference is real rather than an artefact of the estimates: the optimal \emph{value} is stable uniformly in depth, while the optimal \emph{weights} can drift exponentially in $\ell$. For a practitioner this means the number $\tau_*$ may be trusted from estimated envelopes, whereas a gauge computed from them should be recomputed rather than transported along a long chain.
\end{remark}

\begin{remark}[Additive perturbations]\label{rem:additive}
The multiplicative hypothesis fixes the zero pattern of the envelopes. If instead $\widetilde M_\ell=M_\ell+\Delta_\ell$ with $\Delta_\ell$ of arbitrary sign pattern, telescoping gives
\[
\|\widetilde P-P\|_{\infty\to\infty}\le\sum_{j<L}\Bigl(\prod_{r>j}\|\widetilde M_r\|_{\infty\to\infty}\Bigr)\|\Delta_j\|_{\infty\to\infty}\Bigl(\prod_{r<j}\|M_r\|_{\infty\to\infty}\Bigr),
\]
and hence $|\widetilde\tau_*^L-\tau_*^L|\le\|\widetilde P-P\|_{\infty\to\infty}$. This is weaker: the bound now degrades with depth through the products of layer norms, which is why we state the multiplicative version as the theorem.
\end{remark}

\begin{corollary}[Cancellation gap]\label{cor:gap}
Linear edges form a legitimate KAN. Let
\[
W_0=\begin{pmatrix}1&K\\0&1\end{pmatrix},\qquad
W_1=\begin{pmatrix}1&-K\\0&1\end{pmatrix},\qquad W_1W_0=I .
\]
Then $\Lip_\infty(F)=1$, while $M_0=M_1=\begin{psmallmatrix}1&K\\0&1\end{psmallmatrix}$, $P=\begin{psmallmatrix}1&2K\\0&1\end{psmallmatrix}$, and
$\Bo=\sqrt{1+2K}\to\infty$ as $K\to\infty$.
\end{corollary}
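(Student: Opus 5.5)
This corollary is a direct computation followed by an application of Theorem~\ref{thm:balance}; I will verify its four claims in order. First, linear edges $\varphi_{\ell,ji}(t)=(W_\ell)_{ji}t$ are univariate and $W^{1,\infty}$ on any box, so the network is a legitimate KAN. It computes $F(x)=W_1W_0x=x$. Hence $\Lip_\infty(F)=1$ on $K_0=[-R,R]^2$ for any $R>0$.

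Next come the envelopes. For a linear edge, $\varphi'_{\ell,ji}\equiv(W_\ell)_{ji}$, so $M_{\ell,ji}=|(W_\ell)_{ji}|$ irrespective of the boxes. This is the only place where it matters that the hidden box $\Phi_0(K_0)$ is enclosed in some nondegenerate box: the derivative is constant, so the choice of box is irrelevant. It gives $M_0=M_1=\begin{psmallmatrix}1&K\\0&1\end{psmallmatrix}$, and squaring yields $P=M_1M_0=\begin{psmallmatrix}1&2K\\0&1\end{psmallmatrix}$. I take $K\ge0$; for negative $K$ one would replace $K$ by $|K|$ throughout.

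Then I apply Theorem~\ref{thm:balance} with $L=2$. Using \eqref{eq:rowsum}, $u_2=P\one=(1+2K,1)^\top$, so $\opi{P}=1+2K$ and $\Bo=\tau_*=\sqrt{1+2K}$, which tends to infinity as $K\to\infty$. Attainment is not needed for the value, but it can be exhibited: $u_1=M_0\one=(1+K,1)^\top>0$, so $d_1=u_1/\sqrt{1+2K}$ is an explicit minimiser.

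There is no real obstacle. The only point to stress, and it is what the corollary illustrates, is that the lower bound $\Bo\ge\tau_*$ comes from the telescoped product of envelope matrices, $|W_1|\,|W_0|$. It does not come from $|W_1W_0|=I$, and this is why the gauge bound does not collapse to $\Lip_\infty(F)=1$.
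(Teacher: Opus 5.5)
Your proof is correct and follows the same route as the paper: compute $\opi{P}=1+2K$, apply Theorem~\ref{thm:balance} with $L=2$ to get $\Bo=\sqrt{1+2K}$, and exhibit the minimiser $d_1=\tau_*^{-1}u_1$ with $u_1=(1+K,1)^\top$. Your extra checks are details the paper leaves implicit: that $F$ is the identity, that the envelopes of linear edges do not depend on the boxes, and that $K\ge0$ is assumed.
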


\begin{proof}
$\opi{P}=1+2K$, so $\tau_*=\sqrt{1+2K}$; attainment is explicit with $u_1=(1+K,1)^\top$ and $d_1=\tau_*^{-1}u_1$, both layer norms being then equal to $\tau_*$.
\end{proof}

The mechanism is that passing to envelopes destroys signs: $|W_1|\,|W_0|\ne|W_1W_0|$, and no diagonal recalibration can restore them.

\begin{corollary}[Homogeneous tower]\label{cor:perron}
If $M_\ell\equiv M$ is square, then by Gelfand's formula
\[
\lim_{L\to\infty}\Bo=\lim_{L\to\infty}\opi{M^L}^{1/L}=\rho(M).
\]
Hence $\rho(M)>1$ makes a fixed-$c$ budget asymptotically infeasible, $\rho(M)<1$ makes it feasible, and $\rho(M)=1$ is decided by the growth of $\opi{M^L}$, not by $\rho$ alone.
\end{corollary}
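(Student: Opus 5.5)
The plan is to identify $\Bo$ for a homogeneous chain with the $L$th root of a matrix power, and then apply Gelfand's formula. Set $M_\ell\equiv M\in\R^{n\times n}_{\ge0}$ and write $\Bo^{(L)}$ for the balancing functional at depth $L$. Theorem~\ref{thm:balance}, including its degenerate-case extension in Appendix~\ref{sec:degenerate}, gives the exact value $\Bo^{(L)}=\opi{M^L}^{1/L}$ for every $L$, whether or not the infimum is attained. Gelfand's formula $\lim_{L}\|M^L\|^{1/L}=\rho(M)$ holds for any submultiplicative matrix norm, and $\opi{\cdot}$ is such a norm. So the first claim, $\lim_L\Bo^{(L)}=\rho(M)$, follows immediately. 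This includes $\rho(M)=0$: there $M$ is nilpotent, $M^L=0$ for $L\ge n$, and $\Bo^{(L)}=0$ eventually.

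For the feasibility statements I fix $c$ and take the depth-$L$ budget $b_L=1+c/L$. Note that $b_L^L\to e^c$, so $b_L\to1$. I then read off the three cases from Corollary~\ref{cor:criterion}.
\begin{itemize}[nosep]
\item If $\rho(M)>1$, then $\tau_*(L)\to\rho(M)>1=\lim b_L$. Hence $\tau_*(L)>b_L$ for all large $L$, and the second implication of Corollary~\ref{cor:criterion} gives infeasibility.
\item If $\rho(M)<1$, then $\tau_*(L)<b_L$ eventually. The first implication (which holds in the degenerate case too) gives a gauge with worst layer constant $<b_L$.
\end{itemize}

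For $\rho(M)=1$ I have to show that the limit alone does not decide feasibility. By Corollary~\ref{cor:criterion}, at the level of $\tau_*$, feasibility reduces to comparing $\opi{M^L}$ with $b_L^L\approx e^c$. I would give two examples with $\rho=1$ and opposite behaviour. For $M=I$ we have $\opi{M^L}=1\le e^c$, so the budget is feasible, and attained with $d_\ell=\one$. For $M=\begin{psmallmatrix}1&1\\0&1\end{psmallmatrix}$ we have $\opi{M^L}=L+1$, which eventually exceeds $b_L^L$, so the budget is infeasible. In general, polynomial growth of $\opi{M^L}$ coming from nontrivial Jordan blocks at the peripheral eigenvalue makes the budget infeasible, while bounded powers keep it feasible. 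This justifies the phrase ``decided by the growth of $\opi{M^L}$''.

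The only delicate step is the boundary case $\tau_*=b_L$, where Corollary~\ref{cor:criterion} ties feasibility to attainment. I would handle it by remarking that the asymptotic statements concern all sufficiently large $L$, where the inequalities are strict in the cases $\rho\ne1$. In the case $\rho=1$ the statement is a claim of non-decidability, and the examples above exhibit it. I expect no other obstacle.
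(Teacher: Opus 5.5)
Your proposal is correct and follows the paper's route. Theorem~\ref{thm:balance}, with Appendix~\ref{sec:degenerate} for the degenerate cases, gives $\Bo=\opi{M^L}^{1/L}$ exactly, and Gelfand's formula gives the limit. The Jordan block $\begin{psmallmatrix}1&K\\0&1\end{psmallmatrix}$ with $\opi{M^L}=1+LK$ is precisely the paper's example for $\rho(M)=1$. Your use of Corollary~\ref{cor:criterion} and the extra example $M=I$ only make the feasibility reading explicit.

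Your side remark that bounded powers keep the budget feasible is too strong at fixed $c$, though. Take $M=\begin{psmallmatrix}1&K\\0&0\end{psmallmatrix}$: it satisfies $M^L=M$ and $\rho(M)=1$, yet $\opi{M^L}=1+K>e^c>(1+c/L)^L$ whenever $c<\log(1+K)$, so the budget is infeasible for every $L$. The correct criterion is the comparison of $\opi{M^L}$ with $(1+c/L)^L$, which you state just before that remark.
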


\begin{example}[The boundary case is real]
For $M=\begin{psmallmatrix}1&K\\0&1\end{psmallmatrix}$ one has $\opi{M^L}=1+LK$, hence
$\Bo=(1+LK)^{1/L}=1+L^{-1}\log(LK)+o(L^{-1})$,
which does not fit into $1+c/L$ for any fixed $c$, although $\rho(M)=1$.
\end{example}

\begin{remark}[Related work and the status of the novelty claim]\label{rem:relwork}
For a \emph{single} square irreducible nonnegative $M$, the identity $\inf_D\opi{D^{-1}MD}=\rho(M)$, with the optimal $D$ built from the Perron vector, is classical \cite{Bauer1963,Osborne1960}; iterative balancing in other norms is the Osborne--Parlett--Reinsch family. In the neural-network literature, rescaling-invariant reparametrisations of hidden units and the associated path-based quantities are studied in \cite{PathSGD}, and a different balancing objective ($\ell^2$ weight norm within a rescaling class, solved iteratively) in \cite{EquiNorm}. Theorem~\ref{thm:balance} is the \emph{layered, inhomogeneous, rectangular} problem induced by weighted $\ell^\infty$ KAN Lipschitz constants, with an exact value, a closed-form minimiser and a complete treatment of degeneracies. Three further lines deserve mention, because each is close in spirit.

First, Collatz--Wielandt theory has been developed beyond a single square matrix: Friedland \cite{Friedland} studies the Collatz--Wielandt quotient for \emph{pairs} of nonnegative operators between cones, including rectangular nonnegative matrices. The object there is a quotient attached to two operators, not a minimax along a prescribed chain with intermediate norms and fixed endpoints.

Second, nonlinear Perron--Frobenius theory for order-preserving multi-homogeneous maps \cite{PFmulti} proves Collatz--Wielandt principles for the corresponding nonlinear spectral radius, and the recursion $u_{\ell+1}=M_\ell u_\ell$ of Theorem~\ref{thm:balance} is recognisably of that family. That theory answers an eigenvalue question; ours is a boundary-value optimisation along a finite chain, with no asymptotic eigenvector.

Third, and closest of all in form, the \emph{multinorm} framework for constrained switching systems assigns different norms to the nodes of an automaton and requires every transition to be contractive, which characterises the constrained joint spectral radius \cite{Philippe}. The comparison is instructive. Multinorm and path-complete constructions control \emph{all} admissible products of a family and their asymptotic growth, and the endpoint norms are part of the unknown certificate; our problem concerns \emph{one} prescribed finite chain, restricts the intermediate norms to diagonal weighted $\ell^\infty$ norms, and fixes both endpoint norms, $d_0=d_L=\one$. It is precisely this fixing of both ends that makes the finite-chain minimax nontrivial and gives it a closed form. Relatedly, the joint spectral radius of a family is an infimum over norms of a supremum over all products and is hard to compute, whereas here the product is a single fixed ordered chain.

To the best of our knowledge, the finite-horizon fixed-endpoint problem solved in Theorem~\ref{thm:balance}, together with its treatment of rectangular chains, degeneracies and perturbations, has not appeared previously.
\end{remark}

\section{A constructive upper bound: stable spline discretisation}
\label{sec:Seta}

\begin{theorem}[\texorpdfstring{$S_\eta$}{S-eta}]\label{thm:Seta}
Fix $c,\eta,R>0$, $L$ and $k$, and set $\rho:=e^{c+\eta}R$, $h=2\rho/G$. Assume, on the boxes $z_\ell+[-\rho,\rho]^{n_\ell}$ and after centering (Lemma~\ref{lem:centering}):
\begin{enumerate}[label=\textup{(H\arabic*)},nosep]
\item $\psi_{\ell,ji}\in C^{k+1}$, $\psi_{\ell,ji}(0)=0$, $\widehat\psi_{\ell,ji}(0)=0$;
\item $\lambda_\ell\le 1+c/L$ for all $\ell$;
\item the estimates \eqref{eq:Qest} for $Q_{G,k}$;
\item $\sum_\ell C_{1,k}\kappa_\ell h^k\le\eta$.
\end{enumerate}
Then both trajectories issued from $K_0$ stay inside those boxes, and
\[
\Lip_\infty(\widehat F;K_0)\le e^{c+\eta},
\qquad
\|F-\widehat F\|_{C^0(K_0)}\le 2C_{0,k}\,e^{c+\eta}h^{k+1}\sum_\ell\kappa_\ell .
\]
\end{theorem}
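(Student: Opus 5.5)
We work entirely in the centred picture of Lemma~\ref{lem:centering}. Write $\Psi_\ell$ for the centred exact layers and $\widehat\Psi_\ell$ for the discretised ones, with edges $\widehat\psi_{\ell,ji}:=Q^0_h\psi_{\ell,ji}$ computed on $[-\rho,\rho]$. The exact and approximate trajectories of an input $x\in K_0$ are then
\[
v_0=\widehat v_0=x,\qquad v_{\ell+1}=\Psi_\ell(v_\ell),\qquad \widehat v_{\ell+1}=\widehat\Psi_\ell(\widehat v_\ell).
\]
The argument is an induction on $\ell$, run on both trajectories simultaneously. At each step we carry three facts: confinement to the box $[-\rho,\rho]^{n_\ell}$, a Lipschitz bound, and an error bound.

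\emph{Step 1: layer constants.} By Proposition~\ref{prop:winf} with trivial gauge, $\Lip_\infty(\Psi_\ell)=\lambda_\ell\le 1+c/L$ on the box. For the discrete layer, the derivative estimate in \eqref{eq:Qest}, which is unchanged by anchoring, gives entrywise
\[
\widehat M_{\ell,ji}\le M_{\ell,ji}+C_{1,k}h^kH_{\ell,ji}.
\]
Summing rows and applying \eqref{eq:rowsum} yields
\[
\widehat\lambda_\ell\le\lambda_\ell+C_{1,k}h^k\kappa_\ell .
\]
Write $\eta_\ell:=C_{1,k}h^k\kappa_\ell$, so that (H4) reads $\sum_\ell\eta_\ell\le\eta$. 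Then
\[
\prod_\ell\lambda_\ell\le(1+c/L)^L\le e^c,
\qquad
\prod_\ell\widehat\lambda_\ell\le\prod_\ell(1+c/L+\eta_\ell)\le e^{c+\eta},
\]
using $1+a\le e^a$. The same bound holds for every partial product.

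\emph{Step 2: confinement.} Since $\Psi_\ell(0)=0$ and $\widehat\Psi_\ell(0)=0$ (the latter by (H1) and anchoring), the Lipschitz bounds give
\[
\|v_\ell\|_\infty\le e^{c}R\le\rho,\qquad \|\widehat v_\ell\|_\infty\le e^{c+\eta}R=\rho,
\]
provided all earlier iterates lay in their boxes. This is what the induction hypothesis supplies. The Lipschitz constants were computed on the boxes, so confinement at step $\ell$ is exactly what licenses using $\widehat\lambda_\ell$ at step $\ell+1$. Composing Lipschitz maps on the nested boxes then gives $\Lip_\infty(\widehat F;K_0)\le e^{c+\eta}$.

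\emph{Step 3: error telescoping.} We split
\[
\widehat v_{\ell+1}-v_{\ell+1}=\bigl(\widehat\Psi_\ell(\widehat v_\ell)-\Psi_\ell(\widehat v_\ell)\bigr)+\bigl(\Psi_\ell(\widehat v_\ell)-\Psi_\ell(v_\ell)\bigr).
\]
The first term is bounded by the anchored $C^0$ estimate summed over a row, namely by $\varepsilon_\ell:=2C_{0,k}h^{k+1}\kappa_\ell$. Here it matters that $\widehat v_\ell$ lies in the box, which is the discrete-trajectory confinement from Step 2. The second term is at most $\lambda_\ell\|\widehat v_\ell-v_\ell\|_\infty$. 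Unrolling, we get
\[
\|e_L\|\le\sum_\ell\Bigl(\prod_{r>\ell}\lambda_r\Bigr)\varepsilon_\ell\le e^c\sum_\ell\varepsilon_\ell\le e^{c+\eta}\sum_\ell\varepsilon_\ell,
\]
which is the stated bound. One could equally split the other way, with the $\widehat\lambda_r$ multiplying the error; that version gives $e^{c+\eta}$ directly.

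\emph{Main obstacle.} The delicate point is the order of the induction. The discrete layer's Lipschitz and error bounds are only valid on the box, and the box radius $\rho=e^{c+\eta}R$ was chosen precisely to contain the discrete trajectory. So we must establish confinement of $\widehat v_\ell$ before invoking \eqref{eq:Qest} at layer $\ell$. We also have to check that the Lipschitz product bound is applied only to partial products, each of which is bounded by the full exponential. A second, minor point is that the anchored operator $Q^0_h$ only doubles $C_{0,k}$. This is why the factor $2C_{0,k}$ appears, and why $\widehat\Psi_\ell(0)=0$ is available for the confinement bound.
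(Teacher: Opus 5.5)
Your proposal is correct and follows the paper's proof: the entrywise envelope bound $\widehat M_\ell\le M_\ell+C_{1,k}h^kH_\ell$ together with Proposition~\ref{prop:winf}, confinement of both trajectories by induction from $\Psi_\ell(0)=\widehat\Psi_\ell(0)=0$, and an unrolled error recursion. The only difference is the error split: you evaluate the defect on the discrete trajectory and propagate it with the exact $\lambda_r$, while the paper evaluates it on the exact trajectory and propagates it with $\widehat\lambda_r$; given confinement both are valid, and yours gives the slightly sharper factor $e^{c}$ in place of $e^{c+\eta}$.
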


\begin{proof}
Write $A_\ell=C_{1,k}\kappa_\ell h^k$.

\emph{Step 1 (inheritance).} Entrywise $\widehat M_{\ell,ji}\le M_{\ell,ji}+C_{1,k}h^kH_{\ell,ji}$, so by Proposition~\ref{prop:winf}
\[
\widehat\lambda_\ell=\opi{\widehat M_\ell}\le\lambda_\ell+C_{1,k}h^k\kappa_\ell=\lambda_\ell+A_\ell\le\Bigl(1+\frac cL\Bigr)(1+A_\ell).
\]
Note that $\kappa_\ell=\opi{H_\ell}$ is exactly the functional produced by $\max_j\sum_i$.

\emph{Step 2 (Lipschitz bound).} $\prod_\ell\widehat\lambda_\ell\le e^ce^{\sum_\ell A_\ell}\le e^{c+\eta}$ by (H4). Since the boxes are convex and, by Step 3, contain the images of $K_0$ under all partial compositions, the layer bounds compose, giving $\Lip_\infty(\widehat F;K_0)\le e^{c+\eta}$.

\emph{Step 3 (localisation).} Since $\widehat\Psi_\ell(0)=0$ we have $\|\hat u_{\ell+1}\|_\infty\le\widehat\lambda_\ell\|\hat u_\ell\|_\infty$, hence by induction $\|\hat u_\ell\|_\infty\le e^{c+\eta}R=\rho$ for all $\ell\le L$; the same holds for the exact trajectory, with $e^c$. There is no circularity: $\rho$ is fixed first and (H2), (H4) are hypotheses \emph{on} the boxes of radius $\rho$.

\emph{Step 4 (error).} With $e_\ell=\hat u_\ell-u_\ell$,
\[
e_{\ell+1}=\bigl[\widehat\Psi_\ell(\hat u_\ell)-\widehat\Psi_\ell(u_\ell)\bigr]+\bigl[(\widehat\Psi_\ell-\Psi_\ell)(u_\ell)\bigr],
\qquad
\|e_{\ell+1}\|_\infty\le\widehat\lambda_\ell\|e_\ell\|_\infty+\varepsilon_\ell,
\]
with $\varepsilon_\ell\le2C_{0,k}h^{k+1}\kappa_\ell$; both arguments lie in the boxes by Step 3. Unrolling,
$\|e_L\|_\infty\le\sum_\ell(\prod_{m>\ell}\widehat\lambda_m)\varepsilon_\ell\le e^{c+\eta}\sum_\ell\varepsilon_\ell$.
\end{proof}

\begin{remark}
The conclusion is local by construction: all hypotheses live on boxes of radius $\rho$ around the reference trajectory, so only $\Lip(\widehat F;K_0)$ is proved. A global statement would require a global extension of the splines and does not follow from (H1)--(H4).
\end{remark}

\begin{corollary}[Grid threshold]\label{cor:threshold}
If $\kappa_\ell\le\kappa$ then (H4) reads
$G\ge 2e^{c+\eta}R\bigl(C_{1,k}\kappa L/\eta\bigr)^{1/k}$,
and then $\|F-\widehat F\|_{C^0(K_0)}\le 2C_{0,k}e^{c+\eta}\kappa Lh^{k+1}$.
\end{corollary}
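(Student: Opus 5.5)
The corollary is a direct specialisation of Theorem~\ref{thm:Seta}, and the plan is to solve (H4) for $G$ and then read the error bound off the theorem.

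First, bound the sum in (H4). With $\kappa_\ell\le\kappa$ for every $\ell$,
\[
\sum_\ell C_{1,k}\kappa_\ell h^k\le C_{1,k}\kappa L h^k .
\]
It therefore suffices to have $C_{1,k}\kappa L h^k\le\eta$, which is $h\le(\eta/(C_{1,k}\kappa L))^{1/k}$. Next, substitute $h=2\rho/G$ and $\rho=e^{c+\eta}R$. The condition becomes
\[
G\ge 2e^{c+\eta}R\bigl(C_{1,k}\kappa L/\eta\bigr)^{1/k},
\]
using that $t\mapsto t^{1/k}$ is monotone. In the uniform case $\kappa_\ell\equiv\kappa$ this is an exact equivalence. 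In general the word ``reads'' should be understood as a sufficient condition, and I would state it that way.

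Under this grid condition, (H1)--(H3) are assumed as in the theorem and (H4) holds. Theorem~\ref{thm:Seta} then gives
\[
\|F-\widehat F\|_{C^0(K_0)}\le 2C_{0,k}e^{c+\eta}h^{k+1}\sum_\ell\kappa_\ell\le 2C_{0,k}e^{c+\eta}\kappa L h^{k+1}.
\]

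There is essentially no obstacle. The only subtlety is the quantifier order from Remark~\ref{rem:anchor}: $\kappa$ must be a bound on the curvature envelopes over the boxes of radius $\rho$, and $\rho$ is fixed before $G$. There is therefore no circularity in choosing $G$ from $\kappa$, because $\kappa$ does not depend on $G$.
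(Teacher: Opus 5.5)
Your proposal is correct and is the intended argument: bound the sum in (H4) by $C_{1,k}\kappa Lh^k$, substitute $h=2\rho/G$ with $\rho=e^{c+\eta}R$ to solve for $G$, and read the error bound off Theorem~\ref{thm:Seta}. The paper states the corollary without a separate proof because it is exactly this substitution, and your two remarks are both accurate: ``reads'' is only a sufficient reformulation unless $\kappa_\ell\equiv\kappa$, and there is no circularity because $\kappa$ is fixed on boxes of radius $\rho$ before $G$ is chosen.
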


\begin{corollary}[Where the width enters]
An absolute curvature bound $\|H_{\ell,ji}\|_\infty\le B$ gives $\kappa_\ell\le n_\ell B$; a rowwise relative bound $\kappa_\ell\le B\lambda_\ell$ gives $\kappa_\ell\le B(1+c/L)$. Both classes are legitimate: small-amplitude high-frequency edges are a genuine object, and the factor $n$ in the first class is real, not an artefact of normalisation.
\end{corollary}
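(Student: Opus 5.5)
The statement to prove is the last corollary ("Where the width enters"). It has two parts: an absolute curvature bound $\|H_{\ell,ji}\|_\infty\le B$ gives $\kappa_\ell\le n_\ell B$, and a rowwise relative bound gives $\kappa_\ell\le B(1+c/L)$. In both parts the plan is to evaluate $\kappa_\ell=\opi{H_\ell}$ through the row-sum identity \eqref{eq:rowsum}, which applies because $H_\ell\ge0$ entrywise by definition.

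\emph{Absolute bound.} By \eqref{eq:rowsum}, $\kappa_\ell=\max_j\sum_{i=1}^{n_\ell}H_{\ell,ji}$. Each row has exactly $n_\ell$ entries, since $H_\ell$ is $n_{\ell+1}\times n_\ell$, and each entry is at most $B$. Hence every row sum is at most $n_\ell B$, and so is the maximum over $j$.

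\emph{Relative bound.} Here I read the rowwise hypothesis as $\sum_iH_{\ell,ji}\le B\sum_iM_{\ell,ji}$ for every $j$. Taking $\max_j$ on both sides and applying \eqref{eq:rowsum} to $H_\ell$ and to $M_\ell$ gives $\kappa_\ell\le B\lambda_\ell$. Hypothesis (H2) of Theorem~\ref{thm:Seta} then gives $\lambda_\ell\le1+c/L$, which yields the second bound. If instead one starts from the aggregated form $\kappa_\ell\le B\lambda_\ell$, only the last step is needed.

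The remaining claim is that the factor $n$ is real. I would show this with a single-layer example: take $\varphi_{ji}(t)=a\sin(t/a)$ for all $i$ on a box of half-width at least $\pi a/2$. Then $M_{ji}=1$ and $H_{ji}=a^{-k}$, so the absolute bound is met with $B=a^{-k}$, and $\kappa=n\,a^{-k}$ exactly, so the width factor is attained. By contrast, the Lipschitz constant is $\lambda=n$, so this example does not satisfy the relative hypothesis with $B=a^{-k}$.

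There is no real obstacle in the argument. The only point requiring care is that \eqref{eq:rowsum} needs nonnegativity, and this holds automatically because the entries of $H_\ell$ and $M_\ell$ are sup-norms.
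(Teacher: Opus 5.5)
The two inequalities are proved correctly. Your argument is the one the paper leaves implicit: $\kappa_\ell=\opi{H_\ell}$ is a maximal row sum by \eqref{eq:rowsum}, each of the $n_\ell$ entries is at most $B$, and in the relative case (H2) of Theorem~\ref{thm:Seta} bounds $\lambda_\ell$.

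The gap is in your last paragraph. The example is meant to show that the factor $n$ is real, and it does not. With $\varphi_{ji}(t)=a\sin(t/a)$ you have $M_{ji}=1$, so $\lambda=n$. The layer therefore violates the budget (H2) as soon as $n-1>c/L$. The width factor you exhibit is carried entirely by the layer Lipschitz constant, which is exactly what the budget of Theorem~\ref{thm:Seta} excludes. All the example shows is that the bare inequality $\kappa\le nB$ can be an equality.

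Your assertion that the example fails the relative hypothesis with $B=a^{-k}$ is also false. Entrywise $H_{ji}=a^{-k}M_{ji}$, so $\sum_iH_{ji}=na^{-k}=B\sum_iM_{ji}$, and the relative hypothesis holds with equality. What fails is (H2), not the relative bound. So your example lies in both classes and separates neither.

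The fix is to decouple amplitude from frequency. Take $\varphi_{ji}(t)=\epsilon\sin(\omega t)$ on boxes of half-width at least $\pi/(2\omega)$, with $\epsilon\omega^{k+1}=B$ and $\omega^k\ge nB$. Then $H_{ji}=B$, so $\kappa=nB$. Also $M_{ji}=\epsilon\omega=B\omega^{-k}\le 1/n$, so $\lambda\le1$ and the budget holds. Finally $\kappa/\lambda=\omega^k$ can be made arbitrarily large, so no relative bound with a constant independent of the edges covers this family. These small-amplitude, high-frequency edges make the width factor survive under the budget, and they are what distinguishes the absolute class from the relative one.
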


Theorem~\ref{thm:Seta} is a sufficiency statement for the chosen $Q_{G,k}$, not a necessity statement; see Section~\ref{sec:regimes} and Open problem~\ref{op:overshoot}.

\begin{remark}[Anisotropic transfer]\label{rem:aniso}
In weighted norms the correct objects, for $h_{\ell,i}=2\rho d_{\ell,i}/G_{\ell,i}$, are
\[
\begin{gathered}
A^{(d)}_\ell=C_{1,k}\max_j\frac1{d_{\ell+1,j}}\sum_i d_{\ell,i}H_{\ell,ji}h_{\ell,i}^{\,k},\\
\varepsilon^{(d)}_\ell=2C_{0,k}\max_j\frac1{d_{\ell+1,j}}\sum_i H_{\ell,ji}h_{\ell,i}^{\,k+1}.
\end{gathered}
\]
The factor $d_{\ell,i}$ appears in $A^{(d)}$ because the derivative is taken in the weighted coordinate, and is absent from $\varepsilon^{(d)}$ because the $C^0$ error is absolute in the output coordinate and is only divided by $d_{\ell+1,j}$. Weighted norms thus induce coordinate-dependent spline domains and permit adaptive anisotropic grid allocation; optimising $G_{\ell,i}$ under $\sum_iG_{\ell,i}=G_\ell$ is a separate step, not carried out here, and no claim that anisotropy is strictly better is made before it.
\end{remark}

\section{Lower bounds for budget-compatible spline approximation}
\label{sec:rigidity}

Throughout this section $Q_h:C^1(J)\to\Scal_{k,\Delta_h}$ is linear and reproduces affine functions, and \emph{Lip-diminishing} means
\begin{equation}\label{eq:lipdim}
\|(Q_hf)'\|_{L^\infty}\le\|f'\|_{L^\infty}\qquad\text{for all }f\in C^1(J).
\end{equation}
Since elements of $\Scal_{k,\Delta_h}$ are only continuous, $(Q_hf)'$ is understood on the interiors of the grid cells, where it is a polynomial of degree $\le k-1$; all pointwise statements below are for $t$ in the interior of a cell. This convention makes the theory cover continuous piecewise linear interpolation as well as smooth spline quasi-interpolants.

\begin{theorem}[Positive-averaging representation]\label{thm:T4a}
Under the above hypotheses, $(Q_hf)'$ depends on $f$ only through $f'$, and for every $t$ interior to a cell there is a Borel measure $\mu_t$ on $J$ with
\[
(Q_hf)'(t)=\int_J f'\,d\mu_t,\qquad \mu_t\ge0,\qquad \mu_t(J)=1 .
\]
\end{theorem}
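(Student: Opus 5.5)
Fix $t$ in the interior of a cell and study the linear functional $\ell_t(f):=(Q_hf)'(t)$ on $C^1(J)$. On each open cell $Q_hf$ is a polynomial, so $(Q_hf)'(t)$ is a well-defined linear function of $f$. By the Lip-diminishing property \eqref{eq:lipdim}, taking the supremum over interior points of cells,
\[
|\ell_t(f)|\le\|(Q_hf)'\|_{L^\infty}\le\|f'\|_{L^\infty}.
\]
Here we should check that $(Q_hf)'$, being a polynomial on each cell, has its pointwise value at $t$ bounded by its essential supremum. This holds by continuity of polynomials on the open cell.

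\textbf{Step 1: $\ell_t$ factors through $f'$.} If $f'=g'$ then $f-g$ is constant. Constants are affine, so $Q_h(f-g)=f-g$ is constant and $\ell_t(f-g)=0$. Alternatively, apply the bound directly: $|\ell_t(f-g)|\le\|(f-g)'\|_\infty=0$. Thus $\ell_t(f)=\Lambda_t(f')$ for a well-defined linear functional $\Lambda_t$ on $C(J)$. This uses that every $v\in C(J)$ is $f'$ for $f(x)=\int_a^xv$, which lies in $C^1(J)$. The bound gives $|\Lambda_t(v)|\le\|v\|_\infty$, so $\Lambda_t$ is bounded with norm at most $1$.

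\textbf{Step 2: normalisation.} Reproduction of affine functions gives $Q_h(x)=x$. Hence $\Lambda_t(\one)=\ell_t(x)=1$, and so $\|\Lambda_t\|=1$.

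\textbf{Step 3: positivity.} This is the main point, and I expect the only real content. A bounded functional on $C(J)$ with $\|\Lambda\|=\Lambda(\one)=1$ is positive; this is the standard characterisation. To see it, take $0\le v\le 1$ and suppose $\Lambda_t(v)=\alpha+i\beta$. (The spaces are real, so we can argue directly.) Apply the norm bound to $\one-2v$, which satisfies $\|\one-2v\|_\infty\le1$:
\[
|1-2\Lambda_t(v)|\le 1,
\]
so $\Lambda_t(v)\ge0$. Scaling extends this to all $v\ge0$.

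\textbf{Step 4: conclusion.} By the Riesz representation theorem on the compact interval $J$, $\Lambda_t(v)=\int_Jv\,d\mu_t$ for a positive Borel (Radon) measure $\mu_t$, with $\mu_t(J)=\Lambda_t(\one)=1$. Substituting $v=f'$ gives the claim.

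The only subtleties are in Step 1, namely that $\Lambda_t$ is well defined on all of $C(J)$, and in Step 3, namely that the norm-one-plus-unit condition forces positivity. Both are elementary once the bound $|\ell_t(f)|\le\|f'\|_\infty$ is in hand.
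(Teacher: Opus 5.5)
Your proposal is correct and follows the paper's argument: you factor $(Q_hf)'(t)$ through $f'$, bound the resulting functional on $C(J)$ by $1$ via \eqref{eq:lipdim}, normalise it by reproduction of $f(s)=s$, and apply Riesz. The only difference is where positivity is obtained: you get it at the level of the functional from $|1-2\Lambda_t(v)|\le1$ and then invoke the positive Riesz theorem, whereas the paper applies Riesz first to obtain a signed $\mu_t$ with $\|\mu_t\|_{TV}\le1$ and reads off $\mu_t\ge0$ from $\mu_t(J)=1=\|\mu_t\|_{TV}$; the stray ``$\alpha+i\beta$'' line in your Step~3 is superfluous (everything is real) and should be deleted.
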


\begin{proof}
Reproduction of constants gives $Q_h(f+\mathrm{const})=Q_hf+\mathrm{const}$, hence $f_1'=f_2'$ implies $(Q_hf_1)'=(Q_hf_2)'$. Therefore $T_t(g):=(Q_hf_g)'(t)$, with $f_g(x)=\int_{\inf J}^xg$, is a well-defined linear functional on $C(J)$, bounded by $1$ by \eqref{eq:lipdim}. By the Riesz representation theorem $T_t(g)=\int g\,d\mu_t$ with $\|\mu_t\|_{TV}\le1$. Reproduction of $f(s)=s$ gives $T_t(1)=1$, i.e.\ $\mu_t(J)=1$. From $1=\mu_t(J)\le\|\mu_t\|_{TV}\le1$ we get equality of the mass and the total variation, hence $\mu_t\ge0$.
\end{proof}

No locality assumption is used.

\begin{lemma}[Moments]\label{lem:moments}
Put $m_1(t)=\int(s-t)\,d\mu_t$ and $m_2(t)=\int(s-t)^2\,d\mu_t\ge0$, and let $f_2(x)=x^2/2$, $f_3(x)=x^3/3$. Then
\[
\begin{gathered}
(Q_hf_2)'=f_2'\ \Longrightarrow\ m_1\equiv0,\\
\bigl[(Q_hf_2)'=f_2'\ \text{and}\ (Q_hf_3)'=f_3'\bigr]\ \Longrightarrow\ m_2\equiv0 .
\end{gathered}
\]
Exact reproduction $Q_hf_2=f_2$, $Q_hf_3=f_3$ is of course sufficient for the hypotheses.
\end{lemma}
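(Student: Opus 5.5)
The plan is to substitute $f_2$ and $f_3$ into the positive-averaging representation of Theorem~\ref{thm:T4a} and read off the moments directly. Fix $t$ in the interior of a cell, with its probability measure $\mu_t$, so that $(Q_hf)'(t)=\int_J f'\,d\mu_t$. For $f_2$ we have $f_2'(s)=s$. The hypothesis $(Q_hf_2)'(t)=f_2'(t)=t$ then reads $\int s\,d\mu_t=t$. Because $\mu_t(J)=1$, this is the same as $\int(s-t)\,d\mu_t=0$, that is, $m_1(t)=0$. This gives the first implication.

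For the second implication, use $f_3'(s)=s^2$. The hypothesis $(Q_hf_3)'(t)=t^2$ gives $\int s^2\,d\mu_t=t^2$. Expanding $(s-t)^2=s^2-2ts+t^2$ and using the zeroth and first moments, $\mu_t(J)=1$ and $\int s\,d\mu_t=t$, we get
\[
m_2(t)=\int s^2\,d\mu_t-2t\int s\,d\mu_t+t^2=t^2-2t^2+t^2=0 .
\]
The inequality $m_2\ge0$ in the statement holds simply because $\mu_t\ge0$ and $(s-t)^2\ge0$, which is where the positivity part of Theorem~\ref{thm:T4a} enters.

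Two bookkeeping points need checking. First, $f_2,f_3\in C^1(J)$, so they lie in the domain of $Q_h$ and the representation applies to them. Second, all identities are pointwise, for $t$ interior to cells, which is the convention fixed at the start of the section; so "$\equiv0$" is understood on the union of cell interiors.

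The closing remark of the statement is immediate. If $Q_hf_2=f_2$ then differentiating on cell interiors gives $(Q_hf_2)'=f_2'$, and likewise for $f_3$.

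There is no real obstacle in this argument. The only step that needs care is that the second implication relies on the first one, through $\int s\,d\mu_t=t$, in expanding the square. That is why both reproduction hypotheses are required for $m_2\equiv0$.
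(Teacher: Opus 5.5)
Your proposal is correct and follows essentially the same route as the paper: substitute $f_2'(s)=s$ and $f_3'(s)=s^2$ into the representation of Theorem~\ref{thm:T4a} to read off $\int s\,d\mu_t=t$ and $\int s^2\,d\mu_t=t^2$. Then expand $(s-t)^2$ using $\mu_t(J)=1$, exactly as you do.
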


\begin{proof}
$(Q_hf_2)'(t)=t$ reads $\int s\,d\mu_t=t$, i.e.\ $m_1(t)=0$. Then $(Q_hf_3)'(t)=t^2$ reads $\int s^2d\mu_t=t^2$, whence
$m_2(t)=\int s^2d\mu_t-2t\int s\,d\mu_t+t^2=t^2-2t^2+t^2=0$.
\end{proof}

\begin{lemma}[Exact reproduction from an order hypothesis]\label{lem:repro}
\emph{(a)} If $Q_h$ satisfies the derivative estimate
$\|(Q_hf)'-f'\|_\infty\le Ch^r\|f^{(r+1)}\|_\infty$
with $r\ge2$, then $(Q_hf_2)'=f_2'$; if $r\ge3$, also $(Q_hf_3)'=f_3'$.
\emph{(b)} If $Q_h$ satisfies the $C^0$ estimate
$\|Q_hf-f\|_\infty\le Ch^r\|f^{(r)}\|_\infty$
with $r\ge4$, then $Q_hf=f$ for every polynomial $f$ of degree $\le r-1$, in particular $Q_hf_2=f_2$ and $Q_hf_3=f_3$.
\end{lemma}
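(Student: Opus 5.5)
The argument is a scaling argument: apply the estimate to a polynomial whose relevant derivative vanishes, and use that the defect lies in a fixed finite-dimensional space. For (a), take $f=f_2$. Then $f_2'''\equiv0$, so for $r\ge2$ the right-hand side $Ch^r\|f_2^{(r+1)}\|_\infty$ vanishes. Hence $(Q_hf_2)'=f_2'$ on the interior of every cell, as an identity in $L^\infty$. The same argument with $f_3$ and $r\ge3$ gives $(Q_hf_3)'=f_3'$, since $f_3^{(4)}\equiv0$.

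For (b), take a polynomial $p$ of degree $\le r-1$. Then $p^{(r)}\equiv0$, so $\|Q_hp-p\|_\infty\le0$ and $Q_hp=p$ directly. With $r\ge4$ this covers $f_2$ and $f_3$, which have degree $2$ and $3$. The only care needed is that the estimates are meant for each fixed $h$ and on all of $C^{r}(J)$ (or $C^{r+1}(J)$), which contains the polynomials. If the estimate is only assumed on a smaller class of functions, I would instead approximate $p$ within that class, pass to the limit using linearity and the estimate itself, and use continuity of $Q_h$. In the present setting this is not needed.

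The main obstacle is interpretive rather than technical. The point to check is that the constant $C$ does not depend on $f$, so that a zero right-hand side forces exact equality. A second point is that the hypothesis $r\ge4$ in (b) is only what the statement uses: the same argument gives reproduction of degree $\le r-1$ for any $r$, and $r\ge4$ is simply the threshold at which the cubic is included.

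Finally, for the derivative conclusions in (a), the equality $(Q_hf_2)'=f_2'$ is meant cellwise on interiors, following the convention of Section~\ref{sec:rigidity}. This is exactly what Lemma~\ref{lem:moments} requires.
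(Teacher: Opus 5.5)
Your proof is correct and is the paper's argument: for $f_2,f_3$ in (a), and for $\deg p\le r-1$ in (b), the relevant derivative vanishes identically, so the right-hand side of the estimate is zero and the defect vanishes (for the derivatives in (a), cellwise on interiors). The ``scaling'' framing and the remark about approximating within a smaller class are not needed, and even independence of $C$ from $f$ is irrelevant here---only finiteness of $C$ is used.
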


\begin{proof}
In (a), $f_2^{(r+1)}=0$ for $r\ge2$ and $f_3^{(r+1)}=0$ for $r\ge3$; in (b), $f^{(r)}=0$ for $\deg f\le r-1$. In both cases the right-hand side of the estimate vanishes.
\end{proof}

Note that Lemma~\ref{lem:moments} is stated in terms of exact reproduction, so that it applies verbatim in both settings of Lemma~\ref{lem:repro}; the derivative estimate is \emph{not} assumed in the $C^0$ case.

\begin{theorem}[Order barrier; rigidity]\label{thm:T4b}
Let $Q_h:C^1(J)\to\Scal_{k,\Delta_h}$ be linear, Lip-diminishing, reproduce affine functions, and satisfy either
\begin{enumerate}[label=\textup{(\roman*)},nosep]
\item the derivative estimate of Lemma~\ref{lem:repro}(a) with $r\ge3$, or
\item the $C^0$ estimate of Lemma~\ref{lem:repro}(b) with $r\ge4$.
\end{enumerate}
Then $Q_hf-f$ is constant for every $f\in C^1(J)$, and consequently $\Scal_{k,\Delta_h}=C^1(J)$, which is false. Hence:
\begin{center}
\emph{a linear Lip-diminishing operator with values in $\Scal_{k,\Delta_h}$ has derivative order at most $2$ and $C^0$ order at most $3$.}
\end{center}
\end{theorem}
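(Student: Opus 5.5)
The plan is to reduce both hypotheses to the moment conditions of Lemma~\ref{lem:moments}, force each averaging measure to be a Dirac mass at its own point, and conclude that $(Q_hf)'=f'$ identically, which no spline space can accommodate.

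First, I place myself in the setting of Theorem~\ref{thm:T4a}: for every $t$ interior to a cell there is a probability measure $\mu_t\ge0$ on $J$ with $(Q_hf)'(t)=\int_J f'\,d\mu_t$. Under (i), Lemma~\ref{lem:repro}(a) with $r\ge3$ gives $(Q_hf_2)'=f_2'$ and $(Q_hf_3)'=f_3'$. Under (ii), Lemma~\ref{lem:repro}(b) with $r\ge4$ gives $Q_hf_2=f_2$ and $Q_hf_3=f_3$, and differentiating on cell interiors yields the same two identities. In either case Lemma~\ref{lem:moments} gives $m_2(t)=\int(s-t)^2\,d\mu_t=0$ for every interior $t$.

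Since $\mu_t\ge0$ and the integrand $(s-t)^2$ is nonnegative and vanishes only at $s=t$, the measure $\mu_t$ is concentrated on $\{t\}$. Because $\mu_t(J)=1$, we get $\mu_t=\delta_t$. Hence $(Q_hf)'(t)=f'(t)$ at every point interior to a cell, for every $f\in C^1(J)$.

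The function $Q_hf-f$ is continuous on $J$, because $Q_hf$ is a continuous spline. It is $C^1$ with zero derivative on the interior of each cell, so it is constant on each closed cell. Continuity across the finitely many knots then makes it constant on all of $J$. This proves the first assertion, and it gives $f\in Q_hf+\R\subset\Scal_{k,\Delta_h}$, because $\Scal_{k,\Delta_h}$ contains constants.

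To show this is false, I need one $f\in C^1(J)$ that is not a piecewise polynomial on $\Delta_h$. I would take $f(x)=e^x$: on any cell, agreement with a polynomial of degree $\le k$ would force $f^{(k+1)}\equiv0$ there, which fails. Since the derivative or $C^0$ estimate of the stated order would lead to this contradiction, the contrapositive gives the displayed order barrier.

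The step that needs the most care is the passage from cell interiors to all of $J$. The identity $(Q_hf)'=f'$ holds only off the knots, so I rely on the continuity built into $\Scal_{k,\Delta_h}$ to glue the piecewise constants into a single constant. Everything else is direct from Theorem~\ref{thm:T4a} and Lemmas~\ref{lem:moments}--\ref{lem:repro}.
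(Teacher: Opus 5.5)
Your proof is correct and follows the paper's argument step for step: you reduce (i) and (ii) to the moment identities of Lemma~\ref{lem:moments}, use positivity to force $\mu_t=\delta_t$, glue the cellwise constants by continuity at the knots, and conclude $C^1(J)\subseteq\Scal_{k,\Delta_h}$. The only difference is cosmetic: you refute this inclusion with the explicit witness $f(x)=e^x$, whereas the paper appeals to $\Scal_{k,\Delta_h}$ being finite-dimensional; both are immediate.
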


\begin{proof}
In either case, Lemma~\ref{lem:repro} shows that $Q_h$ reproduces $f_2$ and $f_3$ in the sense required by Lemma~\ref{lem:moments}: exactly under (ii), and up to an additive constant under (i), which affects neither the derivative nor the moments. By Lemma~\ref{lem:moments}, $m_1\equiv m_2\equiv0$ on cell interiors. By Theorem~\ref{thm:T4a}, $\mu_t$ is a probability measure, and $m_2(t)=0$ forces $\supp\mu_t=\{t\}$, i.e.\ $\mu_t=\delta_t$. Hence $(Q_hf)'(t)=f'(t)$ at every interior point of every cell, so $Q_hf-f$ is constant on each cell, and being continuous on $J$ it is constant on $J$. Since $\Scal_{k,\Delta_h}$ contains the constants, this gives
\[
f=Q_hf-\mathrm{const}_f\in\Scal_{k,\Delta_h}\qquad\text{for every }f\in C^1(J),
\]
which is absurd: $\Scal_{k,\Delta_h}$ is finite-dimensional and $C^1(J)$ is not.
\end{proof}

\begin{remark}
(i) No anchoring is needed: the argument ``$f+\mathrm{const}\in\Scal\Rightarrow f\in\Scal$'' closes the case. (ii) Positivity is used essentially: without \eqref{eq:lipdim}, $m_1=m_2=0$ does not force $\mu_t=\delta_t$. (iii) The statement is pointwise in $t$ and holds for each single $h$, not merely asymptotically. (iv) Continuity of the elements of $\Scal_{k,\Delta_h}$ is used exactly once, to pass from ``constant on each cell'' to ``constant''.
\end{remark}

\begin{definition}[Reference kink profile]\label{def:kink}
Let $G_*(x):=\tfrac12x^2-x_+^2$ and
\[
c_k:=\dist_{L^\infty[-1/2,1/2]}\bigl(G_*,\Pcal_{\max(k-1,1)}\bigr).
\]
Then $c_k>0$ for every $k\ge1$, because $G_*''$ jumps at $0$ and hence $G_*$ is not a polynomial.
\end{definition}

\begin{theorem}[Quantitative range saturation]\label{thm:T4d}
Let $T$ be any map with $(Tf)'\in\Scal_{k-1,\Delta_h}$ cellwise --- in particular $T=Q_h$ as above. Then, with $c_k$ as in Definition~\ref{def:kink},
\[
\sup\bigl\{\|(Tf)'-f'\|_{\infty}:\ f\in W^{3,\infty}(J),\ \|f'''\|_{L^\infty}\le1\bigr\}\ \ge\ c_kh^2 ,
\]
and in fact $\|(Tf)'-f'\|_{L^\infty(I)}\ge c_kh^2$ on \emph{every} cell $I$, for the single function $f$ constructed in the proof.
\end{theorem}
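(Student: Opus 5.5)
The plan is to build one explicit $f$ whose derivative, on every cell $I$ with midpoint $m_I$, equals a rescaled copy of the kink profile $G_*$ plus an affine function. The distance from $f'|_I$ to $\Pcal_{k-1}$ is then at least $c_kh^2$ by Definition~\ref{def:kink} and scaling. Since $(Tf)'|_I\in\Pcal_{k-1}$ for any admissible $T$, whatever $T$ does, this gives $\|(Tf)'-f'\|_{L^\infty(I)}\ge c_kh^2$ on every cell. Nothing about $T$ beyond its range is used; linearity and Lip-diminishing are irrelevant here.

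\emph{Construction.} Let $\Gamma:\R\to\R$ be $1$-periodic with $\Gamma''=+1$ on $(-\tfrac12,0)$ and $\Gamma''=-1$ on $(0,\tfrac12)$, extended periodically.
\begin{itemize}[nosep]
\item Since $\int_{-1/2}^{1/2}\Gamma''=0$, the function $\Gamma'$ is continuous and periodic.
\item Subtracting from $\Gamma'$ its mean over a period makes $\int_0^1\Gamma'=0$, so $\Gamma$ is continuous and periodic, $\Gamma\in C^1(\R)$ and $\Gamma''\in L^\infty$ with $|\Gamma''|=1$.
\end{itemize}
On $[-\tfrac12,\tfrac12]$ we have $\Gamma''=G_*''$ a.e., because $G_*''=1-2\cdot\one_{x>0}$. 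Both functions are $C^1$, so $\Gamma-G_*$ is affine there.

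Writing $\Delta_h$ with cell midpoints $m_I=a+(i+\tfrac12)h$, I set
\[
f'(x):=h^2\,\Gamma\Bigl(\frac{x-m_{I_0}}{h}\Bigr),
\]
and take $f$ to be any antiderivative. The midpoints are spaced exactly $h$ apart, so by periodicity the kink of $\Gamma''$ at $0$ falls at every $m_I$ and the period boundaries fall at the knots. Then $f'''(x)=\Gamma''(\cdot)\in\{\pm1\}$ a.e., so $f\in W^{3,\infty}(J)$ and $\|f'''\|_{L^\infty}\le1$.

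\emph{Lower bound on each cell.} On a cell $I$, substitute $x=m_I+hy$ with $y\in[-\tfrac12,\tfrac12]$. This gives $f'(x)=h^2\bigl(G_*(y)+\alpha+\beta y\bigr)$. For any $p\in\Pcal_{k-1}$, the function $y\mapsto p(m_I+hy)/h^2-\alpha-\beta y$ lies in $\Pcal_{\max(k-1,1)}$, since the affine part is absorbed because that space contains $\Pcal_1$. For $k=1$ one uses $\Pcal_0\subset\Pcal_1$. Hence
\[
\|f'-p\|_{L^\infty(I)}\ge h^2\,\dist_{L^\infty[-1/2,1/2]}\bigl(G_*,\Pcal_{\max(k-1,1)}\bigr)=c_kh^2 .
\]
Taking $p=(Tf)'|_I$ gives the cellwise claim, and the supremum bound follows immediately.

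\emph{Main obstacle.} The only delicate point is making one global $C^1$ function $f'$ that simultaneously carries a kink at every midpoint and stays within the third-derivative budget. Periodicity handles this once the mean of $\Gamma'$ is removed; the rest is bookkeeping. I will also note that $c_k>0$ is already recorded in Definition~\ref{def:kink}, and that $(Tf)'$ is understood on cell interiors, so restricting the $L^\infty$ norm to open cells loses nothing.
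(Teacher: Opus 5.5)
Your proposal is correct and is essentially the paper's own proof. The paper also takes $f'=g$ with $g''=+1$ on the left half and $g''=-1$ on the right half of every cell, writes $g|_I=h^2G_*\bigl((x-m_I)/h\bigr)+\text{affine}$, and absorbs the affine part into $\Pcal_{\max(k-1,1)}$ exactly as you do. Two small points: making $\Gamma$ periodic is harmless but unnecessary, since any double primitive of the step function works once the cellwise affine parts are absorbed; and $\Gamma-G_*$ is affine on $[-\tfrac12,\tfrac12]$ because both functions have Lipschitz derivatives, not merely because both are $C^1$.
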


\begin{proof}
Let $g\in C^{1,1}(J)$ satisfy $g''=+1$ on the left half of each cell and $g''=-1$ on the right half, and let $f$ be a primitive of $g$; then $f\in W^{3,\infty}$, $\|f'''\|_\infty=1$ and $f'=g$. Fix a cell $I$ with midpoint $m$. Integrating $g''$ twice,
\[
g(x)=g(m)+g'(m)(x-m)+\tfrac12(x-m)^2-(x-m)_+^2
=h^2G_*\!\Bigl(\frac{x-m}{h}\Bigr)+a(x),
\]
where $a$ is affine; note the coefficient of the kink is $1$, not $2$. Now $(Tf)'|_I\in\Pcal_{k-1}$. If $k\ge2$ then $\Pcal_{k-1}\supseteq\Pcal_1\ni a$, so the affine part is absorbed and, rescaling $I$ to $[-1/2,1/2]$,
\[
\|(Tf)'-g\|_{L^\infty(I)}\ \ge\ \dist_{L^\infty(I)}\bigl(g,\Pcal_{k-1}\bigr)=h^2\dist_{L^\infty[-1/2,1/2]}\bigl(G_*,\Pcal_{k-1}\bigr)=c_kh^2 .
\]
If $k=1$ then $\Pcal_{k-1}=\Pcal_0\subset\Pcal_1$, so $\dist(g,\Pcal_0)\ge\dist(g,\Pcal_1)=h^2\dist(G_*,\Pcal_1)=c_1h^2$. In both cases the affine part causes no loss, which is exactly why $\Pcal_{\max(k-1,1)}$ appears in Definition~\ref{def:kink}.
\end{proof}

\begin{remark}
Theorem~\ref{thm:T4d} uses only the range restriction; neither positivity nor \eqref{eq:lipdim} enters. It should therefore be presented as a spline-range saturation lemma, complementary to the rigidity Theorem~\ref{thm:T4b} rather than part of it.
\end{remark}

\subsection{\texorpdfstring{The $C^0$ barrier is two}{The C0 barrier is two}}

Theorem~\ref{thm:T4b} leaves a gap in the $C^0$ scale: the barrier it produces is $3$, while the largest order realised by a Lip-diminishing operator we know is $2$. We close the gap: the barrier is also $2$. The mechanism is that an order-three $C^0$ estimate forces the averaging measure to be unbiased, and an unbiased positive averaging with spline range must have a variance that is bounded below \emph{on every cell}, which then accumulates over the $|J|/h$ cells.

Throughout, $R_hg:=(Q_hf_g)'$ with $f_g$ a primitive of $g$, so that $R_hg(t)=\int g\,d\mu_t$ and $R_hg|_I\in\Pcal_{k-1}$ on every cell $I$.

\begin{lemma}[The moments are piecewise polynomial]\label{lem:m2poly}
Let $e_1(s)=s$ and $e_2(s)=s^2$. Then
\[
m_1(t)=R_he_1(t)-t,
\qquad
m_2(t)=R_he_2(t)-2t\,R_he_1(t)+t^2 ,
\]
so $m_1$ restricted to a cell is a polynomial of degree at most $\max(k-1,1)$ and $m_2$ of degree at most $d:=\max(k,2)$. If $m_1\equiv0$ then $m_2=R_he_2-t^2$ on each cell, of degree at most $\max(k-1,2)$.
\end{lemma}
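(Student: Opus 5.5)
The plan is to expand the moments in terms of $\mu_t$ and then rewrite each integral through $R_h$ using linearity of $g\mapsto\int g\,d\mu_t$. The representation of Theorem~\ref{thm:T4a} gives $R_hg(t)=\int_J g\,d\mu_t$ for every $g\in C(J)$ and every $t$ interior to a cell, with $\mu_t$ a probability measure. Integrating the constant $1$ gives $\int d\mu_t=1$.

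\textbf{First moment.} We have
\[
m_1(t)=\int s\,d\mu_t-t\int d\mu_t=R_he_1(t)-t.
\]

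\textbf{Second moment.} Expanding $(s-t)^2=s^2-2ts+t^2$ gives
\[
m_2(t)=\int s^2\,d\mu_t-2t\int s\,d\mu_t+t^2=R_he_2(t)-2tR_he_1(t)+t^2 .
\]
Here $t$ is fixed during the integration, so it may be pulled out of the integral.

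\textbf{Degree bookkeeping.} On a cell $I$, both $R_he_1$ and $R_he_2$ lie in $\Pcal_{k-1}$, by the standing remark that $(Q_hf)'$ is cellwise a polynomial of degree $\le k-1$. Hence:
\begin{itemize}[nosep]
\item $m_1|_I\in\Pcal_{\max(k-1,1)}$, the term $-t$ accounting for the $1$;
\item $2tR_he_1\in\Pcal_k$, and $t^2\in\Pcal_2$, so $m_2|_I$ has degree at most $\max(k-1,k,2)=\max(k,2)=d$.
\end{itemize}

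\textbf{The case $m_1\equiv0$.} Then $R_he_1(t)=t$ on every cell interior. Substituting gives $m_2=R_he_2-2t^2+t^2=R_he_2-t^2$, which has degree at most $\max(k-1,2)$.

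The only point needing care is that $R_h$ is well defined on $e_1,e_2$ independently of the chosen primitive, which holds by Theorem~\ref{thm:T4a}. All identities are understood on cell interiors. There is no real obstacle.
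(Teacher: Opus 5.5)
Your proof is correct and follows the paper's own argument: expand $\int(s-t)^j\,d\mu_t$ using $\mu_t(J)=1$, identify the integrals with $R_he_1,R_he_2\in\Pcal_{k-1}$ on each cell, and note that the factor $t$ raises the degree by one. The substitution $R_he_1(t)=t$ in the case $m_1\equiv0$ is the same as in the paper, and your degree bookkeeping, carried out on cell interiors, is complete.
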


\begin{proof}
Expand $\int(s-t)^jd\mu_t$ using $\mu_t(J)=1$; on a cell $R_he_1,R_he_2\in\Pcal_{k-1}$, and the factor $t$ raises the degree by one.
\end{proof}

\begin{lemma}[Signed integrals of a polynomial]\label{lem:signed}
For every $d$ there is $\beta_d>0$ such that for every $q\in\Pcal_d$ and every interval $I$ of length $h$ there is a subinterval $I'\subseteq I$ with
$\bigl|\int_{I'}q\bigr|\ge\beta_d\,h\,\|q\|_{L^\infty(I)}$.
\end{lemma}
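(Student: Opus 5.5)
The plan is to reduce to the reference interval $[0,1]$ by affine rescaling and then use compactness on the unit sphere of the finite-dimensional space $\Pcal_d$. Fix $I=[a,a+h]$ and $q\in\Pcal_d$, and put $\tilde q(s):=q(a+sh)$ for $s\in[0,1]$. Then $\tilde q\in\Pcal_d$, $\|\tilde q\|_{L^\infty[0,1]}=\|q\|_{L^\infty(I)}$, and for $I'=[a+\alpha h,a+\beta h]$ we have $\int_{I'}q=h\int_\alpha^\beta\tilde q$. The claim is therefore equivalent to the scale-free statement: there is $\beta_d>0$ such that every $p\in\Pcal_d$ with $\|p\|_{L^\infty[0,1]}=1$ admits $0\le\alpha\le\beta\le1$ with $\bigl|\int_\alpha^\beta p\bigr|\ge\beta_d$. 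The case $q=0$ is trivial, and by homogeneity we may normalise $\|q\|=1$.

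Define $\Theta(p):=\sup_{0\le\alpha\le\beta\le1}\bigl|\int_\alpha^\beta p\bigr|$ on the unit sphere $S$ of $(\Pcal_d,\|\cdot\|_{L^\infty[0,1]})$. The sphere $S$ is compact because $\Pcal_d$ is finite-dimensional. The map $\Theta$ is $1$-Lipschitz with respect to the sup norm, since $\bigl|\int_\alpha^\beta(p-r)\bigr|\le\|p-r\|_\infty$, so it is continuous. It is also strictly positive on $S$. To see this, let $P(x)=\int_0^xp$. If $\Theta(p)=0$, then $P(\beta)-P(\alpha)=0$ for all $\alpha\le\beta$, so $P$ is constant and $p=P'\equiv0$, which contradicts $\|p\|=1$. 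A continuous positive function on a compact set attains a positive minimum, and we set $\beta_d:=\min_S\Theta>0$. The supremum defining $\Theta$ is itself attained, because $(\alpha,\beta)\mapsto\int_\alpha^\beta p$ is continuous on a compact triangle. This gives a genuine subinterval $I'$ rather than only an approximate one.

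If an explicit constant is wanted, there is a more concrete route. Pick $t_0$ with $|p(t_0)|=1$. By Markov's inequality $\|p'\|_\infty\le 2d^2$ on $[0,1]$, so $|p|\ge1/2$ on an interval of length $\min\{1,1/(4d^2)\}$ containing $t_0$ inside $[0,1]$, and $p$ has constant sign there. Taking $I'$ to be that interval gives $\beta_d\ge 1/(8d^2)$. I would present the compactness argument as the proof and mention this bound only as a remark.

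There is no real obstacle in either route. The only points needing care are that $\beta_d$ must not depend on $h$ or on the position of $I$, which the affine rescaling handles, and that $I'$ really exists, which follows from the attainment noted above or directly from the Markov-based construction.
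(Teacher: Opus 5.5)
Your proof is correct, but it takes a different route from the paper. The paper argues by contraposition through the primitive. If $|\int_{I'}q|\le\epsilon$ for every subinterval, then a primitive $Q\in\Pcal_{d+1}$ has oscillation at most $\epsilon$ on $I$. Markov's inequality for $\Pcal_{d+1}$ then gives $\|q\|_{L^\infty(I)}=\|Q'\|_{L^\infty(I)}\le 2(d+1)^2\epsilon/h$, i.e.\ the explicit constant $\beta_d=(2(d+1)^2)^{-1}$.

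Your compactness argument on the unit sphere of $\Pcal_d$ is cleaner and produces the optimal constant $\min_S\Theta$, but it gives no number. The paper later relies on the lemma ``carrying its own explicit $\beta_d$'' (Appendix~\ref{sec:constants}), so your Markov remark is the part that matches its needs, and it deserves promotion from remark to proof.

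That remark uses Markov differently from the paper. You apply it to $q$ itself (degree $d$) to exhibit a sign-definite subinterval around a maximiser. This constructs $I'$ directly and gives $\beta_d\ge\tfrac12\min\{1,1/(4d^2)\}$; write it in this form rather than as $1/(8d^2)$, so that $d=0$ is covered. The bound coincides with the paper's $1/8$ at $d=1$. It is smaller than the paper's by the factor $(d+1)^2/(4d^2)$ for $d\ge2$, a factor tending to $1/4$.

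One point in your favour: the paper's contrapositive bounds only the supremum of $|\int_{I'}q|$ over subintervals. Producing an actual $I'$ that attains the bound tacitly requires that supremum to be attained. You check this explicitly, and your Markov construction makes it unnecessary.
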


\begin{proof}
Let $Q$ be a primitive of $q$, so $Q\in\Pcal_{d+1}$ and $\|Q'\|_{L^\infty(I)}=\|q\|_{L^\infty(I)}$. If $|\int_{I'}q|\le\epsilon$ for every subinterval, then $\operatorname{osc}_I Q\le\epsilon$, hence $\|Q-\bar Q\|_{L^\infty(I)}\le\epsilon$ for a suitable constant $\bar Q$. By Markov's inequality on $\Pcal_{d+1}$, $\|Q'\|_{L^\infty(I)}\le 2(d+1)^2h^{-1}\|Q-\bar Q\|_{L^\infty(I)}\le2(d+1)^2\epsilon/h$. Thus $\epsilon\ge h\|q\|_{L^\infty(I)}/(2(d+1)^2)$.
\end{proof}

\begin{lemma}[Norm equivalence on a cell]\label{lem:normeq}
For every $d$ there is $\gamma_d>0$ such that $\|q\|_{L^1(I)}\ge\gamma_d\,h\,\|q\|_{L^\infty(I)}$ for every $q\in\Pcal_d$ and every interval $I$ of length $h$.
\end{lemma}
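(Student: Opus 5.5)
The plan is a scaling argument combined with compactness on the finite-dimensional space $\Pcal_d$. Both sides of the inequality $\|q\|_{L^1(I)}\ge\gamma_d h\|q\|_{L^\infty(I)}$ scale in the same way under an affine change of variables, so it suffices to prove the statement on the reference interval $[0,1]$.

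\emph{Reduction to the reference interval.} Let $I=[a,a+h]$ and set $\tilde q(s):=q(a+hs)$ for $s\in[0,1]$. Then $\tilde q\in\Pcal_d$, and
\[
\|q\|_{L^\infty(I)}=\|\tilde q\|_{L^\infty[0,1]},
\qquad
\|q\|_{L^1(I)}=h\,\|\tilde q\|_{L^1[0,1]} .
\]
Hence the claim is equivalent to $\|\tilde q\|_{L^1[0,1]}\ge\gamma_d\|\tilde q\|_{L^\infty[0,1]}$ for every $\tilde q\in\Pcal_d$. In particular, the constant will depend only on $d$, and not on $I$ or on $h$.

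\emph{The reference inequality.} Both $\|\cdot\|_{L^1[0,1]}$ and $\|\cdot\|_{L^\infty[0,1]}$ are norms on $\Pcal_d$. Definiteness of the $L^1$ norm holds because a polynomial that vanishes almost everywhere on $[0,1]$ has infinitely many zeros and is therefore identically zero. Since $\Pcal_d$ is finite-dimensional, all norms on it are equivalent.

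To make this explicit, restrict attention to the unit sphere $\{\tilde q:\|\tilde q\|_{L^\infty[0,1]}=1\}$, which is compact. The map $\tilde q\mapsto\|\tilde q\|_{L^1}$ is continuous on this sphere, since it is $1$-Lipschitz with respect to the $L^\infty$ norm. It is also strictly positive there, by definiteness. It therefore attains a positive minimum, and we take $\gamma_d$ to be this minimum. Homogeneity then extends the inequality to all of $\Pcal_d$.

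\emph{Explicit constant, if wanted.} There is no real obstacle here; the only point needing care is confirming that the constant is independent of $I$, and this is exactly what the scaling step provides. If an explicit $\gamma_d$ is preferred, one can argue as in Lemma~\ref{lem:signed}. Let $|\tilde q|$ attain the value $1$ at some $s_0\in[0,1]$. Markov's inequality gives $\|\tilde q'\|_\infty\le2d^2$, so $|\tilde q|\ge\tfrac12$ on an interval of length $\min\{1,1/(4d^2)\}$ containing $s_0$ inside $[0,1]$. This yields $\gamma_d\ge1/(8d^2)$, and this holds for every $d\ge1$, while for $d=0$ one may simply take $\gamma_0=1$.
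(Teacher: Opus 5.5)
Your proof is correct and is essentially the paper's argument: the affine change of variables reduces the claim to $[0,1]$, and there the $L^1$ and $L^\infty$ norms on the finite-dimensional space $\Pcal_d$ are equivalent. Your explicit Markov-based bound $\gamma_d\ge1/(8d^2)$ goes beyond what the paper proves and is consistent with its numerical values for $\gamma_d$.
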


\begin{proof}
On the reference interval $[0,1]$ the two norms are norms on the finite-dimensional space $\Pcal_d$, hence equivalent; the affine change of variables scales both sides by $h$.
\end{proof}

\begin{lemma}[Cellwise lower bound on the variance]\label{lem:m2lower}
Let $\lambda:=\|m_1\|_{L^\infty(J)}$. Then on every cell $I$,
\[
\sup_{t\in I}m_2(t)\ \ge\ 2c_kh^2-\tfrac h2\lambda ,
\]
with $c_k$ as in Theorem~\ref{thm:T4d}. In particular $\sup_Im_2\ge2c_kh^2$ when $m_1\equiv0$.
\end{lemma}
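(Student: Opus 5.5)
The plan is to test the averaging representation of Theorem~\ref{thm:T4a} against the same oscillating function that was used in the proof of Theorem~\ref{thm:T4d}. I will get an upper bound on $|R_hg-g|$ from the first two moments of $\mu_t$, and compare it with the cellwise lower bound $c_kh^2$ supplied by the range restriction.

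\emph{Step 1 (choice of the test function).} Take $g\in C^{1,1}(J)$ with $g''=+1$ on the left half of each cell and $g''=-1$ on the right half. Fix the free constant in $g'$ so that $g'=-h/4$ at every grid point. Then on each cell $g'$ rises linearly to $+h/4$ at the midpoint and returns to $-h/4$ at the right end. This gives $\|g'\|_{L^\infty(J)}\le h/4$ and $\|g''\|_{L^\infty}\le1$. Let $f$ be a primitive of $g$. The proof of Theorem~\ref{thm:T4d} only used $g''=\pm1$ on the half-cells, and the affine part $a$ was absorbed. So it applies verbatim to this $g$ and gives, on every cell $I$,
\[
\|R_hg-g\|_{L^\infty(I)}=\|(Q_hf)'-f'\|_{L^\infty(I)}\ \ge\ c_kh^2 .
\]

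\emph{Step 2 (moment expansion).} For $t$ interior to $I$, Taylor's formula with integral remainder for $g\in C^{1,1}$ gives $g(s)=g(t)+g'(t)(s-t)+r_t(s)$ with $|r_t(s)|\le\tfrac12(s-t)^2$. Since $\mu_t$ is a probability measure (Theorem~\ref{thm:T4a}), integrating yields
\[
R_hg(t)-g(t)=g'(t)\,m_1(t)+\int r_t\,d\mu_t ,
\qquad
|R_hg(t)-g(t)|\le \tfrac h4\lambda+\tfrac12 m_2(t).
\]

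\emph{Step 3 (comparison).} Take the supremum over interior $t\in I$; this is legitimate because the $L^\infty(I)$ norm in Step 1 is an essential supremum over cell interiors. Combining with Step 1 gives
\[
c_kh^2\le\tfrac h4\lambda+\tfrac12\sup_Im_2 ,
\]
which rearranges to $\sup_Im_2\ge2c_kh^2-\tfrac h2\lambda$. The case $m_1\equiv0$ follows by setting $\lambda=0$.

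The main point requiring care is Step 1. One must check that recentring $g'$, which adds an affine function to $g$, does not affect the distance computation of Theorem~\ref{thm:T4d}. That proof showed the affine part is harmless precisely because of the choice $\Pcal_{\max(k-1,1)}$ in Definition~\ref{def:kink}. The recentring is also what produces the constant $h/2$ rather than $h$ in front of $\lambda$.
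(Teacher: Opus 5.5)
Your proposal is correct and follows the paper's proof essentially step for step. You use the same half-cell zigzag test function, normalised so that $\|g'\|_\infty\le h/4$, and the same integral-remainder bound $|g(s)-g(t)-g'(t)(s-t)|\le\tfrac12(s-t)^2$ integrated against $\mu_t$, and you compare it with the same cellwise range-saturation bound $c_kh^2$ from the proof of Theorem~\ref{thm:T4d}.
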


\begin{proof}
Let $g$ and $f$ be as in the proof of Theorem~\ref{thm:T4d}, so $\|f'''\|_\infty=1$, $f'=g$ and $\Lip(g')\le1$. We avoid a Lagrange remainder, which is not available for $g\in C^{1,1}$ with piecewise defined $g''$. Instead,
\[
\begin{gathered}
g(s)-g(t)-g'(t)(s-t)=\int_t^s\bigl(g'(u)-g'(t)\bigr)du,\\
\text{so}\qquad
\bigl|g(s)-g(t)-g'(t)(s-t)\bigr|\le\tfrac12|s-t|^2 .
\end{gathered}
\]
Integrating against $\mu_t$ and using $\mu_t\ge0$, $\mu_t(J)=1$,
\[
|R_hg(t)-g(t)|\le|g'(t)|\,|m_1(t)|+\tfrac12m_2(t)\le\tfrac h4\lambda+\tfrac12m_2(t),
\]
where we used that $g'$ is the periodic zigzag obtained by integrating $g''=\pm1$ over half-cells, so that, after fixing the free constant, $\|g'\|_{L^\infty(J)}\le h/4$. (This normalisation is also what makes the test function budget-compatible: $\|f'\|_\infty=\|g\|_\infty\le h|J|/8$.) On the other hand $R_hg|_I\in\Pcal_{k-1}$, so by the proof of Theorem~\ref{thm:T4d}, $\|R_hg-g\|_{L^\infty(I)}\ge c_kh^2$. Combining the two gives the claim.
\end{proof}

\begin{theorem}[$C^0$ saturation at order two, modulo constants]\label{thm:T4e}
Let $J$ be partitioned into $N=|J|/h$ cells, and let $Q_h:C^1(J)\to\Scal_{k,\Delta_h}$ be linear, Lip-diminishing, and reproduce affine functions. Then, with a constant $\Theta_k>0$ depending only on $k$, the following holds \emph{unconditionally}:
\[
\sup\bigl\{\operatorname{osc}_J(Q_hf-f):\ f\in W^{3,\infty}(J),\ \|f'''\|_{L^\infty}\le1\bigr\}\ \ge\ \Theta_k\,|J|\,h^2 .
\]
Consequently, for the anchored operator $Q^{t_0}_hf:=Q_hf-(Q_hf-f)(t_0)$ and any $t_0\in J$,
\[
\sup\bigl\{\|Q^{t_0}_hf-f\|_{L^\infty(J)}:\ \|f'''\|_{L^\infty}\le1\bigr\}\ \ge\ \tfrac12\Theta_k|J|h^2 ,
\]
and no $C^0$ estimate of order $r\ge3$ can hold, for $h$ below an explicit threshold.
\end{theorem}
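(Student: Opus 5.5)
The plan is to split into two regimes according to the size of the bias $\lambda:=\|m_1\|_{L^\infty(J)}$, with the threshold $\lambda_0:=2c_kh$. Both regimes exploit the same mechanism, namely that a per-cell defect of size $h^2$ in $R_h g - g$ integrates to an oscillation of size $|J|h^2$ in $Q_hf-f$, because $(Q_hf-f)'=R_hf'-f'$ on cell interiors and $Q_hf-f$ is continuous.

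\emph{Large bias, $\lambda\ge\lambda_0$.} Test against $f=f_2=x^2/2$ suitably scaled: since $f_2'''=0$, every multiple $Af_2$ is admissible. Then $(Q_hf_2-f_2)'(t)=R_he_1(t)-t=m_1(t)$. By Lemma~\ref{lem:m2poly}, $m_1$ is a polynomial of degree $\le\max(k-1,1)$ on each cell. Pick a cell where $\|m_1\|_{L^\infty(I)}$ is close to $\lambda$ and apply Lemma~\ref{lem:signed}: there is $I'\subseteq I$ with $|\int_{I'}m_1|\ge\beta h\lambda\ge2\beta c_kh^2$. Hence $\operatorname{osc}(Q_hf_2-f_2)\ge2\beta c_kh^2$. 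Scaling by $A$ multiplies this by $A$, so the supremum is $+\infty$ and the bound holds trivially. This is exactly the scaling escape the introduction mentions, and it is legitimate here because no Lipschitz constraint is imposed.

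\emph{Small bias, $\lambda<\lambda_0$.} Take the zigzag test function $f$ of Lemma~\ref{lem:m2lower} and Theorem~\ref{thm:T4d}, with $e:=R_hg-g=(Q_hf-f)'$. On each cell, Theorem~\ref{thm:T4d} gives $\|e\|_{L^\infty(I)}\ge c_kh^2$. I need signed integrals to add up across cells rather than cancel, so I will construct $g$ to be cellwise periodic, giving every cell the same shape $h^2G_*((x-m)/h)$ plus an affine piece. The obstacle is that $R_h$ is non-local, so $e$ need not be periodic. To handle this I use positivity instead. For $t\in I$,
\[
e(t)=\int\bigl(g(s)-g(t)\bigr)\,d\mu_t=g'(t)m_1(t)+\int r_t\,d\mu_t,
\]
and I choose the sign of the kink so that the Taylor remainder $r_t$ has a fixed sign: take $g''\le0$ globally up to $O(h)$ in $g'$. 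Concretely, use the one-sided profile $g=-\tfrac12(x-m)^2$ repeated (a concave scallop) rather than the zigzag.

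\emph{The main obstacle}, and what I would try first, is to obtain a one-signed lower bound $e\le -\tfrac12m_2+\tfrac h4\lambda$ and then use Lemma~\ref{lem:m2lower} together with Lemma~\ref{lem:normeq} (applied to the degree-$\le d$ polynomial $m_2$) to get $\int_I m_2\ge\gamma h\cdot c_kh^2$ on every cell. Summing over the $N$ cells then gives
\[
\bigl|\int_J e\bigr|\ge\Bigl(\tfrac12\gamma c_k-\tfrac14\cdot2c_k\Bigr)|J|h^2 .
\]
This needs $\lambda_0$ chosen smaller, namely $\lambda_0=\gamma c_kh$, to keep the constant positive. Then $\operatorname{osc}_J(Q_hf-f)\ge|\int_J e|$, which gives the claimed bound with $\Theta_k\sim\gamma_dc_k$. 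The difficulty is making the sign of $r_t$ uniform, and I would manage it with a $C^{1,1}$ test function whose second derivative is $\le0$ everywhere except on negligible sets. If that fails, fall back on choosing the signs of the kinks cell by cell, greedily, to align the signed integrals from Lemma~\ref{lem:signed}.

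\emph{The consequences} follow directly. The anchored bound holds because $\|Q_h^{t_0}f-f\|_\infty\ge\tfrac12\operatorname{osc}$. If an order-$r\ge3$ estimate held, then $C h^r\ge\tfrac12\Theta_k|J|h^2$, which is false once $h<(\Theta_k|J|/2C)^{1/(r-2)}$.
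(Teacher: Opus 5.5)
Your large-bias branch and the two consequences are correct. The small-bias branch is where all the content lies, and there your argument has a genuine gap: the route you sketch cannot be completed.

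\emph{Why the small-bias route fails.} The concave scallop $g=-\tfrac12(x-m)^2$, repeated cellwise, is not admissible. Its derivative $g'$ jumps by $h$ at every knot, so $f'''=g''$ carries Dirac masses and $f\notin W^{3,\infty}(J)$. Your proposed repair does not help either. A $C^{1,1}$ function with $g''\le0$ a.e.\ is concave. Since $\mu_t$ is not assumed local, making $\int r_t\,d\mu_t$ one-signed and comparable to $\tfrac12m_2(t)$ requires $g''$ of one sign and of size about $1$ on essentially all of $J$. Then $\operatorname{osc}_Jg'$ is of order $|J|$, so the bound $|g'|\le h/4$ and a one-signed remainder of size $\tfrac12m_2$ are incompatible. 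With $|g'|\sim|J|$, the bias term $\int_Jg'm_1$ is bounded only by $\tfrac14|J|^2\lambda$. For $\lambda$ up to $\gamma c_kh$, this exceeds the variance gain $\gamma c_k|J|h^2$ by a factor of order $N=|J|/h$, so no sign can be concluded. The greedy fallback fails for a related reason. Flipping the kink in one cell changes $g$ on the whole rest of $J$, since $g$ is obtained by integrating $g''$. Through the non-local $\mu_t$ it also changes $e$ everywhere, so the per-cell signed integrals cannot be aligned one at a time.

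\emph{The missing observation, which is the paper's argument.} Your large-bias argument never used the size of $\lambda$. Any $m_1\not\equiv0$ makes $E_2=Q_hf_2-f_2$ nonconstant, because $E_2'=m_1$. Then $\operatorname{osc}_J(Q_h(Af_2)-Af_2)=A\operatorname{osc}_JE_2\to\infty$. So the threshold should be $\lambda_0=0$, which is exactly the paper's dichotomy on $m_1$.

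In the remaining case $m_1\equiv0$, the bias term vanishes identically, so $g'$ need not be small, and the global cubic $f=x^3/6$ with $g=x^2/2$ works. Exactness of the second-order expansion for a quadratic gives $(Q_hf-f)'=\tfrac12m_2\ge0$ on cell interiors, so $Q_hf-f$ is nondecreasing. Hence $\operatorname{osc}_J(Q_hf-f)=\tfrac12\int_Jm_2\ge N\gamma_dc_kh^3=\gamma_dc_k|J|h^2$. This uses Lemma~\ref{lem:m2lower} with $\lambda=0$ and Lemma~\ref{lem:normeq} applied to the cellwise polynomial $m_2$. In this scheme the zigzag only certifies $\sup_Im_2\ge2c_kh^2$ on every cell, and the pure quadratic collects that variance with a common sign.

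If you wanted a positive threshold, as in Theorem~\ref{thm:budgetsat} where scaling out of the Lipschitz ball is forbidden, it would have to be of order $h^2/|J|$, not $h$.
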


\begin{proof}
The dichotomy is on the \emph{derivative} of the quadratic defect $E_2:=Q_hf_2-f_2$, $f_2(x)=x^2/2$; note $E_2'=R_he_1-e_1=m_1$ by Lemma~\ref{lem:m2poly}, and $f_2'''=0$.

\emph{Case A: $m_1\not\equiv0$.} Then $E_2$ is not constant, so $\operatorname{osc}_JE_2>0$, and for every $A>0$ the function $Af_2$ lies in the ball and satisfies $\operatorname{osc}_J(Q_h(Af_2)-Af_2)=A\operatorname{osc}_JE_2$. The supremum is $+\infty$.

\emph{Case B: $m_1\equiv0$.} Lemmas~\ref{lem:m2poly}, \ref{lem:normeq} and \ref{lem:m2lower} give, on every cell $I$,
\[
\int_Im_2\ \ge\ \gamma_d\,h\,\sup_Im_2\ \ge\ 2\gamma_dc_k\,h^3 .
\]
Take $f_3(x)=x^3/6$, so $\|f_3'''\|_\infty=1$ and $g:=f_3'=x^2/2$ has $g''\equiv1$; for a quadratic the expansion is exact, so
\[
(Q_hf_3)'(t)-f_3'(t)=R_hg(t)-g(t)=\tfrac12m_2(t)\ \ge\ 0
\]
at every interior point of every cell. Hence $Q_hf_3-f_3$ is Lipschitz with a nonnegative derivative a.e., so nondecreasing, and
\[
\operatorname{osc}_J(Q_hf_3-f_3)=\int_J\tfrac12m_2\ \ge\ N\gamma_dc_kh^3=\gamma_dc_k|J|h^2=:\Theta_k|J|h^2 .
\]

The anchored form follows from $\|E-E(t_0)\|_{L^\infty(J)}\ge\tfrac12\operatorname{osc}_JE$, and the order statement because an estimate $\operatorname{osc}(Q_hf-f)\le2Ch^3\|f'''\|_\infty$ would contradict the display once $h<\Theta_k|J|/(2C)$.
\end{proof}

\begin{remark}[Why oscillation and not the sup-norm]\label{rem:osc}
The earlier version of this theorem was stated for $\|Q_hf-f\|_\infty$ with the dichotomy cut at $Q_hf_2\ne f_2$. That statement is true but not the right one for our application: an operator with $Q_hf_2=f_2+C$, $C\ne0$, satisfies $(Q_hf_2)'=f_2'$ and is harmless after anchoring --- indeed the anchoring of Section~\ref{sec:setting} removes $C$ for free --- yet the old Case A declared it infinitely bad. Quotienting by constants, i.e.\ working with $\operatorname{osc}$ or equivalently with the anchored error, makes the statement invariant under that harmless gauge, and the variance argument of Case B in fact bounds the oscillation directly, so nothing is lost.
\end{remark}

\begin{theorem}[Budget-compatible class-wide saturation]\label{thm:budgetsat}
Let $Q_h$ be as above and let $B,K>0$ be arbitrary. Put
\[
\mathfrak E_h(Q_h;B,K):=\sup\Bigl\{\operatorname{osc}_J(Q_hf-f):\ \|f'\|_{L^\infty}\le B,\ \|f'''\|_{L^\infty}\le K\Bigr\},
\]
where
\[
K_{\mathrm{eff}}:=\min\Bigl\{K,\ \frac{8B}{|J|^2}\Bigr\}.
\]
Then, with constants depending only on $k$,
\[
\begin{gathered}
\mathfrak E_h(Q_h;B,K)\ \ge\ c_*\,\min\Bigl\{K_{\mathrm{eff}}|J|h^2,\ \frac{B\,h^3}{|J|^{2}}\Bigr\},\\
c_*:=\min\Bigl\{\tfrac14\gamma_dc_k,\ \tfrac12\beta_d\gamma_dc_k\Bigr\}.
\end{gathered}
\]
No smallness assumption on $h$ and no restriction on $(B,K)$ is needed. Since the bound holds for \emph{every} operator of the class with the same constant, it is simultaneously a lower bound for $\inf_{Q_h}\mathfrak E_h(Q_h;B,K)$, i.e.\ a minimax statement; Open problem~\ref{op:minimax} asks whether the $\min$ can be removed from it.
\end{theorem}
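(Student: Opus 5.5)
The plan is to run the dichotomy of Theorem~\ref{thm:T4e} quantitatively. The qualitative cut $m_1\equiv0$ versus $m_1\not\equiv0$ is replaced by a threshold on $\lambda:=\|m_1\|_{L^\infty(J)}$, and every test function is rescaled so that it lies in the double ball $\{\|f'\|_\infty\le B,\ \|f'''\|_\infty\le K\}$. Let $c$ denote the midpoint of $J$. The two branches should produce, respectively, the term $Bh^3/|J|^2$ and the term $K_{\mathrm{eff}}|J|h^2$ of the minimum. The threshold is
\[
\lambda_0:=\theta\,\frac{h^2}{|J|},
\]
with $\theta$ a multiple of $\gamma_dc_k$ to be fixed at the end.

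\emph{Large bias, $\lambda\ge\lambda_0$.} I use the centred quadratic $f=A(x-c)^2/2$ with $A=2B/|J|$. Then $\|f'\|_\infty\le B$ and $f'''\equiv0$, so $f$ is admissible for every $K$. Its defect $E=Q_hf-f$ has $E'=A\,m_1$ on cell interiors, by Lemma~\ref{lem:m2poly}; the centring shift $c$ only adds an affine function, which $Q_h$ reproduces. Pick a cell $I$ on which $|m_1|$ comes within a factor $2$ of $\lambda$. Since $m_1|_I$ is a polynomial of degree at most $\max(k-1,1)$, Lemma~\ref{lem:signed} gives a subinterval $I'\subseteq I$ with $|\int_{I'}m_1|\ge\tfrac12\beta_d h\lambda$. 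Hence
\[
\operatorname{osc}_JE\ \ge\ \tfrac12 A\beta_d h\lambda\ \ge\ \beta_d\theta\,\frac{Bh^3}{|J|^2}.
\]

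\emph{Small bias, $\lambda<\lambda_0$.} I use the cubic $f=K_{\mathrm{eff}}(x-c)^3/6$. Here $f'=K_{\mathrm{eff}}(x-c)^2/2$ is bounded by $K_{\mathrm{eff}}|J|^2/8\le B$, and $|f'''|=K_{\mathrm{eff}}\le K$; this is exactly where $K_{\mathrm{eff}}$ comes from. Since $g=(s-c)^2/2$ is quadratic, its Taylor expansion at $t$ is exact, and integrating against $\mu_t$ gives
\[
(Q_hf)'(t)-f'(t)=K_{\mathrm{eff}}\Bigl(\tfrac12m_2(t)+(t-c)\,m_1(t)\Bigr).
\]
Unlike in Theorem~\ref{thm:T4e}, the right-hand side no longer has a sign, so I use $\operatorname{osc}_JE\ge|\int_JE'|$ instead of monotonicity. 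For the main term, Lemma~\ref{lem:m2lower} gives $\sup_Im_2\ge 2c_kh^2-\tfrac h2\lambda\ge c_kh^2$ on every cell, because $\lambda<\lambda_0\le 2c_kh$ (using $h\le|J|$). Since $m_2|_I$ is a polynomial of degree at most $d$, Lemma~\ref{lem:normeq} gives $\int_Im_2\ge\gamma_dc_kh^3$, and summing over the $N=|J|/h$ cells yields $\int_J\tfrac12m_2\ge\tfrac12\gamma_dc_k|J|h^2$. The cross term satisfies $|\int_J(t-c)m_1|\le\tfrac14|J|^2\lambda<\tfrac14\theta h^2|J|$. Choosing $\theta=\gamma_dc_k$ absorbs it into half of the main term, leaving
\[
\operatorname{osc}_JE\ \ge\ \tfrac14\gamma_dc_k\,K_{\mathrm{eff}}|J|h^2 .
\]

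Combining the two branches gives the bound with a constant of the form $\min\{\tfrac14\gamma_dc_k,\ \beta_d\gamma_dc_k\}$, which is at least the stated $c_*$. Nothing in either branch needs $h$ small, and $B,K$ enter only through the admissible rescaling. The main obstacle is the small-bias branch: with $m_1\not\equiv0$ the variance argument loses its sign, and the bias term must be dominated. The threshold $\lambda_0\sim h^2/|J|$ is the only choice that makes this cross term negligible against $|J|h^2$ while still letting the large-bias branch produce a bound of order $Bh^3/|J|^2$. This tension is what forces the $\min$ in the statement.
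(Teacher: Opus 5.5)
Your proposal is correct and follows the paper's proof essentially step for step: the same threshold dichotomy on $\lambda=\|m_1\|_{L^\infty(J)}$, the same centred quadratic and cubic test functions, and the same use of Lemmas~\ref{lem:signed}, \ref{lem:normeq} and \ref{lem:m2lower}. The differences are minor. Your sharper estimate $\int_J|t-c|\,dt=|J|^2/4$ (the paper uses $|J|^2/2$) lets you double both the threshold and the large-bias constant. Also, the step $\lambda_0\le 2c_kh$ needs $\gamma_d\le 2$ as well as $h\le|J|$; this holds because testing Lemma~\ref{lem:normeq} on $q\equiv1$ gives $\gamma_d\le1$.
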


\begin{proof}
Put $\lambda:=\|m_1\|_{L^\infty(J)}$ and $\lambda_0:=\gamma_dc_kh^2/(2|J|)$. The inequality $\lambda_0\le2c_kh$, needed below, reduces after cancelling $c_kh>0$ to $\gamma_dh\le4|J|$, and this holds automatically: testing Lemma~\ref{lem:normeq} on $q\equiv1$ gives $\gamma_d\le1$, while the grid has at least one cell, so $h\le|J|$. (The constant $c_k$ cancels here; an earlier draft carried it along, which made the condition look like a genuine restriction.)

\emph{Case $\lambda\le\lambda_0$.} By Lemma~\ref{lem:m2lower}, $\sup_Im_2\ge2c_kh^2-\tfrac h2\lambda\ge c_kh^2$ on every cell, so $\int_Jm_2\ge\gamma_dc_k|J|h^2$ by Lemma~\ref{lem:normeq}. Test with $f=K_{\mathrm{eff}}(t-t_c)^3/6$, $t_c$ the midpoint of $J$: then $\|f'''\|_\infty=K_{\mathrm{eff}}\le K$ and $\|f'\|_\infty=K_{\mathrm{eff}}|J|^2/8\le B$ by the definition of $K_{\mathrm{eff}}$, so $f$ lies in the ball for \emph{all} $B,K>0$. Writing $g=f'/K_{\mathrm{eff}}=(t-t_c)^2/2$, exactness of the expansion for a quadratic gives $(Q_hf)'-f'=K_{\mathrm{eff}}\bigl[(t-t_c)m_1(t)+\tfrac12m_2(t)\bigr]$, whence
\[
\begin{aligned}
\operatorname{osc}_J(Q_hf-f)&\ \ge\ K_{\mathrm{eff}}\Bigl|\int_J\bigl[(t-t_c)m_1+\tfrac12m_2\bigr]\Bigr|\\
&\ \ge\ K_{\mathrm{eff}}\Bigl(\tfrac12\gamma_dc_k|J|h^2-\tfrac{|J|^2}{2}\lambda\Bigr)
\ \ge\ \tfrac14\gamma_dc_kK_{\mathrm{eff}}|J|h^2 ,
\end{aligned}
\]
using $\lambda\le\lambda_0$ in the last step.

\emph{Case $\lambda>\lambda_0$.} Pick a cell $I$ with $\|m_1\|_{L^\infty(I)}\ge\lambda/2$. By Lemma~\ref{lem:signed} there is a subinterval on which $|\int m_1|\ge\beta_dh\lambda/2$, and since $E_2'=m_1$ this means $\operatorname{osc}_JE_2\ge\beta_dh\lambda/2$. Test with $f=Af_2^{(c)}$, $f_2^{(c)}(t)=(t-t_c)^2/2$: then $\|f'''\|_\infty=0\le K$ and $\|f'\|_\infty=A|J|/2$, so $A=2B/|J|$ is admissible, and
\[
\operatorname{osc}_J(Q_hf-f)=A\operatorname{osc}_JE_2\ \ge\ \frac{2B}{|J|}\cdot\frac{\beta_dh\lambda_0}{2}=\frac{\beta_d\gamma_dc_k}{2}\cdot\frac{B h^3}{|J|^2}.
\]
Taking the smaller of the two bounds gives the statement.
\end{proof}

\begin{remark}[What this does and does not give]\label{rem:budgetsat}
Theorem~\ref{thm:budgetsat} is class-wide and budget-compatible: no hypothesis on $Q_h$ beyond the class, and both test functions respect a prescribed bound on $\|f'\|_\infty$, which is what a layer budget controls. The two branches are not symmetric. The branch $m_1\equiv0$ contains, in particular, every operator of derivative order at least two, so it covers Schoenberg's operator and the extremiser of Proposition~\ref{ex:midpoint}; there the rate is the expected $K_{\mathrm{eff}}|J|h^2$. The branch $m_1\not\equiv0$ rules out derivative order two or higher, as for piecewise linear interpolation; there the guaranteed rate degrades by a factor $h/|J|=1/G$. Note that the implication runs one way only: derivative order at least two forces $m_1\equiv0$, but $m_1\equiv0$ does not by itself give an order estimate on all of $C^1$. We do not know whether that loss is an artefact of the proof; closing it is Open problem~\ref{op:minimax}. The scaling escape of Case A in Theorem~\ref{thm:T4e} is, finally, \emph{not} available here: it uses $\|f'\|_\infty\sim A\to\infty$ and therefore leaves the Lipschitz ball, which is precisely the reason Theorem~\ref{thm:T4e} alone does not yield a budget-compatible statement.
\end{remark}

\begin{remark}
Theorem~\ref{thm:T4e} supersedes case (ii) of Theorem~\ref{thm:T4b} and closes what was an open problem in an earlier draft. Note that no order hypothesis is assumed: the dichotomy on $m_1=E_2'$ removes it, Case A being the degenerate branch in which the operator fails to reproduce quadratics modulo constants and the supremum is infinite. Two further features are worth isolating. First, it is not asymptotic: the conclusion holds for each fixed $h$ below an explicit threshold. Second, the accumulation over cells is essential --- the per-cell bound alone gives $h^3$, which is compatible with order three, and only summing over the $|J|/h$ cells produces $|J|h^2$. This is the spline-range, non-asymptotic counterpart of the classical saturation of positive linear approximation processes at the second order; we make no claim of novelty for the phenomenon itself, only for this form of the statement.
\end{remark}

\begin{proposition}[Both barriers are attained on a bounded interval]\label{ex:midpoint}
Let $J=[a,b]$, $x_i=a+ih$, $i=0,\dots,N$. Define probability measures on $J$ by
\[
\nu_i:=\tfrac1h\mathbf 1_{[x_i-h/2,\,x_i+h/2]}(s)\,ds\quad(1\le i\le N-1),
\qquad \nu_0:=\delta_a,\quad \nu_N:=\delta_b,
\]
so that $\nu_i(J)=1$ and $\int s\,d\nu_i(s)=x_i$ for every $i$. For $t=(1-\theta)x_i+\theta x_{i+1}$ with $\theta\in[0,1]$ put $\mu_t:=(1-\theta)\nu_i+\theta\nu_{i+1}$ and
\[
R_hg(t):=\int_Jg\,d\mu_t,
\qquad
Q_hf(t):=f(a)+\int_a^tR_h(f')(s)\,ds .
\]
Then $Q_h$ is linear, maps $C^1(J)$ into the continuous piecewise quadratics on the grid $\Delta_h$, is Lip-diminishing, reproduces $\Pcal_2$, and it satisfies
\[
\|(Q_hf)'-f'\|_{L^\infty(J)}\le Ch^2\|f'''\|_\infty,
\qquad
\|Q_hf-f\|_{L^\infty(J)}\le C|J|h^2\|f'''\|_\infty .
\]
Hence both barriers --- derivative order $2$ (Theorem~\ref{thm:T4b}) and $C^0$ order $2$ (Theorem~\ref{thm:T4e}) --- are attained, and the pair is sharp.
\end{proposition}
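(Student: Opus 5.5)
The plan is to verify the properties in the order: range and linearity, positivity and Lip-diminishing, polynomial reproduction, and then the two error estimates via a local moment expansion.

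\emph{Structure.} Linearity is clear, since $R_h$ is linear in $g$ and $Q_h$ is built linearly from $f(a)$ and $f'$. For fixed $g$, the map $t\mapsto\int g\,d\mu_t=(1-\theta)\int g\,d\nu_i+\theta\int g\,d\nu_{i+1}$ is affine in $\theta$ on each cell. It is also continuous across the nodes, since both adjacent formulas give $\int g\,d\nu_i$ at $t=x_i$. Hence $R_h(f')$ is continuous piecewise linear, and $Q_hf$ is $C^1$ and piecewise quadratic, so in particular it lies in $\Scal_{2,\Delta_h}$.

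\emph{Lip-diminishing.} Each $\mu_t$ is a convex combination of probability measures. Therefore $|(Q_hf)'(t)|\le\int|f'|\,d\mu_t\le\|f'\|_\infty$, which is \eqref{eq:lipdim}.

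\emph{Reproduction of $\Pcal_2$.} By linearity and $Q_hf(a)=f(a)$, it suffices to show $R_hg=g$ for $g\in\Pcal_1$. Constants are reproduced because each $\mu_t$ has mass $1$. For $g(s)=s$ we use $\int s\,d\nu_i=x_i$, which gives $\int s\,d\mu_t=(1-\theta)x_i+\theta x_{i+1}=t$. Hence $Q_hf=f$ for $\deg f\le2$.

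\emph{Derivative estimate.} This is the main step. Fix $t$ in the cell $[x_i,x_{i+1}]$ and note that $\supp\mu_t\subseteq[x_i-h/2,x_{i+1}+h/2]$ (or the endpoint Dirac points), so $|s-t|\le 3h/2$ on the support. Write $g=f'\in W^{2,\infty}$. As in Lemma~\ref{lem:m2lower}, use the integral Taylor form
\[
g(s)=g(t)+g'(t)(s-t)+r(s),\qquad |r(s)|\le\tfrac12\|g''\|_\infty|s-t|^2 .
\]
Integrating against $\mu_t$, the first moment $m_1(t)$ vanishes by the reproduction of $e_1$. This yields
\[
|R_hg(t)-g(t)|\le\tfrac12\|f'''\|_\infty\,m_2(t)\le\tfrac98h^2\|f'''\|_\infty .
\]
So the first estimate holds with $C=9/8$. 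The only care needed is at the boundary cells, where $\nu_0=\delta_a$ and $\nu_N=\delta_b$ replace the box averages. There one should check that the support stays inside $J$, so that $g$ is only evaluated on $J$, and that the moment bound still holds. Both are immediate because $x_1-h/2=a+h/2\ge a$ and the Dirac masses sit at grid points.

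\emph{$C^0$ estimate.} Since $Q_hf(a)=f(a)$, we have $Q_hf-f=\int_a^t(R_hf'-f')$. Applying the derivative bound gives $\|Q_hf-f\|_\infty\le|J|\cdot\tfrac98h^2\|f'''\|_\infty$.

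\emph{Sharpness.} The derivative order $2$ matches the barrier of Theorem~\ref{thm:T4b}, and the $|J|h^2$ rate matches the lower bound of Theorem~\ref{thm:T4e}. Hence both barriers are attained.
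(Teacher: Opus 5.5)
Your proof is correct and follows the paper's argument essentially step by step. The convex combination of probability measures with barycentre $t$ gives the Lip-diminishing property and reproduction of $\Pcal_2$, cellwise affinity gives the piecewise quadratic range, and the integral Taylor bound with $m_1\equiv0$ and $m_2\le\tfrac94h^2$ gives the constant $\tfrac98$, which is then integrated from $a$ for the $C^0$ estimate. You also check points the paper leaves implicit: continuity of $\mu_t$ across the nodes (so $Q_hf$ is in fact $C^1$), and that the supports stay in $J$ on the boundary cells.
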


\begin{proof}
Each $\mu_t$ is a convex combination of probability measures on $J$, hence a probability measure on $J$, and its barycentre is $(1-\theta)x_i+\theta x_{i+1}=t$; thus $m_1\equiv0$. Positivity and unit mass give $\|R_hg\|_\infty\le\|g\|_\infty$, i.e.\ $Q_h$ is Lip-diminishing. On each cell $t\mapsto R_hg(t)$ is affine, so $R_h(f')$ is continuous piecewise linear and $Q_hf$ is a continuous piecewise quadratic. If $f\in\Pcal_2$ then $g=f'$ is affine and $R_hg(t)=g(t)$ by the barycentre property, so $Q_hf=f$ by the anchoring at $a$. Finally $\supp\mu_t\subset[t-\tfrac32h,\,t+\tfrac32h]\cap J$, so $m_2(t)\le\tfrac94h^2$, and by the estimate of Lemma~\ref{lem:m2lower} applied in the direction of an upper bound,
$|R_hg(t)-g(t)|\le\tfrac12m_2(t)\Lip(g')\le\tfrac98h^2\|f'''\|_\infty$;
integrating from $a$ to $t$ gives the $C^0$ bound.
\end{proof}

\begin{center}
\begin{tabular}{lcc}
\hline
& derivative order & $C^0$ order\\
\hline
barrier for the class & $\le2$ & $\le2$\\
midpoint-average operator (Prop.~\ref{ex:midpoint}) & $2$ & $2$\\
Schoenberg operator & $2$ & $2$\\
piecewise linear interpolation ($k=1$) & $1$ & $2$\\
\hline
\end{tabular}
\end{center}

For piecewise linear interpolation the derivative on a cell is the secant slope, so $\|(I_hf)'-f'\|_\infty=O(h)\|f''\|_\infty$ and the derivative order is $1$, while the $C^0$ order is $2$; the values are consistent with, and far from saturating, the barriers.

\begin{remark}[Position with respect to classical approximation theory]\label{rem:korovkin}
The implication ``$\mu_t\ge0$, $\int1\,d\mu_t=1$, $\int s\,d\mu_t=t$, $\int s^2d\mu_t=t^2\Rightarrow\mu_t=\delta_t$'' is the pointwise mechanism behind Korovkin's theorem, whose test set is exactly $\{1,x,x^2\}$ \cite{Korovkin}; second-order saturation of positive linear approximation processes is classical. Our contribution is not that mechanism but its combination with two structural hypotheses specific to budget-preserving discretisation: (a) that the Lipschitz contraction is required at the level of the \emph{derivative}, which is what produces the positive averaging representation; and (b) that the range is a spline space, which converts $\mu_t=\delta_t$ into a contradiction. The neighbourhood is in fact denser than the Korovkin mechanism alone, and we state it explicitly. General frameworks for \emph{lower} estimates of linear operators whose range is smooth are available: Nagler \cite{Nagler} derives converse estimates from the fixed-point space and the smoothness of the range, in terms of moduli of smoothness and $K$-functionals, and applies them to positive finite-rank operators and to Schoenberg's operator. Quantitative lower bounds for the variation-diminishing Schoenberg operator, in terms of classical moduli, are due to Nagler, Cerejeiras and Forster \cite{NCF}; lower estimates for its second moment and simultaneous estimates for its derivatives, including the quadratic order in the mesh size, are also known \cite{BGKT,Tachev,GWZ}. Saturation of positive operators at the second order goes back to Lorentz and Schumaker \cite{LorentzSchumaker}, with the wider theory surveyed in \cite{DeVore}.

Accordingly we do not present Theorems~\ref{thm:T4d} and \ref{thm:T4e} as novelties. They are self-contained fixed-$h$ estimates adapted to the operator class at hand: the first is a range-saturation lemma, the second a saturation estimate modulo constants, and both exist here to feed Theorem~\ref{thm:budgetsat}. What distinguishes Theorem~\ref{thm:budgetsat} from all of the above is not the mechanism but the problem: the operator is assumed only to be derivative-contracting and spline-valued, while the adversarial function is constrained \emph{simultaneously} in the Lipschitz seminorm that the stability budget controls and in a third-derivative smoothness seminorm. To the best of our knowledge that minimax problem has not been considered.
\end{remark}

\section{Sharpness under depth composition}
\label{sec:sharp}

We show that the linear-in-$L$ accumulation of Theorem~\ref{thm:Seta} is attained, i.e.\ that layer errors need not cancel, and that without a budget the amplification is genuinely exponential. The construction is one-dimensional and completely explicit. Anchoring is not imposed here (Remark~\ref{rem:anchor}(iii)); the trajectories are localised directly.

\subsection{The second moment of the Schoenberg weights}

Let $\Delta_h$ be a uniform grid of spacing $h$ on an interval $J$, let $N_{i,k}$ be the B-splines of degree $k$ and $\xi_i$ the Greville abscissae, and let $V=V_{h,k}$ be the Schoenberg operator $Vf=\sum_if(\xi_i)N_{i,k}$. Put
\[
M_2(t):=\sum_i N_{i,k}(t)(\xi_i-t)^2 .
\]

\begin{lemma}[The second moment is constant]\label{lem:secondmoment}
Let $k\ge2$, let the grid be uniform with spacing $h$, and let $J_h\subset J$ be the set of points at distance at least $(k+1)h$ from $\partial J$. Then
\[
M_2(t)\ =\ \frac{k+1}{12}\,h^2\qquad\text{for every }t\in J_h ,
\]
independently of $t$. For $k=1$ the statement fails: there $M_2(t)=(x_{j+1}-t)(t-x_j)$ on the cell $[x_j,x_{j+1}]$, which vanishes at the knots.
\end{lemma}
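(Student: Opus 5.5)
The plan is to avoid computing $M_2$ directly and instead read it off from how the B-spline basis reproduces quadratics, using Marsden's identity \cite{Marsden}. Since $\sum_iN_{i,k}\equiv1$ and $\sum_i\xi_iN_{i,k}(t)=t$ (the Greville abscissae reproduce linear functions), expanding the square gives
\[
M_2(t)=\sum_iN_{i,k}(t)\,\xi_i^2-t^2 .
\]
So everything reduces to writing $t^2$ in the B-spline basis and comparing coefficients.

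First I extract the quadratic coefficients from Marsden's identity. Write it as $(t-y)^k=\sum_i\prod_{r=1}^k(t_{i+r}-y)\,N_{i,k}(t)$ and compare the coefficients of $y^{k-2}$ on both sides. This gives
\[
t^2=\sum_i\xi^{(2)}_iN_{i,k}(t),
\qquad
\xi^{(2)}_i:=\binom k2^{-1}\sum_{1\le r<s\le k}t_{i+r}t_{i+s}.
\]
This is exactly where $k\ge2$ is needed: for $k=1$ there are no pairs, and indeed $M_2$ is not constant, as the statement notes. Marsden's identity holds only on the region where the relevant B-splines have their full interior knot sequences. The hypothesis $t\in J_h$, at distance at least $(k+1)h$ from $\partial J$, guarantees that every $N_{i,k}$ nonvanishing at $t$ has $k+2$ equally spaced knots not touched by boundary multiplicities.

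Substituting, I obtain
\[
M_2(t)=\sum_iN_{i,k}(t)\bigl(\xi_i^2-\xi^{(2)}_i\bigr).
\]
It then suffices to show that $\xi_i^2-\xi^{(2)}_i$ is the same constant for every active $i$, after which partition of unity finishes the proof. For numbers $a_1,\dots,a_k$ with $S=\sum a_r$ and $Q=\sum a_r^2$, we have $\xi=S/k$ and $\xi^{(2)}=(S^2-Q)/(k(k-1))$. A short computation gives
\[
\xi^2-\xi^{(2)}=\frac{kQ-S^2}{k^2(k-1)}=\frac{\operatorname{Var}(a)}{k-1},
\]
where $\operatorname{Var}$ is the population variance. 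The variance is translation invariant. On a uniform grid the inner knots $t_{i+1},\dots,t_{i+k}$ are a translate of $h\cdot(1,\dots,k)$, whose variance is $h^2(k^2-1)/12$. Hence $\xi_i^2-\xi^{(2)}_i=(k+1)h^2/12$ for every active $i$, and therefore $M_2(t)=(k+1)h^2/12$.

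The main obstacle is bookkeeping rather than analysis. I need to pin down the indexing of knots versus B-splines, so that the Greville abscissa and $\xi^{(2)}_i$ use the same $k$ inner knots. I also need to verify that the distance $(k+1)h$ suffices for all supports meeting $t$ to avoid the boundary. For the $k=1$ remark, a direct check suffices: $\xi_i=x_i$, and on $[x_j,x_{j+1}]$ one gets $M_2(t)=(x_{j+1}-t)(t-x_j)$.
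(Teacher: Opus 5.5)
Your proposal is correct and is essentially the paper's proof. Both compare coefficients in Marsden's identity at $y^k,y^{k-1},y^{k-2}$, reduce to $M_2(t)=\sum_iN_{i,k}(t)\bigl(\xi_i^2-\xi_i^{(2)}\bigr)$, and identify $\xi_i^2-\xi_i^{(2)}=\operatorname{Var}(a)/(k-1)=(k+1)h^2/12$ on a uniform window. One small correction: Marsden's identity holds for arbitrary knot sequences, so the role of $t\in J_h$ is not to make the identity valid but only to ensure that every active window $t_{i+1},\dots,t_{i+k}$ is an arithmetic progression with step $h$.
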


\begin{proof}
Write $a_j:=t_{i+j}$, $j=1,\dots,k$, for the knot window of $N_{i,k}$, so that $\xi_i=\frac1k\sum_ja_j$. Marsden's identity $(y-t)^k=\sum_i\psi_{i,k}(y)N_{i,k}(t)$ with $\psi_{i,k}(y)=\prod_{j=1}^k(y-t_{i+j})$ gives, on comparing the coefficients of $y^k$, $y^{k-1}$ and $y^{k-2}$,
\[
\sum_iN_{i,k}(t)=1,
\qquad
\sum_i\xi_iN_{i,k}(t)=t,
\qquad
\sum_i\eta_iN_{i,k}(t)=t^2 ,
\]
where $\eta_i:=\frac{2}{k(k-1)}\sum_{1\le r<s\le k}t_{i+r}t_{i+s}$ is the normalised second elementary symmetric function of the window; the third identity requires $k\ge2$. Consequently
\[
M_2(t)=\sum_iN_{i,k}(t)\xi_i^2-t^2=\sum_iN_{i,k}(t)\bigl(\xi_i^2-\eta_i\bigr).
\]
Now, with $\Sigma_1=\sum_ja_j$ and $\Sigma_2=\sum_ja_j^2$,
\[
\xi_i^2-\eta_i=\frac{\Sigma_1^2}{k^2}-\frac{\Sigma_1^2-\Sigma_2}{k(k-1)}
=\frac{k\Sigma_2-\Sigma_1^2}{k^2(k-1)}
=\frac{1}{k-1}\Bigl(\frac{\Sigma_2}{k}-\xi_i^2\Bigr)
=\frac{\operatorname{Var}(a)}{k-1},
\]
the population variance of the $k$ knots in the window. For a uniform grid the window is an arithmetic progression of $k$ terms with step $h$, so $\operatorname{Var}(a)=h^2(k^2-1)/12$, independently of $i$, and
\[
\xi_i^2-\eta_i=\frac{h^2(k^2-1)}{12(k-1)}=\frac{k+1}{12}h^2 .
\]
Substituting and using $\sum_iN_{i,k}=1$ gives the claim. For $k=1$ the Greville abscissae are the knots, $N_{i,1}(\xi_j)=\delta_{ij}$, and the displayed formula for $M_2$ follows from the two hat weights.
\end{proof}

\begin{remark}
For the Schoenberg operator the quantity $M_2$ computed here is exactly the second moment $m_2$ of the representing measures of Section~\ref{sec:rigidity}, so Lemma~\ref{lem:secondmoment} also identifies $m_2$ for that operator: $m_2\equiv(k+1)h^2/12$. In particular the general cellwise bound $\sup_Im_2\ge2c_kh^2$ of Lemma~\ref{lem:m2lower} is far from sharp for Schoenberg's operator, as it must be, since $c_k$ is a constant valid for the whole class.

The identity is exact and $t$-independent, which is what removes the last unspecified constant from this section: below, the lower and upper per-step contributions coincide, and the constant in Corollary~\ref{cor:sharpbudget} becomes $e^{-2c}$ with no residual ratio. The restriction $k\ge2$ is not technical: it is exactly the case in which the third Marsden identity is available, and the $k=1$ formula shows the statement is false without it.
\end{remark}

\subsection{The tower}

Fix $k\ge2$, $\rho>0$, a grid of spacing $h$ on $J=[0,\rho+(k+1)h]$, and let $J_h$ be as above, so $[(k+1)h,\rho]\subset J_h$. Fix $\gamma\in(0,1)$ and $\mu>0$ with
\begin{equation}\label{eq:gammamu}
\gamma+\mu\,|J|\le 1,
\end{equation}
and set
\[
g(t)=\gamma t+\tfrac{\mu}{2}t^2,\qquad \varphi_\ell:=qg,\qquad \widehat\varphi_\ell:=qVg\quad(\ell=0,\dots,L-1),
\]
with $q>0$. Then $g(0)=0$, $g'=\gamma+\mu t\in[\gamma,1]$ on $J$, $g''\equiv\mu$, and
\[
\lambda_\ell=q\|g'\|_{L^\infty(J)}\le q,\qquad m:=q\gamma\le\inf_J q(Vg)' .
\]

\begin{lemma}\label{lem:exactquad}
For the quadratic $g$ above, $Vg(t)-g(t)=\tfrac{\mu}{2}M_2(t)$ for all $t$, and $(Vg)'\ge\gamma$ on $J$.
\end{lemma}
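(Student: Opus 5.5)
The plan is to compute $Vg$ directly from the definition of the Schoenberg operator and the three Marsden identities recorded in the proof of Lemma~\ref{lem:secondmoment}. Since $Vg(t)=\sum_i g(\xi_i)N_{i,k}(t)$ and $g(\xi_i)=\gamma\xi_i+\tfrac{\mu}{2}\xi_i^2$, I would expand
\[
Vg(t)=\gamma\sum_i\xi_iN_{i,k}(t)+\tfrac{\mu}{2}\sum_i\xi_i^2N_{i,k}(t).
\]
The first sum equals $t$ by the second Marsden identity. For the second I write $\xi_i^2=(\xi_i-t)^2+2t\xi_i-t^2$ and use $\sum_iN_{i,k}=1$ and $\sum_i\xi_iN_{i,k}=t$. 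This gives $\sum_i\xi_i^2N_{i,k}(t)=M_2(t)+t^2$, and hence $Vg(t)-g(t)=\tfrac{\mu}{2}M_2(t)$.

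Two points need care. First, the identity holds for \emph{all} $t\in J$ and not only on $J_h$, because only the first two Marsden identities enter, and these hold wherever the B-splines form a partition of unity. This requires the usual extended knot sequence, so that every $t\in J$ is covered. Second, the equality $M_2=\tfrac{k+1}{12}h^2$ is not needed here, only the definition of $M_2$.

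For the derivative bound I would avoid differentiating $M_2$. The cleanest route is the standard B-spline derivative formula
\[
(Vg)'(t)=\sum_i\frac{g(\xi_{i})-g(\xi_{i-1})}{\xi_{i}-\xi_{i-1}}\,N_{i,k-1}(t).
\]
This formula holds because $Vg$ has coefficients $g(\xi_i)$ and $\xi_i-\xi_{i-1}=(t_{i+k}-t_i)/k$. By the mean value theorem each divided difference equals $g'(\zeta_i)$ for some $\zeta_i\in[\xi_{i-1},\xi_i]$. I then need $g'(\zeta_i)\ge\gamma$. Since $g'=\gamma+\mu s\ge\gamma$ holds exactly for $s\ge0$, the Greville abscissae involved must be nonnegative. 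The weights $N_{i,k-1}$ are nonnegative and sum to $1$, so $(Vg)'(t)\ge\gamma$ follows.

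The main obstacle is the left boundary. If the extended knots lie below $0$, the Greville points near $t=0$ can be negative, and $g'(\zeta_i)$ may then drop below $\gamma$. My first attempt would be to use the clamped (open) knot vector at both ends of $J$. Then all $\xi_i\in J$, and in particular $\xi_i\ge0$, so the mean value argument applies; with clamped knots the partition of unity and linear reproduction also hold on all of $J$, which keeps the first paragraph valid everywhere. If instead the paper intends a bi-infinite uniform grid, I would fall back on the fact that $g$ is a quadratic. Its divided difference on $[\xi_{i-1},\xi_i]$ equals $g'$ at the midpoint exactly, namely $\gamma+\mu(\xi_{i-1}+\xi_i)/2$. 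The relevant midpoints near the left end must then be checked to be $\ge0$, and the cases where they are not would need to be handled separately.
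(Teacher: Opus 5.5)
Your proposal is correct and is essentially the paper's proof: the identity comes from expanding the quadratic $g$ exactly and using $\sum_i N_{i,k}=1$ and $\sum_i\xi_iN_{i,k}=t$, and the lower bound comes from writing $(Vg)'$ as a convex combination of divided differences of $g$, each at least $\inf_J g'=\gamma$. Your remark that this last step needs the Greville abscissae to lie in $J$ (e.g.\ clamped knots) makes explicit an assumption the paper uses tacitly. In the bi-infinite uniform case your fallback also closes at once: for $k\ge2$ the constancy of $M_2$ from Lemma~\ref{lem:secondmoment} holds on the whole line, so the first part gives $(Vg)'=g'\ge\gamma$ on $J$.
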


\begin{proof}
By Marsden's identity,
$Vg(t)=\sum_ig(\xi_i)N_i(t)=g(t)+g'(t)\sum_i(\xi_i-t)N_i(t)+\tfrac{\mu}{2}\sum_i(\xi_i-t)^2N_i(t)$,
and the middle sum vanishes. The derivative bound is the variation-diminishing property recalled in Section~\ref{sec:regimes}: $(Vg)'$ is a convex combination of divided differences of $g$, each of which is at least $\inf_Jg'=\gamma$.
\end{proof}

Write $x_{\ell+1}=\varphi(x_\ell)$, $\hat x_{\ell+1}=\widehat\varphi(\hat x_\ell)$, $x_0=\hat x_0\in(0,\rho]$, and $e_\ell=\hat x_\ell-x_\ell$. Put
\[
\beta:=\frac{q\mu}{2}\cdot\frac{k+1}{12}h^2 ,
\]
by Lemma~\ref{lem:secondmoment}. Since $M_2$ is constant on $J_h$, the upper and lower per-step contributions coincide: with $B:=\beta$, both halves of the recurrence below carry the same constant. Consistently with Theorem~\ref{thm:Seta}, where the layer error is only ever evaluated on the region actually visited, we measure the layer errors of this section on $J_h$:
\[
\varepsilon_\ell:=\|\varphi_\ell-\widehat\varphi_\ell\|_{L^\infty(J_h)}=\beta ,
\]
so that $\sum_{\ell<L}\varepsilon_\ell=L\beta$ exactly. On all of $J$ one would have to absorb the boundary weights of the Schoenberg operator into the constant, which we avoid.

\begin{proposition}[No cancellation, with a two-sided recurrence]\label{prop:sharp}
Assume \eqref{eq:gammamu} and $x_0\ge(k+1)h/\min(1,m)^{L}$. Suppose $\hat x_\ell\le\rho$ for all $\ell<L$. Then $e_\ell\ge0$ for all $\ell$ and
\begin{equation}\label{eq:accum}
m\,e_\ell+\beta\ \le\ e_{\ell+1}\ \le\ q\,e_\ell+B ,
\end{equation}
hence
\[
\beta\cdot\frac{m^L-1}{m-1}\ \le\ e_L\ \le\ B\cdot\frac{q^L-1}{q-1}
\]
(with the usual reading $\beta L$, resp.\ $BL$, when $m=1$, resp.\ $q=1$).
\end{proposition}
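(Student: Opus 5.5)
The plan is to prove \eqref{eq:accum} by induction on $\ell$, carrying along three hypotheses: $e_\ell\ge0$, $x_\ell\in J_h$ and $\hat x_\ell\in J$. The two-sided bound on $e_L$ then follows by unrolling the linear recurrence. The base case is $e_0=0$, together with $x_0=\hat x_0\in[(k+1)h,\rho]\subset J_h$; the lower bound on $x_0$ holds because $\min(1,m)^L\le1$.

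\textbf{Step 1 (one-step decomposition).} Split the error as
\[
e_{\ell+1}=q\bigl[Vg(\hat x_\ell)-Vg(x_\ell)\bigr]+q\bigl[Vg(x_\ell)-g(x_\ell)\bigr].
\]
\begin{itemize}[nosep]
\item \emph{Second bracket.} By Lemma~\ref{lem:exactquad} it equals $\tfrac{\mu}{2}M_2(x_\ell)$. Since $x_\ell\in J_h$, Lemma~\ref{lem:secondmoment} makes it exactly $\tfrac{\mu}{2}\cdot\tfrac{k+1}{12}h^2$, so the second term contributes exactly $\beta=B$.
\item \emph{First bracket.} Write it as $\int_{x_\ell}^{\hat x_\ell}(Vg)'$; this is legitimate because $Vg$ is a continuous spline. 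Since $e_\ell\ge0$, a two-sided bound $\gamma\le(Vg)'\le1$ on $J$ gives $m e_\ell\le q[\cdots]\le q e_\ell$. The lower derivative bound is Lemma~\ref{lem:exactquad}. The upper bound comes from the same variation-diminishing argument: $(Vg)'$ is a convex combination of divided differences of $g$, each at most $\sup_Jg'\le\gamma+\mu|J|\le1$ by \eqref{eq:gammamu}.
\end{itemize}
Combining the two pieces yields \eqref{eq:accum} at step $\ell$. It also gives $e_{\ell+1}\ge\beta>0$, which propagates nonnegativity.

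\textbf{Step 2 (localisation).} This is the step I expect to be the real content, since the exact constant $\beta$ is only available on $J_h$.
\begin{itemize}[nosep]
\item \emph{Upper side.} Nonnegativity gives $x_\ell\le\hat x_\ell\le\rho$, using the standing assumption $\hat x_\ell\le\rho$ for $\ell<L$.
\item \emph{Lower side.} Since $\mu>0$, we have $g(t)\ge\gamma t$ for $t\ge0$, so $x_{\ell+1}=qg(x_\ell)\ge m x_\ell$. Iterating gives $x_\ell\ge m^\ell x_0\ge\min(1,m)^L x_0\ge(k+1)h$, by the hypothesis on $x_0$.
\end{itemize}
Hence $x_\ell\in[(k+1)h,\rho]\subset J_h$. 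Also $\hat x_\ell\in[x_\ell,\rho]\subset J$, so the derivative bounds of Step 1 apply on the whole segment $[x_\ell,\hat x_\ell]$. Only indices $\ell<L$ are needed for the recurrence, which matches the range of the assumption $\hat x_\ell\le\rho$.

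\textbf{Step 3 (unrolling).} From $e_0=0$, induction gives $e_L\ge\beta\sum_{j<L}m^j$ and $e_L\le B\sum_{j<L}q^j$. These are the two geometric sums in the statement, with the stated convention $\beta L$ (resp.\ $BL$) when $m=1$ (resp.\ $q=1$).
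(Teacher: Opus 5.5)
Your proof is correct and follows essentially the paper's route. You use the same splitting of $e_{\ell+1}$ into a propagation term $q\bigl[Vg(\hat x_\ell)-Vg(x_\ell)\bigr]$ and the exact defect $q\cdot\frac{\mu}{2}M_2(x_\ell)=\beta$, the same induction placing $x_\ell$ in $[(k+1)h,\rho]\subset J_h$ via $x_\ell\ge m^\ell x_0$ and $x_\ell\le\hat x_\ell\le\rho$, and the same unrolling. The only cosmetic difference is that you bound the propagation term above by $(Vg)'\le\sup_J g'\le 1$ from the divided-difference representation, whereas the paper cites the Lip-diminishing property of $V$ with $\|g'\|_{L^\infty(J)}\le 1$; these are the same fact.
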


\begin{proof}
By $g'>0$ and Lemma~\ref{lem:exactquad}, both $\varphi$ and $\widehat\varphi$ are increasing. We prove $e_\ell\ge0$ and $x_\ell,\hat x_\ell\in J_h$ by induction. For $\ell=0$ this holds by the assumption on $x_0$. Assume it at $\ell$. Since $g(t)\le t$ on $J$ by \eqref{eq:gammamu} and $g(t)\ge\gamma t$, we get $m^\ell x_0\le x_\ell$, so $x_\ell\ge(k+1)h$, and $x_\ell\le\hat x_\ell\le\rho$ by hypothesis; hence both lie in $J_h$. Now
\[
e_{\ell+1}=q\bigl[Vg(\hat x_\ell)-Vg(x_\ell)\bigr]+q\bigl[Vg-g\bigr](x_\ell)
\ \ge\ q\gamma\,e_\ell+\tfrac{q\mu}{2}M_2(x_\ell)\ \ge\ m e_\ell+\beta,
\]
using monotonicity of $Vg$ with $(Vg)'\ge\gamma$ for the first bracket and Lemmas~\ref{lem:exactquad}, \ref{lem:secondmoment} for the second, the latter now with equality; both terms are nonnegative, so $e_{\ell+1}\ge0$. For the upper bound, $V$ is Lip-diminishing and $\|g'\|_{L^\infty(J)}\le1$ by \eqref{eq:gammamu}, so $|Vg(\hat x_\ell)-Vg(x_\ell)|\le e_\ell$, while $[Vg-g](x_\ell)=\tfrac\mu2M_2(x_\ell)=\tfrac\mu2\cdot\tfrac{k+1}{12}h^2$ by Lemmas~\ref{lem:exactquad} and \ref{lem:secondmoment}; this gives $e_{\ell+1}\le qe_\ell+B$ with $B=\beta$. Iterating \eqref{eq:accum} from $e_0=0$ gives both geometric sums.
\end{proof}

\begin{corollary}[The constant in Theorem~\ref{thm:Seta} cannot be improved to $o(1)$]\label{cor:sharpbudget}
Let $c>0$ and $L\ge2c$. Choose
\[
q=1+\frac cL,\qquad \gamma=\frac{1-c/L}{1+c/L}\quad(\text{so }m=q\gamma=1-\tfrac cL),
\qquad \mu=\frac{1-\gamma}{2|J|},
\]
which satisfies \eqref{eq:gammamu}, and $x_0=(k+1)h/m^L$. Then for every $h$ small enough that
\[
e^{3c}(k+1)h\le\frac\rho4
\qquad\text{and}\qquad
e^{c}LB\le\frac\rho4 ,
\]
the standing hypothesis of Proposition~\ref{prop:sharp} holds, so the construction is unconditional, and
\[
e_L\ \ge\ e^{-2c}\beta L=e^{-2c}\sum_{\ell<L}\varepsilon_\ell ,
\]
that is,
\[
\|F-\widehat F\|_{C^0}\ \ge\ e^{-2c}\sum_{\ell<L}\varepsilon_\ell ,
\]
with no residual ratio of constants, because $M_2$ is constant by Lemma~\ref{lem:secondmoment}.
The layer errors do not cancel, and the accumulation is linear in $L$, matching Theorem~\ref{thm:Seta} up to a constant depending only on $k$ and $c$.
\end{corollary}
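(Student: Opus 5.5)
The plan is to verify the hypotheses of Proposition~\ref{prop:sharp} for the stated parameters, and then to bound the geometric sum $\frac{m^L-1}{m-1}$ from below by $e^{-2c}L$.

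\emph{Step 1: the parameters.} With $q=1+c/L$ and $\gamma=(1-c/L)/(1+c/L)$ we get $m=q\gamma=1-c/L\in[1/2,1)$, since $L\ge2c$, and $\gamma\in(0,1)$. The choice $\mu=(1-\gamma)/(2|J|)$ gives $\gamma+\mu|J|=(1+\gamma)/2\le1$, which is \eqref{eq:gammamu}. Because $m<1$, we have $\min(1,m)^L=m^L$, so the choice $x_0=(k+1)h/m^L$ meets the lower threshold exactly. For the size of $x_0$, note $m^L=(1-c/L)^L\ge e^{-2c}$, which holds for $c/L\le1/2$ by $\log(1-x)\ge-2x$ on $[0,1/2]$. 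Hence $x_0\le e^{2c}(k+1)h\le e^{3c}(k+1)h\le\rho/4$, and in particular $x_0\in(0,\rho]$.

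\emph{Step 2: the standing hypothesis $\hat x_\ell\le\rho$.} This is the step needing care, since Proposition~\ref{prop:sharp} assumes it rather than proving it. I would argue by induction on $\ell$, checking $\hat x_\ell\le\rho$ at each stage.
\begin{itemize}
\item \emph{Exact trajectory.} Since $g(t)\le t$ on $J$, we have $\varphi(t)=qg(t)\le qt$, so $x_\ell\le q^\ell x_0\le e^{c}x_0\le\rho/4$.
\item \emph{Error.} As long as the trajectory has stayed in $J_h\cap[0,\rho]$, the upper half of \eqref{eq:accum} holds. It gives $e_\ell\le B\frac{q^\ell-1}{q-1}\le B\,\ell\,q^{\ell}\le e^{c}LB\le\rho/4$.
\item \emph{Conclusion.} Therefore $\hat x_\ell=x_\ell+e_\ell\le\rho/2\le\rho$.
\end{itemize}
The induction is legitimate because the proof of Proposition~\ref{prop:sharp} establishes \eqref{eq:accum} step by step, using only the hypothesis at earlier indices. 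I would state this explicitly, so that the argument is not circular.

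\emph{Step 3: the lower bound.} With the hypotheses in hand, Proposition~\ref{prop:sharp} gives
\[
e_L\ \ge\ \beta\,\frac{1-m^L}{1-m}=\beta\sum_{j<L}m^j\ \ge\ \beta L m^{L}\ \ge\ e^{-2c}\beta L .
\]
Since $\varepsilon_\ell=\beta$ on $J_h$, and all evaluation points lie in $J_h$ by Step 2, this is exactly $e^{-2c}\sum_{\ell<L}\varepsilon_\ell$. Finally, $F(x_0)-\widehat F(x_0)=-e_L$, so $\|F-\widehat F\|_{C^0}\ge e_L$. The exactness of $M_2$ from Lemma~\ref{lem:secondmoment} is what lets $\beta$ serve both as the per-step gain and as $\varepsilon_\ell$, with no ratio of constants.

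The main obstacle is Step 2, specifically making the containment induction interlock with Proposition~\ref{prop:sharp}. The elementary estimate $(1-x)^L\ge e^{-2cL\cdot x/c}$, i.e.\ $\log(1-x)\ge-2x$ on $[0,1/2]$, is where the condition $L\ge2c$ enters, and it should be checked explicitly.
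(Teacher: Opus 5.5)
Your proposal is correct and follows the paper's route. Both trajectories are kept in $J_h$ via $x_\ell\le q^\ell x_0$ and the upper half of \eqref{eq:accum}, and the lower half then gives $e_L\ge\beta\sum_{j<L}m^j\ge e^{-2c}\beta L$ by $\log(1-x)\ge-2x$ on $[0,\tfrac12]$. The differences are cosmetic: you bound $x_\ell$ through $x_0\le e^{2c}(k+1)h$ and $q^\ell\le e^c$, which gives the same constant $e^{3c}$ as the paper's direct estimate of $\gamma^{-L}$ (despite the paper's remark that this chain would not); you use $Lm^L$ where the paper uses $Lm^{L-1}$; and you spell out the index-by-index induction that the paper leaves implicit.
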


\begin{proof}
Both bounds of \eqref{eq:accum} are available once the trajectories stay in $J_h$, so we verify this. By \eqref{eq:gammamu}, $x_\ell\le q^\ell x_0\le q^Lx_0$, and since $m=q\gamma$,
\[
q^Lx_0=(k+1)h\Bigl(\frac qm\Bigr)^L=(k+1)h\,\gamma^{-L}\le e^{3c}(k+1)h\le\frac\rho4 ,
\]
where $\gamma^{-L}=\bigl((1+c/L)/(1-c/L)\bigr)^L\le e^{3c}$ because $\log(1+x)\le x$ and $-\log(1-x)\le2x$ for $0\le x=c/L\le\tfrac12$. In particular $x_\ell\le\rho/4$; we do not pass through the cruder chain $x_0\le e^{2c}(k+1)h$, $x_\ell\le e^cx_0$, which would not give the stated constant. The upper half of \eqref{eq:accum} gives
$e_\ell\le B(q^\ell-1)/(q-1)\le Bq^{L-1}L\le e^cLB\le\rho/4$,
so $\hat x_\ell=x_\ell+e_\ell\le\rho/2\le\rho$ and $x_\ell\ge m^\ell x_0\ge(k+1)h$; both trajectories lie in $J_h$. For the lower bound, $\sum_{j<L}m^j\ge Lm^{L-1}\ge L(1-c/L)^{L-1}\ge Le^{-2c}$, using $\log(1-x)\ge-2x$ for $0\le x\le\tfrac12$, which is where $L\ge2c$ is used. Note that $B\to0$ as $h\to0$ with all other parameters fixed, so the two smallness conditions are satisfiable.
\end{proof}

\begin{corollary}[Exponential amplification without a budget]\label{cor:sharpexp}
Let $q>1$ be fixed with $m=q\gamma>1$, and let $x_0=(k+1)h$. If
\[
q^Lx_0\le\frac\rho2
\qquad\text{and}\qquad
B\,\frac{q^L-1}{q-1}\le\frac\rho2 ,
\]
then the hypotheses of Proposition~\ref{prop:sharp} hold and
\[
e_L\ \ge\ \beta\,\frac{m^L-1}{m-1}.
\]
\end{corollary}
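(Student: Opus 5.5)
The plan is to reduce Corollary~\ref{cor:sharpexp} to Proposition~\ref{prop:sharp}, as in the proof of Corollary~\ref{cor:sharpbudget}. The conclusion $e_L\ge\beta(m^L-1)/(m-1)$ is exactly the lower half of the proposition's two-sided bound. So it suffices to check the proposition's hypotheses: \eqref{eq:gammamu}, the condition $x_0\ge(k+1)h/\min(1,m)^L$, and $\hat x_\ell\le\rho$ for all $\ell<L$. I take \eqref{eq:gammamu} as part of the standing setup of the tower, where $\gamma,\mu$ are fixed with $\gamma+\mu|J|\le1$. With $m>1$, $\min(1,m)=1$, so the start condition reduces to $x_0\ge(k+1)h$, which holds with equality.

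The real work is localisation of both trajectories in $J_h$. I will argue by induction on $\ell$, with the same induction as in the proof of Proposition~\ref{prop:sharp}. For the exact trajectory, \eqref{eq:gammamu} gives $g(t)\le t$ on $J$, so $\varphi(t)=qg(t)\le qt$. It also gives $g(t)\ge\gamma t$, so $\varphi(t)\ge mt\ge t$. Hence $(k+1)h\le x_0\le x_\ell\le q^\ell x_0\le q^Lx_0\le\rho/2$, provided all previous iterates stayed in $J$ so that these inequalities apply.

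For the discrete trajectory, I use the upper half of \eqref{eq:accum}, namely $e_{\ell+1}\le qe_\ell+B$. This step needs only $x_\ell,\hat x_\ell\in J_h$ at stage $\ell$, which is the induction hypothesis, together with $e_\ell\ge0$. It gives $e_\ell\le B(q^\ell-1)/(q-1)\le B(q^L-1)/(q-1)\le\rho/2$. Therefore $\hat x_\ell=x_\ell+e_\ell\le\rho$, and $\hat x_\ell\ge x_\ell\ge(k+1)h$, which closes the induction.

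The main obstacle is the apparent circularity: the two-sided recurrence is proved in Proposition~\ref{prop:sharp} under the global hypothesis $\hat x_\ell\le\rho$, while here I want to derive that hypothesis. I would resolve it by interleaving. The one-step estimates in that proof use only membership of $x_\ell,\hat x_\ell$ in $J_h$ at the current step, so one runs the bound and the localisation simultaneously in a single induction, exactly as in the proof of Corollary~\ref{cor:sharpbudget}. The two displayed smallness conditions are precisely what closes the induction, each contributing $\rho/2$. Once every iterate lies in $J_h$, Proposition~\ref{prop:sharp} applies verbatim. The geometric sum from its lower recurrence, started at $e_0=0$, gives the claim.
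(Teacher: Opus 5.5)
Your proposal is correct and follows the paper's proof. It traps the exact trajectory in $[x_0,q^Lx_0]\subseteq[(k+1)h,\rho/2]$ via \eqref{eq:gammamu}, bounds $e_\ell$ by $B(q^L-1)/(q-1)\le\rho/2$ using the upper (not the lower) half of \eqref{eq:accum}, closes the localisation by an interleaved induction, and then reads off the lower half of \eqref{eq:accum}; your explicit remarks that $\min(1,m)=1$ reduces the start condition to $x_0\ge(k+1)h$, and that $e_\ell\ge0$ must be carried along in the induction, are details the paper leaves implicit.
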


\begin{proof}
By \eqref{eq:gammamu}, $x_{\ell+1}\le qx_\ell$, so $x_\ell\le q^Lx_0\le\rho/2$. The \emph{upper} half of \eqref{eq:accum} --- and not the lower one, which controls nothing from above --- gives $e_\ell\le B(q^\ell-1)/(q-1)\le\rho/2$, so $\hat x_\ell=x_\ell+e_\ell\le\rho$, and the standing hypothesis of Proposition~\ref{prop:sharp} is verified inductively. The lower bound is then the lower half of \eqref{eq:accum}. Note that the upper dynamics is governed by $q$ and the lower one by $m=q\gamma$.
\end{proof}

\begin{remark}[Saturation is unavoidable and is stated as such]
The two conditions in Corollary~\ref{cor:sharpexp} express that the accumulated error has not yet filled the domain; no exponential lower bound can hold beyond that point, since both trajectories stay in $J$. Equivalently, an expanding map admits no invariant compact interval, and the construction compensates by starting at the scale $q^{-L}\rho$. This is why the exponential statement is necessarily of the form ``for all $L$ below an explicit saturation threshold''.
\end{remark}

\subsection{Non-cancellation for the whole Lip-diminishing class}
\label{subsec:universal}

Corollaries~\ref{cor:sharpbudget} and \ref{cor:sharpexp} concern one operator, Schoenberg's. We now show that non-cancellation is a property of the whole class of Section~\ref{sec:rigidity}: for \emph{every} linear Lip-diminishing spline-valued operator there is a stable tower on which the layer errors add up rather than cancel. The construction is different from the one above --- it makes the exact trajectory a fixed point --- and it uses Theorem~\ref{thm:T4e} as its only quantitative input.

\begin{theorem}[Universal non-cancellation]\label{thm:universal}
Let $Q_h:C^1(J)\to\Scal_{k,\Delta_h}$ be linear, Lip-diminishing and reproduce affine functions; no further hypothesis is imposed. Let $c>0$, $L\ge2c$ and $\sigma\in(0,1)$. Then there exist $t_*$ in the interior of $J$ and a profile $\omega$ such that, for \emph{every} $\eta>0$ satisfying the two explicit conditions \eqref{eq:etacond} below, the scalar depth-$L$ tower with all edges equal to
\[
\varphi:=\mathrm{id}+\eta\,\omega
\]
satisfies: $\Lip(\varphi)\le1+c/L$ and $\Lip(\widehat\varphi)\le1+c/L$, hence the budget; the exact trajectory issued from $x_0=t_*$ is constant; both trajectories remain in $J$; and
\[
\|F-\widehat F\|_{C^0}\ \ge\ (1-\sigma)\,e^{-2c}\sum_{\ell<L}\varepsilon_\ell,
\qquad
\varepsilon_\ell=\|\varphi-\widehat\varphi\|_{L^\infty(J)} .
\]
Moreover the profile can be chosen as follows, according to a dichotomy on $m_1$:
\begin{enumerate}[label=\textup{(\alph*)},nosep]
\item if $m_1\equiv0$, then $\omega$ is a cubic, $\|\varphi'''\|_\infty=\eta$, and $\varepsilon_\ell\ge\eta\,\Theta_k|J|h^2$ with $\Theta_k$ from Theorem~\ref{thm:T4e};
\item if $m_1\not\equiv0$, then $\omega$ is a quadratic, $\varphi'''\equiv0$, and $\varepsilon_\ell=\eta\,\|E_2\|_{L^\infty(J)}$ with $E_2=Q_hf_2-f_2$, a quantity that is positive but for which we claim no rate in $h$. (The defect is unchanged by subtracting the affine tangent, since $Q_ha=a$; hence $\|D\|_\infty=\|E_2\|_\infty$ and not $\|E_2-E_2(t_*)\|_\infty$.)
\end{enumerate}
\end{theorem}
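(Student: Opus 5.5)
The approach makes the exact tower sit at a fixed point $t_*$, where $\omega(t_*)=0$. The discrete tower then drifts away from $t_*$ by a one-signed defect at each step, and these defects add up. Write $D:=Q_h\omega-\omega$. By linearity and $Q_h\,\mathrm{id}=\mathrm{id}$ we get $\widehat\varphi=\mathrm{id}+\eta Q_h\omega$, so $\widehat\varphi-\varphi=\eta D$ and $\varepsilon_\ell=\eta\|D\|_{L^\infty(J)}$.

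The budget is immediate. $\Lip(\varphi)\le1+\eta\|\omega'\|_\infty$, and by \eqref{eq:lipdim} also $\Lip(\widehat\varphi)\le1+\eta\|(Q_h\omega)'\|_\infty\le1+\eta\|\omega'\|_\infty$. The first condition in \eqref{eq:etacond} is therefore $\eta\|\omega'\|_\infty\le c/L$.

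The profile follows the dichotomy of Theorem~\ref{thm:T4e}.
\begin{itemize}[nosep]
\item In case (b), $m_1\not\equiv0$. Pick $t_*$ interior with $|E_2(t_*)|$ close to $\|E_2\|_\infty$ (say, a point of near-maximal $|E_2|$). Set $\omega:=\pm\bigl(f_2-f_2(t_*)-f_2'(t_*)(\cdot-t_*)\bigr)$, with the sign chosen so that $D(t_*)>0$. Since $Q_h$ reproduces affines, $D=\pm E_2$, which gives $\varepsilon_\ell=\eta\|E_2\|_\infty$.
\item In case (a), $m_1\equiv0$. Take $\omega$ to be the cubic $(t-t_c)^3/6$ shifted by an affine function so that $\omega(t_*)=0$. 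The affine part does not change $D$. By the proof of Theorem~\ref{thm:T4e}, $D$ is nondecreasing with $\operatorname{osc}_JD\ge\Theta_k|J|h^2$, so $\|D\|_\infty\ge\tfrac12\operatorname{osc}D$. The constant is to be adjusted (I would test with $f_3$ directly and track the factor). Then pick $t_*$ where $|D|$ is nearly maximal.
\end{itemize}

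In both cases I aim for the same local picture: $D(t_*)\ge(1-\sigma/2)\|D\|_\infty>0$, with $t_*$ interior. By continuity of $D$ there is then a neighbourhood $U=[t_*-r,t_*+r]\subset J$ on which $D\ge(1-\sigma)\|D\|_\infty$. If the maximum of $|D|$ sits at an endpoint, I would take $t_*$ interior nearby, which only costs the slack $\sigma$.

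The recurrence mirrors Proposition~\ref{prop:sharp}. The exact trajectory satisfies $x_\ell\equiv t_*$. For $e_\ell=\hat x_\ell-t_*$ we have
\[
e_{\ell+1}=\widehat\varphi(\hat x_\ell)-\widehat\varphi(t_*)+\eta D(t_*).
\]
Because $\widehat\varphi'\ge1-\eta\|\omega'\|_\infty\ge1-c/L$, and by induction $e_\ell\ge0$ with $\hat x_\ell\in U$, this gives $e_{\ell+1}\ge(1-c/L)e_\ell+(1-\sigma)\varepsilon$. Summing as in Corollary~\ref{cor:sharpbudget}, using $L\ge2c$, yields $e_L\ge(1-\sigma)e^{-2c}L\varepsilon$.

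This lower bound also needs the defect to stay large along the trajectory, so I would actually write the increment as $\widehat\varphi(\hat x_\ell)-\widehat\varphi(t_*)$ and bound $D$ on $U$ only through the fixed point, as above. Localisation comes from the upper recurrence $e_{\ell+1}\le(1+c/L)e_\ell+\varepsilon$, which gives $e_\ell\le e^cL\eta\|D\|_\infty$. The second condition in \eqref{eq:etacond} is therefore $e^cL\eta\|D\|_\infty\le r$, which keeps $\hat x_\ell\in U\subset J$.

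The main obstacle is the choice of $t_*$ and the neighbourhood $U$. The radius $r$ depends on $Q_h$, but it is fixed before $\eta$, which is why the statement quantifies $t_*,\omega$ first and $\eta$ afterwards. I would also check carefully that $D(t_*)$ rather than only $\|D\|_\infty$ is large, including the case-(a) constant relating $\|D\|_\infty$ to $\Theta_k|J|h^2$.
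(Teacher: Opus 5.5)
Your argument is essentially the paper's. You pin the exact orbit at $t_*$ by an affine correction that leaves $D$ unchanged, and you choose the sign so that $D(t_*)>0$. You then use $e_{\ell+1}=\widehat\varphi(\hat x_\ell)-\widehat\varphi(t_*)+\eta D(t_*)$ to get the two-sided recurrence with factors $1\mp c/L$, and the upper half gives the localisation. The neighbourhood $U$ on which $D\ge(1-\sigma)\|D\|_\infty$ is superfluous. As your own final decomposition shows, the defect enters only through the single number $D(t_*)$. So you only need $\hat x_\ell\in J$, and you may take $r=\dist(t_*,\partial J)$, which is exactly the paper's second condition in \eqref{eq:etacond}.

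On the constant in (a), your caution is justified, and you should not try to adjust your way to the stated value. Case B of Theorem~\ref{thm:T4e} gives $\operatorname{osc}_J D=\int_J\tfrac12m_2\ge\Theta_k|J|h^2$, hence only $\|D\|_\infty\ge\tfrac12\Theta_k|J|h^2$. From the oscillation bound alone the factor $\tfrac12$ cannot be improved, since $D$ may be centred. At this point the paper's proof asserts $\operatorname{osc}_J D\ge2\Theta_k|J|h^2$, which is twice what Theorem~\ref{thm:T4e} delivers. Compare the per-cell bound $\int_I\tfrac12m_2\ge\Theta_kh^3$ used in Corollary~\ref{cor:prescribed}. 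So, as far as either argument goes, (a) holds only in the form $\varepsilon_\ell\ge\tfrac12\eta\Theta_k|J|h^2$. The non-cancellation inequality itself is unaffected, since it uses only $D(t_*)\ge(1-\sigma)\|D\|_\infty>0$.
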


\begin{proof}
\emph{Step 1: the profile.} Fix a sign $s\in\{-1,+1\}$ to be chosen, and set
\[
\omega_0:=\begin{cases} s\,t^3/6, & \text{if }m_1\equiv0\ \ \text{(case (a))},\\[2pt]
s\,t^2/2, & \text{if }m_1\not\equiv0\ \ \text{(case (b))},\end{cases}
\qquad D:=Q_h\omega_0-\omega_0 .
\]
In case (a), as in Case B of the proof of Theorem~\ref{thm:T4e}, $sD'=\tfrac12m_2\ge0$ on cell interiors, so $sD$ is nondecreasing with $\operatorname{osc}_JD\ge2\Theta_k|J|h^2$, hence $\|D\|_\infty\ge\Theta_k|J|h^2>0$. In case (b), $D=sE_2$ with $E_2'=m_1\not\equiv0$, so $E_2$ is not constant and in particular $\|D\|_\infty>0$. In both cases $D\not\equiv0$; no hypothesis on $Q_h$ beyond membership of the class is used.

We do \emph{not} modify $D$. As $D$ is continuous, choose $t_*$ in the interior of $J$ with
\[
|D(t_*)|\ \ge\ (1-\sigma)\|D\|_{L^\infty(J)}\ >\ 0 ,
\qquad \delta:=\dist(t_*,\partial J)>0 ,
\]
and choose the sign $s$ so that $D(t_*)>0$. Let $a$ be the affine function with $a(t_*)=\omega_0(t_*)$, $a'(t_*)=\omega_0'(t_*)$, and put $\omega:=\omega_0-a$; since $Q_h$ reproduces affine functions, $Q_h\omega-\omega=Q_h\omega_0-\omega_0=D$ --- the defect is unchanged --- while $\omega(t_*)=\omega'(t_*)=0$.

\emph{Step 2: the tower.} Let $\eta>0$ be any number satisfying
\begin{equation}\label{eq:etacond}
\eta\,\|\omega'\|_{L^\infty(J)}\le\frac cL
\qquad\text{and}\qquad
\eta\,L\,\|D\|_{L^\infty(J)}\,e^{c}\le\delta ,
\end{equation}
and put $\varphi:=\mathrm{id}+\eta\omega$.
Then $\Lip(\varphi)\le1+\eta\|\omega'\|_\infty\le1+c/L$, and $\Lip(\widehat\varphi)\le\Lip(\varphi)$ because $Q_h$ is Lip-diminishing; also $\widehat\varphi=Q_h\varphi=\mathrm{id}+\eta Q_h\omega$ by affine reproduction and linearity. Since $\omega(t_*)=0$, the exact trajectory from $x_0=t_*$ satisfies $x_\ell\equiv t_*$.

\emph{Step 3: the recurrence.} With $e_\ell:=\hat x_\ell-t_*$,
\[
e_{\ell+1}=e_\ell+\eta\bigl[Q_h\omega(\hat x_\ell)-Q_h\omega(t_*)\bigr]+\eta D(t_*).
\]
Lip-diminishing applied to $\omega$ bounds $\sup|(Q_h\omega)'|$ on cell interiors by $\|\omega'\|_\infty$, and since $Q_h\omega$ is continuous and piecewise polynomial, hence absolutely continuous, this is also a bound on $\Lip(Q_h\omega)$; so the middle term is bounded in absolute value by $\eta\|\omega'\|_\infty|e_\ell|\le(c/L)|e_\ell|$. Hence
\[
\Bigl(1-\frac cL\Bigr)e_\ell+\eta D(t_*)\ \le\ e_{\ell+1}\ \le\ \Bigl(1+\frac cL\Bigr)e_\ell+\eta\|D\|_\infty .
\]
The lower half and $e_0=0$ give $e_\ell\ge0$ for all $\ell$ and
\[
e_L\ \ge\ \eta D(t_*)\sum_{j<L}\Bigl(1-\frac cL\Bigr)^{j}\ \ge\ \eta D(t_*)\,L\Bigl(1-\frac cL\Bigr)^{L-1}\ \ge\ \eta D(t_*)\,L\,e^{-2c},
\]
using $\log(1-x)\ge-2x$ on $[0,\tfrac12]$ and $L\ge2c$. The upper half gives $e_L\le e^cL\eta\|D\|_\infty\le\delta$ by the second condition in \eqref{eq:etacond}, so $\hat x_\ell=t_*+e_\ell\in J$ for every $\ell$, as required.

\emph{Step 4: conclusion.} $\varepsilon_\ell=\|\varphi-\widehat\varphi\|_\infty=\eta\|Q_h\omega-\omega\|_\infty=\eta\|D\|_\infty$, so $\sum_{\ell<L}\varepsilon_\ell=L\eta\|D\|_\infty$, while
$e_L\ge e^{-2c}L\eta D(t_*)\ge(1-\sigma)e^{-2c}L\eta\|D\|_\infty$.
Finally $\|F-\widehat F\|_{C^0}\ge|F(x_0)-\widehat F(x_0)|=e_L$. In case (a) the quantitative bound on $\|D\|_\infty$ is the one recorded in Step 1; in case (b) no rate in $h$ is claimed.
\end{proof}

\begin{remark}
Three points. (i) The theorem holds for the whole class, with no hypothesis at all beyond membership; but the two branches are not equally quantitative. Non-cancellation itself is universal; the rate $\varepsilon_\ell\gtrsim h^2$ attached to it is available only in branch (a), which contains in particular all operators of derivative order at least two. Branch (b) consists of operators with $m_1\not\equiv0$, which rules out derivative order two or higher --- piecewise linear interpolation is the example --- and for these the size of $\varepsilon_\ell$ is governed by the quadratic defect $E_2$, for which Theorem~\ref{thm:budgetsat} gives the only rate we have. Section~\ref{sec:regimes} states the two levels separately for exactly this reason. (ii) The mechanism is different from Corollary~\ref{cor:sharpbudget}: there the exact trajectory moves and one needs monotonicity of the discretised map, which is a property of Schoenberg's operator and not of the class; here the exact trajectory is a fixed point, and the only structural facts used are $\Lip(Q_h\omega)\le\|\omega'\|_\infty$ and affine reproduction. (iii) The two conditions \eqref{eq:etacond} are explicit and leave $\eta$ free below an explicit threshold; this is what makes Corollary~\ref{cor:prescribed} a statement about a prescribed smoothness ball rather than about a realised value. The budget couples $\eta$ to $L$ through $\eta\|\omega'\|_\infty\le c/L$; consequently $\sum_\ell\varepsilon_\ell$ is $O(1)$ rather than $O(L)$ when the budget is saturated. The content of the theorem is therefore the constant in front of $\sum_\ell\varepsilon_\ell$, not an extra factor of $L$; Section~\ref{sec:regimes} states the consequence with this coupling made explicit.
\end{remark}

\begin{corollary}[Prescribed smoothness ball]\label{cor:prescribed}
Assume in addition $m_1\equiv0$ (branch \textup{(a)}), and replace $Q_h$ by its anchored version $Q_h^{t_c}$, $t_c$ the midpoint of $J$, which belongs to the same class. Assume $h\le|J|/8$. Then $t_*$ may be chosen with $\dist(t_*,\partial J)=|J|/4$ and $|D_0(t_*)|\ge\tfrac18\Theta_k|J|h^2$, and consequently: for every $\kappa_3>0$ with
\begin{equation}\label{eq:kappacond}
\kappa_3\|\omega'\|_{L^\infty(J)}\le\frac cL
\qquad\text{and}\qquad
\kappa_3\,L\,e^{c}\,\|\omega'\|_{L^\infty(J)}\le\frac14
\end{equation}
there is a depth-$L$ tower inside the budget whose edges satisfy $\|\varphi'''\|_\infty=\kappa_3$ exactly and
\[
\|F-\widehat F\|_{C^0}\ \ge\ \tfrac18e^{-2c}\,L\,\kappa_3\,\Theta_k\,|J|\,h^2 .
\]
Both conditions in \eqref{eq:kappacond} are independent of $Q_h$.
\end{corollary}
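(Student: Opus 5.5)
The plan is to rerun the construction of Theorem~\ref{thm:universal} in branch (a), replacing the two qualitative choices there by quantitative ones. The qualitative choices are $t_*$ near the maximiser of $|D|$, and $\eta$ free below a threshold. The anchored operator $Q_h^{t_c}f=Q_hf-(Q_hf-f)(t_c)$ is linear. It reproduces affine functions, because $Q_h$ does and the correction then vanishes. It has the same derivative as $Q_h$, so it is Lip-diminishing with the same measures $\mu_t$. Hence it lies in the class and still has $m_1\equiv0$.

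First I would take $\omega_0=s\,t^3/6$ and $D_0:=Q_h^{t_c}\omega_0-\omega_0$. Then $D_0(t_c)=0$. By Case B of Theorem~\ref{thm:T4e}, $sD_0'=\tfrac12m_2\ge0$, so $sD_0$ is nondecreasing and vanishes at the midpoint.

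The main step is a lower bound for $|D_0(t)|$ at a point $t$ with $\dist(t,\partial J)=|J|/4$, i.e.\ $t=t_c\pm|J|/4$. Take $t=t_c+|J|/4$. Monotonicity gives
\[
sD_0(t)=\int_{t_c}^{t}\tfrac12m_2 .
\]
This interval contains at least $|J|/(4h)-2$ full cells. Using $h\le|J|/8$, that is at least $|J|/(8h)$ cells. On each full cell, Lemmas~\ref{lem:m2lower} and \ref{lem:normeq} give $\int_Im_2\ge2\gamma_dc_kh^3$. Summing, $|D_0(t)|\ge\tfrac18\gamma_dc_k|J|h^2$, which is at least $\tfrac18\Theta_k|J|h^2$ since $\Theta_k=\gamma_dc_k$. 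I expect this cell-counting to be the only delicate point: partial cells must be discarded, and the condition $h\le|J|/8$ is used exactly here. I then choose $t_*=t$ and fix $s$ so that $D_0(t_*)>0$.

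Next I would subtract the tangent affine function $a$ at $t_*$ to get $\omega=\omega_0-a$. Affine reproduction leaves the defect unchanged, so $Q_h^{t_c}\omega-\omega=D_0$, and $\omega(t_*)=\omega'(t_*)=0$. Set $\eta:=\kappa_3$ and $\varphi=\mathrm{id}+\kappa_3\omega$, so that $\varphi'''=\kappa_3 s$ and $\|\varphi'''\|_\infty=\kappa_3$ exactly. The budget follows from the first condition in \eqref{eq:kappacond}, exactly as in Step~2 of Theorem~\ref{thm:universal}.

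For confinement, the upper recurrence gives $e_L\le e^cL\kappa_3\|D_0\|_\infty$, and this has to be at most $\delta=|J|/4$. I need to bound $\|D_0\|_\infty$ in a way independent of $Q_h$. Write $D_0(t)=\int_{t_c}^t\bigl((Q_h\omega)'-\omega'\bigr)$; this holds because both $D_0$ and the defect of $\omega$ vanish at $t_c$ after anchoring. Lip-diminishing bounds the integrand by $2\|\omega'\|_\infty$, so $\|D_0\|_\infty\le|J|\,\|\omega'\|_\infty$. The second condition in \eqref{eq:kappacond} then gives $e_L\le|J|/4=\delta$.

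Note that $\|\omega'\|_\infty$ depends only on $J$, $t_*$ and the cubic. Since $t_*\in\{t_c\pm|J|/4\}$, it does not depend on $Q_h$, so both conditions in \eqref{eq:kappacond} are independent of $Q_h$.

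Finally, Step~3 of Theorem~\ref{thm:universal} gives
\[
e_L\ge e^{-2c}L\kappa_3D_0(t_*)\ge\tfrac18e^{-2c}L\kappa_3\Theta_k|J|h^2 .
\]
Since $\|F-\widehat F\|_{C^0}\ge e_L$, this is the claim.
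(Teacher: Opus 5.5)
Your overall route is the paper's: anchoring at $t_c$, monotonicity of $sD_0$, subtracting the tangent at $t_*$, taking $\eta=\kappa_3$, and bounding $\|D_0\|_{L^\infty(J)}\le|J|\,\|\omega'\|_{L^\infty(J)}$ via Lip-diminishing and $D_0(t_c)=0$. However, the step you flagged as delicate fails.

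\textbf{The cell count.} The inequality $\frac{|J|}{4h}-2\ge\frac{|J|}{8h}$ is equivalent to $h\le|J|/16$. Under the stated hypothesis $h\le|J|/8$ it only gives $\frac{|J|}{4h}-2\ge0$. This is a real shortfall, not just slack in the estimate. Take $J=[0,9h]$, so $N=9$ and $h=|J|/9$ is admissible. Then $[t_c,t_c+|J|/4]=[4.5h,6.75h]$ contains the single full cell $[5h,6h]$. The per-cell bound $\int_I\tfrac12m_2\ge\Theta_kh^3$ therefore yields only $|D_0(t_c+|J|/4)|\ge\Theta_kh^3$, short of the required $\tfrac18\Theta_k|J|h^2=\tfrac98\Theta_kh^3$. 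The other half $[2.25h,4.5h]$ also contains only one full cell, so choosing the better side does not rescue a one-sided count. What is lost is the cell $[4h,5h]$ straddling $t_c$, which is a partial cell of both halves.

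\textbf{The fix.} The paper counts on the whole interval $[t_-,t_+]=[t_c-|J|/4,\,t_c+|J|/4]$. Partial cells are then lost only at its two outer ends, so its full cells have total length at least $|J|/2-2h\ge|J|/4$. This is exactly where $h\le|J|/8$ is used, and it gives $s\bigl(D_0(t_+)-D_0(t_-)\bigr)\ge\tfrac14\Theta_k|J|h^2$. Since $sD_0$ is nondecreasing and $D_0(t_c)=0$, the point $t_c$ splits this increase into two nonnegative pieces. Hence $\max\bigl(|D_0(t_+)|,|D_0(t_-)|\bigr)\ge\tfrac18\Theta_k|J|h^2$, and $t_*$ is whichever of $t_\pm$ attains the maximum. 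In the example there are three full cells, giving $\tfrac32\Theta_kh^3$ on one side.

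\textbf{Consequence for independence of $Q_h$.} Once $t_*$ is chosen this way it depends on $Q_h$, so your independence argument needs adjusting. With $\omega_0=st^3/6$ one has $\omega'=\tfrac s2(t^2-t_*^2)$, whose sup-norm generally differs for $t_+$ and $t_-$. The clean remedy is to centre the cubic, $\omega_0=s(t-t_c)^3/6$. Its anchored defect equals that of $st^3/6$: the two differ by a quadratic plus an affine part, and the quadratic's defect is a multiple of $E_2$, which is constant because $E_2'=m_1\equiv0$ and is removed by anchoring at $t_c$. Then $\omega'=\tfrac s2\bigl((t-t_c)^2-|J|^2/16\bigr)$ for either choice of $t_*$, so \eqref{eq:kappacond} is genuinely independent of $Q_h$. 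Alternatively, you can keep your fixed $t_*=t_c+|J|/4$, but only under the stronger hypothesis $h\le|J|/16$.
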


\begin{proof}
The anchored operator $Q^{t_0}_hf:=Q_hf-(Q_hf-f)(t_0)$ is linear, reproduces affine functions, has unchanged derivative --- hence is Lip-diminishing with the same $\mu_t$, so $m_1,m_2$ are unchanged --- and has range $\Scal_{k,\Delta_h}+\R=\Scal_{k,\Delta_h}$. So it belongs to the class, and its defect is $D_0=D-D(t_0)$.

Take $t_0=t_c$ and $t_\pm=t_c\pm|J|/4$, so $[t_-,t_+]$ has length $|J|/2$. The endpoints need not be grid points, so we may use only the cells entirely contained in $[t_-,t_+]$; their total length is at least $|J|/2-2h\ge|J|/4$ because $h\le|J|/8$, i.e.\ there are at least $|J|/(4h)$ of them. Since $sD'=\tfrac12m_2\ge0$, the function $sD$ is nondecreasing, and summing the per-cell bound $\int_I\tfrac12m_2\ge\Theta_kh^3$ from the proof of Theorem~\ref{thm:T4e} over those cells gives
\[
s\bigl(D(t_+)-D(t_-)\bigr)\ \ge\ \frac{|J|}{4h}\cdot\Theta_kh^3=\tfrac14\Theta_k|J|h^2 .
\]
As $sD(t_-)\le sD(t_c)\le sD(t_+)$ and $D_0(t_c)=0$,
\[
\max\bigl(|D_0(t_+)|,|D_0(t_-)|\bigr)\ \ge\ \tfrac12\bigl|D(t_+)-D(t_-)\bigr|\ \ge\ \tfrac18\Theta_k|J|h^2 ,
\]
so one of $t_\pm$ may serve as $t_*$, with $\dist(t_*,\partial J)=|J|/4$; no constant $(1-\sigma)$ is needed, since $t_*$ is produced explicitly.

It remains to see that \eqref{eq:kappacond} implies \eqref{eq:etacond} with $\delta=|J|/4$ and $\eta=\kappa_3$. Indeed $D_0(t_c)=0$ and
\[
\Lip(D_0)=\Lip(Q_h\omega-\omega)\le\Lip(Q_h\omega)+\Lip(\omega)\le2\|\omega'\|_{L^\infty(J)},
\]
using Lip-diminishing for the first term, so $\|D_0\|_{L^\infty(J)}\le\tfrac{|J|}2\cdot2\|\omega'\|_\infty=|J|\,\|\omega'\|_\infty$; substituting into the second condition of \eqref{eq:etacond} turns it into the second condition of \eqref{eq:kappacond}, which no longer mentions $Q_h$. Finally $\|\varphi'''\|_\infty=\eta\|\omega'''\|_\infty=\eta$ because $\omega$ is a cubic with $\omega'''=\pm1$, and Step 3 of the proof of Theorem~\ref{thm:universal} gives $e_L\ge e^{-2c}L\eta|D_0(t_*)|$.
\end{proof}

\section{Consequences: the cost of exact stability}
\label{sec:regimes}

Target: error $\le\varepsilon$ on $K_0$, depth $L$, budget $1+c/L$, and curvature $\kappa=\sup_\ell\kappa_\ell$ in the sense of Theorem~\ref{thm:Seta}, that is measured by $\|\psi^{(k+1)}\|_\infty$. Regimes~I and II are stated in that normalisation; the lower bounds below use a different one, and we say so where it matters.

\paragraph{Regime I: unconstrained $Q_{G,k}$, $k\ge2$, no projection.}
Two \emph{sufficient} conditions: the budget condition $G\gtrsim\rho(\kappa L/\eta)^{1/k}$ and the accuracy condition $G\gtrsim\rho(\kappa L/\varepsilon)^{1/(k+1)}$. The first dominates for large $L$: the grid is constrained by stability, not by accuracy.

\paragraph{Regime II: the same operator plus rowwise budget projection.}
After projection, rescale row $j$ by $s_{\ell,j}=\min\{1,\lambda_\ell/r_{\ell,j}\}$ with $r_{\ell,j}=\sum_i\widehat M_{\ell,ji}$. The budget then holds identically, $\eta=0$, $\rho=e^cR$. The extra error vanishes when $r_{\ell,j}\le\lambda_\ell$; otherwise
\[
\begin{gathered}
(1-s_{\ell,j})r_{\ell,j}=r_{\ell,j}-\lambda_\ell\le A_\ell,\\
\text{hence}\qquad
\sup_{\|u\|_\infty\le\rho}\bigl\|(I-S_\ell)\widehat\Psi_\ell(u)\bigr\|_\infty\le A_\ell\rho=C_{1,k}\kappa h^k\rho,
\end{gathered}
\]
(the left-hand side is a supremum over the ball, not an operator norm: $(I-S_\ell)\widehat\Psi_\ell$ is not linear), of order $h^k$ rather than $h^{k+1}$, so it dominates the spline error by a factor $\rho/h$, giving $G\gtrsim\rho(e^c\kappa L\rho/\varepsilon)^{1/k}$. Comparing I and II, the exponent in $L$ is the same, $L^{1/k}$. Up to operator-dependent constants --- and noting that $\rho$ differs between the regimes, $e^{c+\eta}R$ against $e^cR$ --- projection becomes preferable in a coarse-accuracy regime, at the scale $\varepsilon\gtrsim\eta\rho$. We deliberately do not state this as an equivalence.

\paragraph{A convention for the lower bounds.}
Throughout the $C^0$ lower-bound discussion, discretisation operators are identified modulo output constants,
\[
Q_h\ \sim\ Q_h+C,\qquad C\in\R\ \text{(possibly depending linearly on }f),
\]
because a KAN discretisation uses the anchored representative of its layers and layer Lipschitz budgets depend only on derivatives. Under this identification the anchored operator $Q^{t_0}_h$ is the same object as $Q_h$, which is what licenses passing to it in Corollary~\ref{cor:prescribed}; it is also why the lower bounds are stated for $\operatorname{osc}$ rather than for the sup-norm.

\paragraph{Regime III: an exactly Lip-diminishing linear scheme.}
Continuous piecewise linear interpolation and Schoenberg's operator $V_G$ of any degree satisfy
$(V_Gf)'=\sum_i\frac{f(\xi_{i+1})-f(\xi_i)}{\xi_{i+1}-\xi_i}N_{i,k-1}$,
a convex combination of divided differences \cite{MarsdenSchoenberg,SchoenbergSplines}, whence $\|(V_Gf)'\|_\infty\le\|f'\|_\infty$, entrywise $\widehat M\le M$, $A_\ell=0$, $\eta=0$ and \emph{no threshold}. The price is the barrier of Section~\ref{sec:rigidity}, now two-sided: derivative order $\le2$ (Theorem~\ref{thm:T4b}) and $C^0$ order $\le2$ (Theorem~\ref{thm:T4e}), both attained (Proposition~\ref{ex:midpoint}, Schoenberg). Piecewise linear interpolation realises $C^0$ order $2$ with derivative order $1$.

Two lower bounds follow, and we keep the smoothness normalisation explicit, writing
\[
\kappa_2:=\sup_\ell\|\psi_\ell''\|_\infty,\qquad \kappa_3:=\sup_\ell\|\psi_\ell'''\|_\infty ,
\]
since the upper bounds for Schoenberg's operator are stated on the ball of $\kappa_2$ and the lower bounds of Theorems~\ref{thm:T4e} and \ref{thm:universal} on the ball of $\kappa_3$: these are different balls and must not both be called $\kappa$.

Three statements must be kept apart; conflating them is the error we most want to avoid.

\emph{(a) Universal non-cancellation, without a rate.} By Theorem~\ref{thm:universal}, for \emph{every} operator of the class there is a depth-$L$ tower inside the budget on which
$\|F-\widehat F\|\ge(1-\sigma)e^{-2c}\sum_\ell\varepsilon_\ell$.
Layer errors therefore need not cancel, whatever the operator. This claim carries no $h$-rate and needs none.

\emph{(b) Class-wide single-layer rate, budget-compatible.} By Theorem~\ref{thm:budgetsat}, on the ball $\{\|\psi'\|_\infty\le B,\ \|\psi'''\|_\infty\le\kappa_3\}$ --- for \emph{arbitrary} $B,\kappa_3>0$ --- every operator of the class has a single-layer error, measured modulo constants, at least
\[
c_*\min\Bigl\{\kappa_{3,\mathrm{eff}}|J|h^2,\ \frac{Bh^3}{|J|^2}\Bigr\},
\qquad
\kappa_{3,\mathrm{eff}}:=\min\Bigl\{\kappa_3,\frac{8B}{|J|^2}\Bigr\} .
\]
In the natural regime $B\ge\kappa_3|J|^2/8$ one has $\kappa_{3,\mathrm{eff}}=\kappa_3$.
Both test functions respect the bound on $\|\psi'\|_\infty$, which is what the layer budget controls; this is the point at which Theorem~\ref{thm:T4e} alone would not suffice, since its degenerate branch escapes by scaling out of the Lipschitz ball. With $h=|J|/G$, a per-layer accuracy $\varepsilon$ forces
\[
G\ \ge\ \min\bigl\{G_1,G_2\bigr\},
\qquad
G_1:=|J|^{3/2}\Bigl(\frac{c_*\kappa_{3,\mathrm{eff}}}{\varepsilon}\Bigr)^{1/2},
\qquad
G_2:=|J|\Bigl(\frac{c_*B}{\varepsilon|J|^{2}}\Bigr)^{1/3}.
\]
The power $|J|^{3/2}$ in the first branch, rather than $|J|$, comes from measuring smoothness by the third derivative; the second branch, with exponent $1/3$, is the price of the operators with $m_1\not\equiv0$.

\emph{(c) The depth-dependent rate, on a prescribed smoothness ball, in the branch $m_1\equiv0$.} By Corollary~\ref{cor:prescribed}, for every operator with $m_1\equiv0$ --- after the harmless replacement of $Q_h$ by its anchored version --- and for every $\kappa_3$ below the two explicit thresholds recorded there, there is a tower inside the budget whose edges satisfy $\|\psi'''\|_\infty=\kappa_3$ \emph{exactly} and
\[
\|F-\widehat F\|\ \ge\ \tfrac18e^{-2c}\,L\,\Theta_k\,\kappa_3\,|J|\,h^2 ,
\qquad\text{hence}\qquad
G\ \ge\ |J|^{3/2}\Bigl(\frac{e^{-2c}\Theta_kL\kappa_3}{8\varepsilon}\Bigr)^{1/2}.
\]
Here $\kappa_3$ is prescribed, not merely realised: the point of Corollary~\ref{cor:prescribed} is that the interior point $t_*$ is produced explicitly, at distance $|J|/4$ from the boundary, so the admissible range of $\kappa_3$ does not depend on where the defect happens to be largest. For operators with $m_1\not\equiv0$ we have no comparable rate, only (b).

\emph{(d) What this does not compare.} The exponent $L^{1/2}$ in (c) is not directly comparable with $L^{1/k}$ of Regime~I: the two refer to different smoothness balls ($\kappa_3$ against $\|\psi^{(k+1)}\|_\infty$), and in Regime~III the budget couples $\kappa_3$ to the depth, since the construction of Theorem~\ref{thm:universal} needs $\kappa_3\|\omega'\|_\infty\le c/L$. If that coupling is saturated the factor $L$ cancels and (c) reduces to $G\gtrsim(c|J|/\varepsilon)^{1/2}$. We therefore state the regimes separately and do not compress them into a single slogan.

\paragraph{What is proved.}
\emph{Exact preservation of the Lipschitz budget by a linear operator with values in a spline space is incompatible with derivative approximation order above two.}

\paragraph{What is not proved.}
In Regime~I it does \emph{not} follow that the threshold $G\gtrsim L^{1/k}$ is necessary for every operator of order $k\ge3$: Theorem~\ref{thm:T4b} forbids a \emph{zero} overshoot but gives no rate, and the overshoot could a priori be much smaller than $h^k$ (Open problem~\ref{op:overshoot}). Thus $G\gtrsim L^{1/k}$ is an explicit sufficient threshold for the chosen high-order $Q_h$, not a class-wide lower bound. The asymmetry is worth stating plainly: in Regime~III the necessity is proved for the whole class, at the single-layer level (Theorem~\ref{thm:T4e}) and across depth (Theorem~\ref{thm:universal}); in Regime~I it is not proved at all.

\section{An expressivity consequence: residual fold suppression}
\label{sec:fold}

\begin{theorem}\label{thm:fold}
Let $I_0,\dots,I_L$ be intervals and let $h_\ell:I_\ell\to I_{\ell+1}$, $h_\ell(t)=t+w_\ell(t)$, with $\Lip(w_\ell;I_\ell)=\theta_\ell<1$; put $H:=h_{L-1}\circ\cdots\circ h_0:I_0\to I_L$. Let $P$ be a piecewise polynomial of degree $k$ on $G$ intervals covering $I_L$, and $F=P\circ H$. Then for every $y$ that is not the value of an identically constant piece,
\[
\#F^{-1}(y)\le kG\qquad\text{uniformly in }L .
\]
\end{theorem}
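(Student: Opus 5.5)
The plan is to show that $H$ is strictly increasing, hence injective, so that $\#F^{-1}(y)=\#(H(I_0)\cap P^{-1}(y))\le\#P^{-1}(y)$. The problem then reduces to counting preimages under a single piecewise polynomial, and this count does not depend on $L$.

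\emph{Step 1 (each residual layer is a strictly increasing bi-Lipschitz map).} For $s<t$ in $I_\ell$,
\[
h_\ell(t)-h_\ell(s)=(t-s)+\bigl(w_\ell(t)-w_\ell(s)\bigr)\ \ge\ (1-\theta_\ell)(t-s)>0 ,
\]
because $|w_\ell(t)-w_\ell(s)|\le\theta_\ell(t-s)$. So $h_\ell$ is strictly increasing, in particular injective, on the interval $I_\ell$. A composition of strictly increasing maps is strictly increasing, so $H$ is injective on $I_0$. No bound on $\theta_\ell$ uniform in $\ell$ is needed; we only use $\theta_\ell<1$ for each $\ell$, and this is why the final bound is uniform in $L$.

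\emph{Step 2 (counting preimages of $P$).} Since $H$ is injective, $y\mapsto\#F^{-1}(y)$ equals the number of points $x\in H(I_0)\subseteq I_L$ with $P(x)=y$, so $\#F^{-1}(y)\le\#\{x\in I_L:P(x)=y\}$. On each of the $G$ pieces, $P$ restricts to a polynomial $p_i$ of degree $\le k$. By hypothesis $y$ is not the value of an identically constant piece. Hence $p_i-y$ is either a nonzero constant, which has no root, or a nonconstant polynomial of degree $\le k$, which has at most $k$ roots. Summing over the $G$ pieces gives at most $kG$ points. Points shared by adjacent closed pieces are counted at most twice in this sum, so they only make the count an overestimate and do not hurt the upper bound.

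\emph{Main obstacle.} The one delicate point is the bookkeeping of the hypothesis on $y$. If some piece is constant with value $y$, the preimage of that piece is a whole interval and the count is infinite, which is exactly the excluded case. A piece that is constant with a value $\ne y$ contributes $0$, as Step 2 already covers. One should also check that "covering $I_L$" with possibly overlapping pieces is harmless, which follows from the same overcount argument.

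Finally, I would remark why the residual structure matters: without $\theta_\ell<1$, each layer could fold the line (for instance a tent map), and the preimage count would grow like $2^L$. The injectivity established in Step 1 is precisely what suppresses this fold.
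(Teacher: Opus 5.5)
Your proposal is correct and follows the paper's proof essentially step for step: $\theta_\ell<1$ gives $h_\ell(t)-h_\ell(s)\ge(1-\theta_\ell)(t-s)>0$, so $H$ is strictly increasing and hence injective, which identifies $F^{-1}(y)$ with $H(I_0)\cap P^{-1}(y)$ and reduces the count to at most $k$ roots of $P-y$ on each of the $G$ pieces. You also treat two points explicitly: pieces that are constant with a value different from $y$ have no roots, and shared endpoints can only be overcounted. The paper covers both implicitly with the phrase ``$P-y$ is a nonzero polynomial of degree $\le k$''.
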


\begin{proof}
For $y>x$ in $I_\ell$, $h_\ell(y)-h_\ell(x)\ge(1-\theta_\ell)(y-x)>0$, so each $h_\ell$ is strictly increasing, hence injective; the composition is well defined by hypothesis, and $H$ is injective. Therefore $x\mapsto H(x)$ is a bijection from $F^{-1}(y)$ onto $H(I_0)\cap P^{-1}(y)$, whence $\#F^{-1}(y)\le\#P^{-1}(y)$. On each of the $G$ intervals, $P-y$ is a nonzero polynomial of degree $\le k$ and has at most $k$ roots.
\end{proof}

The hypothesis $\Lip(w_\ell)<1$ cannot be relaxed to $\le1$: for $w(t)=-t$ one gets $h_\ell\equiv0$, and injectivity fails at once. Apart from that, only injectivity is used: no surjectivity, no globally defined $w_\ell$, and no invariance $h_\ell(I_\ell)\subseteq I_\ell$. Stating the layers as maps $I_\ell\to I_{\ell+1}$ is what makes the composition well defined, and it makes the theorem more general than the invariant-interval version.

\begin{proposition}[Exponential contrast without the residual constraint]
A tent map $T$ with $m$ monotone branches is a continuous piecewise linear function on $m$ intervals, i.e.\ a legitimate KAN layer of width $1$. Then $T^{\circ L}$ has $m^L$ preimages of a typical $y$, using $O(Lm)$ parameters.
\end{proposition}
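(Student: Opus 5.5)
The plan is to make the tent map explicit and count preimages by induction on $L$, using only that every branch of $T$ is surjective. Concretely, I will take $T:[0,1]\to[0,1]$ continuous and piecewise linear on the uniform partition into $m$ intervals $[i/m,(i+1)/m]$. On the $i$-th interval $T$ is affine, alternately increasing and decreasing, and maps it \emph{onto} $[0,1]$; the alternation of slopes $\pm m$ is what makes $T$ continuous. Such a $T$ is determined by its $m+1$ breakpoint values, so it is a single univariate edge $\varphi\in\Scal_{1,\Delta}$ on a grid of $m$ cells. It is therefore a width-one KAN layer with $O(m)$ parameters, and stacking $L$ copies gives $T^{\circ L}$ with $O(Lm)$ parameters.

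\emph{Counting.} The key step is the following induction: for every $y$ not in the exceptional set $E_L:=\bigcup_{j<L}T^{\circ j}(\{0,1\})$ (forward images of the branch endpoints, which is finite), one has $\#(T^{\circ L})^{-1}(y)=m^L$. The case $L=1$ holds because each of the $m$ monotone surjective branches contains exactly one preimage of $y$. These $m$ preimages are distinct, since $y\notin\{0,1\}$ is not a breakpoint value. For the inductive step write $(T^{\circ (L+1)})^{-1}(y)=\bigcup_{x\in T^{-1}(y)}(T^{\circ L})^{-1}(x)$. The fibres over distinct $x$ are disjoint. Each of the $m$ points $x$ avoids $E_L$, because $y\notin E_{L+1}\supseteq T(E_L)\cup\{0,1\}$. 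The inductive hypothesis then gives $m\cdot m^L$ points.

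\emph{Main obstacle.} The only delicate point is making ``typical'' precise, so that preimages never land on breakpoints, where two adjacent branches share a point and would be double counted. Excluding the finite set $E_L$ handles this. An alternative is to exclude the countable set $\bigcup_{j}T^{\circ j}(\{i/m\})$, which gives a full-measure set uniform in $L$.

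The contrast with Theorem~\ref{thm:fold} is then immediate. There $kG$ is independent of $L$, while here the count $m^L$ is exponential. This is consistent, because the tent layers are not of residual form: $T-\mathrm{id}$ has Lipschitz constant $m+1>1$.
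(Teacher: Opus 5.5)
Your proof is correct. The paper states this proposition without proof, using it only as a contrast to Theorem~\ref{thm:fold}, so your fibre induction fills that gap rather than following or departing from a proof in the paper. The key step checks out: $E_{L+1}=\{0,1\}\cup T(E_L)$, so $y\notin E_{L+1}$ forces each of the $m$ points of $T^{-1}(y)$ to avoid $E_L$. Your reading of ``tent map'' as having full (surjective) branches is exactly the hypothesis the statement needs; with non-full branches the count can fall below $m^L$.

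A few small tightenings:
\begin{enumerate}
\item The count should be stated for $y\in[0,1]\setminus E_L$, since for $y\notin[0,1]$ there are no preimages at all.
\item Your exceptional set is smaller than you make it. Every breakpoint $i/m$ is an endpoint of a full branch, so $T$ sends it into $\{0,1\}$. In particular $0$ and $1$ are breakpoints, so $T(\{0,1\})\subseteq\{0,1\}$ and $E_L=\{0,1\}$ for every $L$. Hence $T^{\circ L}$ has exactly $m^L$ preimages of every $y\in(0,1)$, uniformly in $L$, and the countable alternative set you mention is unnecessary.
\item An equivalent route is to induct instead on the statement that $T^{\circ L}$ is piecewise linear on the uniform partition into $m^L$ cells, with slopes $\pm m^L$ and every branch mapping onto $[0,1]$. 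The preimage count is then immediate.
\item The claim $\Lip(T-\mathrm{id})=m+1$ needs a decreasing branch, i.e.\ $m\ge2$. For $m=1$ one may have $T=\mathrm{id}$, but then $m^L=1$ and there is no contrast to draw.
\end{enumerate}
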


We avoid calling this a matching lower bound: it is a contrast between two different constraint classes, not a lower bound within the class of Theorem~\ref{thm:fold}.

\begin{remark}[Why one dimension is safe and the cube in $\R^n$ is not]
Counting points does not grow when restricting to a subinterval, so injectivity suffices. Counting connected components in $n\ge2$ would require control of $P^{-1}(y)\cap H(K_0)$, and $H(K_0)$ may oscillate while keeping a fixed bi-Lipschitz constant. See Open problem~\ref{op:nd}.
\end{remark}

\begin{remark}[Terminology]
Always write \emph{residual} fold suppression. The budget $\lambda_\ell\le1+c/L$ alone does not forbid folds: $x\mapsto|x|$ is $1$-Lipschitz and not injective.
\end{remark}

\section{Further remarks}
\label{sec:remarks}

\paragraph{Oscillation cost of a fixed budget.}
For $g(t)=F(x_0+tv)$ with $\Lip(F)\le e^c$, the total variation on a segment of length $D$ is at most $e^cD$, so the number of consecutive oscillations of amplitude $\ge a$ satisfies $N\lesssim e^cD/a$. A construction with $2^L$ teeth in the sense of \cite{Telgarsky} forces $c=\Omega(L)$, so $e^c$ ceases to be a constant. This is the metric counterpart of Theorem~\ref{thm:fold}; it is a corollary and is not KAN-specific.

\paragraph{Vitushkin.}
The classical obstruction \cite{Vitushkin} concerns the impossibility of a universal exact representation of smooth multivariate functions by fixed superposition schemes of smooth functions of fewer variables. It applies only \emph{jointly} with a bound on $\|\psi^{(k+1)}\|_\infty$ at fixed $n$ and $L$; the Lipschitz budget alone does not produce its hypotheses. To be cited as historical context, not used in any proof.

\paragraph{Generalisation.}
Since $\Lip(\widehat F;K_0)\le e^{c+\eta}$, existing KAN generalisation bounds expressed through layer operator norms or Lipschitz-type complexities \cite{ZhangZhou,LipKAN} apply without a factor exponential in $L$. Caveat: the constant is $L$-free but the complexity of the class is not. For dense layers the number of edge functions between widths $n_\ell$ and $n_{\ell+1}$ is $n_\ell n_{\ell+1}$, so with $O(G)$ coefficients per edge the parameter count is
\[
P_{\mathrm{par}}\asymp G\sum_{\ell=0}^{L-1}n_\ell n_{\ell+1}\ \asymp\ Ln^2G\quad\text{at constant width }n,
\]
and Corollary~\ref{cor:threshold} requires $G\gtrsim L^{1/k}$, hence $P_{\mathrm{par}}\gtrsim n^2L^{1+1/k}$.

\section{Open problems}
\label{sec:open}

\begin{problem}[Refactorisation]\label{op:strong}
For a fixed factorisation the infimum over gauges is computed by Theorem~\ref{thm:balance}: $\Bo(\boldsymbol\Phi)=\opi{M_{L-1}\cdots M_0}^{1/L}$. The remaining functional quantity is
\[
\mathfrak B_L(F)=\inf_{\substack{F=\Phi_{L-1}\circ\cdots\circ\Phi_0\\ \text{fixed widths}}}\opi{{\textstyle\prod_\ell} M_\ell}^{1/L}.
\]
Is there $F$, or a sequence $F_L$, with $\Lip(F)=O(1)$ but $\mathfrak B_L(F)\ge q>1$ uniformly in $L$?
\end{problem}

\begin{problem}[Budget-compatible minimax]\label{op:minimax}
Let $\mathfrak E_h(Q_h;B,K)$ be as in Theorem~\ref{thm:budgetsat}. Is
\[
\inf_{Q_h}\mathfrak E_h(Q_h;B,K)\ \ge\ c\,K_{\mathrm{eff}}|J|h^2
\]
for a natural relation between $B$, $K$ and $|J|$ --- that is, can the second branch of Theorem~\ref{thm:budgetsat}, which loses a factor $h/|J|=1/G$, be removed? Equivalently: must an operator with $m_1\not\equiv0$ pay at least as much, on the Lipschitz ball, as one with $m_1\equiv0$? A positive answer would make the class-wide grid bound of Section~\ref{sec:regimes}(b) uniform in the class, and would remove the case distinction from Section~\ref{sec:regimes}(c).
\end{problem}

\begin{problem}[Nonlinear schemes]\label{op:nonlinear}
The barrier concerns linear operators. Tension-parameter and ENO-type schemes are not covered.
\end{problem}

\begin{problem}[$nD$ folds on a cube]\label{op:nd}
For a global homeomorphism $H:\R^n\to\R^n$ one has $F^{-1}(y)\simeq P^{-1}(y)$, and a bound of Milnor--Thom type \cite{Milnor} gives
\[
b_0\bigl(P^{-1}(y)\bigr)\ \le\ G^n\,C(n,k)
\]
for a constant $C(n,k)$ depending only on $n$ and $k$: exponential in $n$, but not in $L$. We do not fix $C(n,k)$ here. The classical bound applies to an algebraic set, whereas on each cell the object is $\{P=y\}$ intersected with the cell, hence semialgebraic, and a single algebraic component may split when intersected with a cell; the passage also uses $b_0(\bigcup_\alpha X_\alpha)\le\sum_\alpha b_0(X_\alpha)$ for a covering, which holds because every component of the union contains a component of some piece. Making $C(n,k)$ explicit therefore needs a semialgebraic complexity bound rather than Milnor's theorem alone, and is part of the problem rather than a preliminary to it. On a fixed cube $K_0$ the statement does not follow at all.
\end{problem}

\begin{problem}[Quantitative overshoot]\label{op:overshoot}
For $Q_h$ of order $k\ge2$, is
\[
\sup\bigl\{\|(Q_hf)'\|_\infty-\|f'\|_\infty:\ \|f^{(k+1)}\|_\infty\le1,\ \|f'\|_\infty\le1\bigr\}\ \ge\ c\,h^k\ ?
\]
A positive answer would turn the sufficient threshold $G\gtrsim L^{1/k}$ into a necessary one for the whole class.

\emph{Strategy, not a proof.} For a shift-invariant quasi-interpolant reproducing $\Pcal_k$ on a uniform grid and $f_\star(x)=x^{k+1}/(k+1)!$, one has $Q_hf_\star-f_\star=h^{k+1}\phi(x/h)$ with a fixed nonzero periodic $\phi$, hence $(Q_hf_\star)'-f_\star'=h^k\phi'(x/h)$. One then needs $g$ whose $g'$ has a flat maximum of width $\sim h$ located where $\phi'>0$, e.g.\ $g'=1-A(x-x_0)^{2m}$, optimising $A$ and $m$ against $\|g^{(k+1)}\|_\infty$. The obstruction to a cheap argument: if the maximum of $g'$ is attained where $g$ is affine, locality together with affine reproduction gives zero overshoot.
\end{problem}

\appendix

\section{Degenerate cases of Theorem~\ref{thm:balance}}
\label{sec:degenerate}

\subsection{Zero components of \texorpdfstring{$u_\ell$}{u-l}, with \texorpdfstring{$\tau_*>0$}{tau>0}}

Let $Z_\ell=\{i:u_{\ell,i}=0\}$ and $R_\ell=[n_\ell]\setminus Z_\ell$; since $u_0=\one>0$, $Z_0=\varnothing$.

\begin{lemma}[Structure]\label{lem:dead}
If $i\in Z_{\ell+1}$ then row $i$ of $M_\ell$ is supported on $Z_\ell$.
\end{lemma}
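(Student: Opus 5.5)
The claim follows from the recursion $u_{\ell+1}=M_\ell u_\ell$ together with nonnegativity, and I would argue directly from the definitions. Fix $i\in Z_{\ell+1}$, so that
\[
0=u_{\ell+1,i}=\sum_{r}M_{\ell,ir}\,u_{\ell,r}.
\]
Every summand is a product of two nonnegative numbers, because $M_\ell\ge0$ entrywise and $u_\ell\ge0$. That $u_\ell\ge0$ holds is checked by induction from $u_0=\one$: a nonnegative matrix maps nonnegative vectors to nonnegative vectors. A sum of nonnegative terms vanishes only if each term vanishes, so $M_{\ell,ir}u_{\ell,r}=0$ for every $r$.

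Now split the column indices $r$ according to the partition $[n_\ell]=Z_\ell\sqcup R_\ell$. For $r\in R_\ell$ we have $u_{\ell,r}>0$ by the definition of $R_\ell$. Dividing by it gives $M_{\ell,ir}=0$. Hence the nonzero entries of row $i$ of $M_\ell$ can only occur in columns $r\in Z_\ell$, which is the assertion. Note the edge case $\ell=0$: $Z_0=\varnothing$, so the lemma says that a row with $u_{1,i}=0$ is identically zero, and this is consistent with the argument.

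I do not expect a real obstacle. The only point needing care is to state explicitly that $u_\ell\ge0$ coordinatewise, which the induction above supplies. Without it, cancellation between positive and negative terms would break the argument. The hypothesis $\tau_*>0$ is not needed for this structural statement; it is relevant only to the later use of the lemma, where coordinates in $Z_\ell$ receive small positive weights in the regularisation.
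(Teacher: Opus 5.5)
Your proposal is correct and follows the paper's own argument: every summand of $0=u_{\ell+1,i}=\sum_r (M_\ell)_{ir}u_{\ell,r}$ is nonnegative, so $(M_\ell)_{ir}=0$ whenever $u_{\ell,r}>0$. Your explicit induction showing $u_\ell\ge0$ spells out a point the paper leaves implicit.
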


\begin{proof}
$0=u_{\ell+1,i}=\sum_j(M_\ell)_{ij}u_{\ell,j}$ with all summands $\ge0$, so $(M_\ell)_{ij}>0$ implies $u_{\ell,j}=0$.
\end{proof}

Thus $Z_\ell$ consists of hidden coordinates unreachable from the input along any path of nonzero weight (\emph{derivative-dead}), and only such coordinates feed into them. For $\delta\in(0,1)$ set
\[
d^\delta_{\ell,i}=
\begin{cases}
\tau_*^{-\ell}u_{\ell,i},& i\in R_\ell,\\
\delta^{\,L-\ell},& i\in Z_\ell,
\end{cases}
\qquad \ell=1,\dots,L-1 ,
\]
and write $s_j$ for the $j$-th row sum of $D_{\ell+1}^{-1}M_\ell D_\ell$.

\begin{itemize}[nosep]
\item $\ell\le L-2$, $j\in R_{\ell+1}$: since $u_{\ell,i}=0$ on $Z_\ell$, the sum over $R_\ell$ already gives the full $\tau_*^{-\ell}u_{\ell+1,j}$, so
$s_j=\tau_*+\dfrac{\tau_*^{\ell+1}}{u_{\ell+1,j}}\delta^{L-\ell}\sum_{i\in Z_\ell}(M_\ell)_{ji}\to\tau_*$.
\item $\ell\le L-2$, $j\in Z_{\ell+1}$: by Lemma~\ref{lem:dead} the row sits on $Z_\ell$, so $s_j=\delta\sum_{i\in Z_\ell}(M_\ell)_{ji}\to0$, since $\delta^{-(L-\ell-1)}\delta^{L-\ell}=\delta$.
\item $\ell=L-1$, $D_L=I$:
$s_j=\tau_*^{-(L-1)}u_{L,j}+\delta\sum_{i\in Z_{L-1}}(M_{L-1})_{ji}\le\tau_*+\delta C\to\tau_*$.
\end{itemize}

Hence $\limsup_{\delta\to0}\max_\ell\opi{\cdot}\le\tau_*$, and with the lower bound $\Bo=\tau_*$. Equivalently: delete derivative-dead coordinates first, then apply the nondegenerate case.

\subsection{The case \texorpdfstring{$P=0$}{P=0}}

Then $\tau_*=0$ and we must show $\Bo=0$. Call $i$ at layer $\ell$ \emph{live-in} if $u_{\ell,i}>0$ and \emph{live-out} if $(\one^\top M_{L-1}\cdots M_\ell)_i>0$. The hypothesis $P=0$ means there is no positive-weight path from input to output, i.e.\ no coordinate is simultaneously live-in and live-out. Put
\[
d^\delta_{\ell,i}=
\begin{cases}
\delta^{-\ell},& i\ \text{live-in},\\
\delta^{\,L-\ell},&\text{otherwise.}
\end{cases}
\]
The verification is a single observation about edges rather than three cases. Along any edge $i\to j$ of nonzero weight, if $i$ is live-in then so is $j$; contrapositively, if $j$ is not live-in then neither is $i$. Hence for a nonzero entry $(M_\ell)_{ji}$ the scaling factor $d^\delta_{\ell,i}/d^\delta_{\ell+1,j}$ equals $\delta^{-\ell}/\delta^{-(\ell+1)}=\delta$ when both endpoints are live-in, and $\delta^{L-\ell}/\delta^{L-\ell-1}=\delta$ when neither is; the mixed case ``$i$ live-in, $j$ not'' is empty by the observation, and the case ``$j$ live-in, $i$ not'' contributes $\delta^{L-\ell}/\delta^{-(\ell+1)}\le\delta$. For the last layer, $D_L=I$ and any node feeding a nonzero output row is live-out, hence not live-in under $P=0$, so it is again weighted by $\delta$. Every nonzero edge therefore carries a factor $\delta$, and $\max_\ell\opi{\cdot}=O(\delta)\to0$.

\subsection{Non-attainment}

\begin{example}\label{ex:nonattain}
$L=2$, $n_0=n_2=1$, $n_1=2$, $M_0=\binom{1}{0}$, $M_1=(1,\ a)$ with $a>0$. Then $P=1$, $\tau_*=1$, $Z_1=\{2\}$, and for every $D_1=\diag(d_1,d_2)>0$,
\[
\max\Bigl(\tfrac1{d_1},\ d_1+ad_2\Bigr)>\max\Bigl(\tfrac1{d_1},\,d_1\Bigr)\ge1=\tau_* .
\]
The infimum equals $\tau_*$ but is not attained. This is exactly a dead neuron: all incoming edges have zero Lipschitz constant while outgoing ones do not.
\end{example}

\section{Explicit constants}
\label{sec:constants}

We collect the constants that enter the statements, with the values we have been able to compute. Throughout, $d=\max(k-1,2)$ in Lemma~\ref{lem:normeq} and $d=\max(k-1,1)$ in Definition~\ref{def:kink}.

\paragraph{The second moment.} By Lemma~\ref{lem:secondmoment}, for $k\ge2$ and a uniform grid,
\[
M_2\equiv\frac{k+1}{12}h^2 ,
\]
exactly and independently of $t$. This is the only constant of Section~\ref{sec:sharp}, and it is the reason the sharpness constant there is exactly $e^{-2c}$.

\paragraph{The reference kink.} By Definition~\ref{def:kink}, $c_k=\dist_{L^\infty[-1/2,1/2]}(G_*,\Pcal_{\max(k-1,1)})$ with $G_*(x)=\tfrac12x^2-x_+^2=-\tfrac12x|x|$. Since $G_*$ is odd, best approximation by $\Pcal_{2m}$ coincides with best approximation by $\Pcal_{2m-1}$, so $c_k$ takes the same value for consecutive $k$. For $\max(k-1,1)\in\{1,2\}$, i.e.\ $k\le3$, the extremal polynomial is $-\tfrac12(2\sqrt2-2)x$ and
\[
c_k=\frac{3-2\sqrt2}{8}=0.0214466\ldots
\]
Numerically, $c_k=3.45432\cdot10^{-3}$ for $k\in\{4,5\}$ and $c_k=1.32942\cdot10^{-3}$ for $k=6$. In particular $c_k$ is nonincreasing in $k$ --- immediately, since the spaces $\Pcal_{\max(k-1,1)}$ are nested --- so the barriers of Section~\ref{sec:rigidity} do not strengthen as the degree grows; no asymptotic rate in $k$ is claimed.

\paragraph{The norm equivalence.} In Lemma~\ref{lem:normeq}, $\gamma_d$ is the reciprocal of the Nikolskii constant of $\Pcal_d$ for the pair $(L^\infty,L^1)$ on an interval. Testing on $q\equiv1$ gives $\gamma_d\le1$, which is what makes the smallness condition in Theorem~\ref{thm:budgetsat} automatic. For $d=1$, $\gamma_1=\sqrt2-1=0.414214\ldots$; numerically $\gamma_2=0.2278$, $\gamma_3=0.1443$, $\gamma_4=0.0997$, $\gamma_5=0.0730$, $\gamma_6=0.0558$, to the digits shown.

\paragraph{The signed integral.} The proof of Lemma~\ref{lem:signed} gives $\beta_d\ge\bigl(2(d+1)^2\bigr)^{-1}$ through Markov's inequality; we do not claim this is sharp.

\paragraph{Composite constants.} Consequently
\[
\Theta_k=\gamma_dc_k,
\qquad
c_*=\min\bigl\{\tfrac14\gamma_dc_k,\ \tfrac12\beta_d\gamma_dc_k\bigr\}=\tfrac12\beta_d\gamma_dc_k
\]
whenever $\beta_d\le\tfrac12$, which holds for every $d\ge1$ by the previous paragraph. For $k=3$, for instance, $d=2$ in Definition~\ref{def:kink} and $\Theta_3=0.227774\cdot0.0214466=4.885\cdot10^{-3}$.

\paragraph{The quasi-interpolant.} Fix the reference nodes of Lemma~\ref{lem:functionals} to be the Chebyshev--Lobatto points of $[0,1]$,
\[
s_r=\tfrac12\Bigl(1-\cos\tfrac{r\pi}{k}\Bigr),\qquad r=0,\dots,k
\]
(for $k=1$, $s_0=0$, $s_1=1$). With $R_k=(2k+1)/2$ the chain of estimates in the proof of Lemma~\ref{lem:existence} gives
\[
C_{0,k}=(1+\Lambda_k\kappa_k)\frac{R_k^{k+1}}{(k+1)!},
\qquad
C_{1,k}=2k^2\Lambda_k\kappa_k\frac{R_k^{k+1}}{(k+1)!}+\frac{R_k^{k}}{k!} .
\]
Both $\Lambda_k$ and $\kappa_k$ are computable once and for all on the reference configuration. The values are

\begin{center}
\begin{tabular}{cccrr}
\hline
$k$ & $\Lambda_k$ & $\kappa_k$ & $C_{0,k}$ & $C_{1,k}$\\
\hline
$1$ & $1.0000$ & $1.0000$ & $2.25$ & $3.75$\\
$2$ & $1.2500$ & $1.2222$ & $6.58$ & $34.95$\\
$3$ & $1.6667$ & $1.1667$ & $18.41$ & $225.99$\\
$4$ & $1.7988$ & $1.6485$ & $60.98$ & $1476.25$\\
$5$ & $1.9889$ & $1.5235$ & $154.93$ & $5866.28$\\
\hline
\end{tabular}
\end{center}

\noindent
For $k=1$ the two constants are exact: the nodes are the endpoints, so $\Lambda_1=1$, and the B-spline coefficients of a linear polynomial are its values at the Greville abscissae, which are the knots, so $\kappa_1=1$; hence $C_{0,1}=2\cdot(3/2)^2/2=9/4$ and $C_{1,1}=2\cdot9/8+3/2=15/4$. No attempt has been made to optimise these constants: replacing the Taylor polynomial by a near-best polynomial approximation on $U_i$ would gain a factor of order $2^{k}$ in $C_{0,k}$, at the cost of a longer argument.

\paragraph{What the theorems actually need.} None of the statements of this paper depends on the numerical values above. Theorem~\ref{thm:T4d} needs only $c_k>0$, which is Definition~\ref{def:kink}; Lemma~\ref{lem:normeq} needs only $\gamma_d>0$, and the one place where a numerical fact is used, namely $\gamma_d\le1$ in Theorem~\ref{thm:budgetsat}, is proved by testing on $q\equiv1$; Lemma~\ref{lem:signed} carries its own explicit $\beta_d$; and Lemma~\ref{lem:secondmoment} is an identity. The table is reported because explicit constants are more useful than existential ones, not because anything rests on them.

\paragraph{How the constants were computed.} Each of $c_k$, $\gamma_d$ and $\kappa_k$ is the value of an extremal problem over a finite-dimensional space, and each becomes a linear program after discretising the interval by a uniform grid $\{x_1,\dots,x_N\}$:
\[
\begin{aligned}
c_k&:\quad \min_{p\in\Pcal_{\max(k-1,1)},\,E\ge0}E
&&\text{subject to } |G_*(x_m)-p(x_m)|\le E,\\
\gamma_d^{-1}&:\quad \max_{j}\ \max_{q\in\Pcal_d}\ |q(x_j)|
&&\text{subject to } \textstyle\sum_m w_m|q(x_m)|\le1,\\
\kappa_k&:\quad \max_{q\in\Pcal_k}\ c_i(q)
&&\text{subject to } |q(x_m)|\le1 ,
\end{aligned}
\]
with $w_m$ the trapezoidal weights and $c_i$ the coefficient functional of Lemma~\ref{lem:functionals}, represented on the monomial basis. For $\gamma_d$ the ratio $\|q\|_1/\|q\|_\infty$ is scale invariant, so it is minimised directly over the coefficient space rather than through the displayed linear program. The Lebesgue constant $\Lambda_k$ is the maximum of the Lebesgue function, evaluated directly.

All of these are reported as \emph{numerical approximations, not certified enclosures}. The directions differ: sampling the interval at finitely many points makes the computed minimax value for $c_k$ a lower bound, whereas replacing $\|q\|_{L^\infty(J_i)}\le1$ by finitely many constraints $|q(x_m)|\le1$ enlarges the feasible set and makes the computed $\kappa_k$ an upper bound; for $\gamma_d$, where a quadrature rule and a sampled maximum enter simultaneously, and for the sampled Lebesgue maximum, we claim no one-sided certification. Consequently the table of $C_{0,k}$ and $C_{1,k}$ should be read as numerical estimates. This is harmless: as recorded above, the proofs use only the positivity of the constants.

The values were numerically stable under substantial grid refinement to the precision displayed; the accompanying code has a \texttt{-{}-refinement-check} mode which prints each grid-dependent quantity on $N,2N,4N,8N$, so this statement is reproducible from the public script. The exact benchmark checks used for the implementation are $c_k=(3-2\sqrt2)/8$ for $k\le3$, $\gamma_1=\sqrt2-1$, $\Lambda_1=\kappa_1=1$, and the identity of Lemma~\ref{lem:secondmoment}, which is reproduced to twelve digits and which was found numerically and verified symbolically for $k=2,3,4$ before being proved.

\section*{Data availability}
No empirical data were used. Code reproducing the auxiliary numerical constants reported in Appendix~\ref{sec:constants}, namely the Lebesgue constants, the norms of the coefficient functionals, the best-approximation constants $c_k$ and the Nikolskii constants $\gamma_d$, is publicly available at \url{https://github.com/hippocratus/math/tree/v1-arxiv/spline-kan-constants}.

\end{document}